\renewcommand{\contentsname}{Appendix Contents}
\theoremstyle{plain}
\newtheorem{theorem}{Theorem}[section]
\newtheorem{problem}{Open Problem}[section]
\newtheorem{lemma}[theorem]{Lemma}
\newtheorem{corollary}[theorem]{Corollary}
\newtheorem{fact}[theorem]{Fact}
\newtheorem{conjecture}[theorem]{Conjecture}
\theoremstyle{definition}
\newtheorem{definition}[theorem]{Definition}
\theoremstyle{remark}
\newtheorem{remark}[theorem]{Remark}
\def\eqref#1{equation~\ref{#1}}
\def\1{\bm{1}}
\def\vp{{\bm{p}}}
\def\vy{{\bm{y}}}
\def\mO{{\bm{O}}}
\DeclareMathAlphabet{\mathsfit}{\encodingdefault}{\sfdefault}{m}{sl}
\SetMathAlphabet{\mathsfit}{bold}{\encodingdefault}{\sfdefault}{bx}{n}
\newcommand{\ypost}{\vy}%
\newcommand{\headmac}[3]{b_{#1,#2}^{(#3)}}%
\newcommand{\seqlen}{n}%
\newcommand{\weightmac}[4]{\widehat{a}_{#1,#2}^{(#3,#4)}}%
\newcommand{\numLayers}{L}%
\icmltitlerunning{Lower Bounds for CoT Reasoning}
\begin{document}

\twocolumn[
\icmltitle{Lower Bounds for Chain-of-Thought Reasoning in Hard-Attention Transformers}

\begin{icmlauthorlist}
\icmlauthor{Alireza Amiri}{sharif}
\icmlauthor{Xinting Huang}{saarland}
\icmlauthor{Mark Rofin}{saarland}
\icmlauthor{Michael Hahn}{saarland}
\end{icmlauthorlist}

\icmlaffiliation{sharif}{Sharif University of Technology. Work done in part while interning at Saarland University.}
\icmlaffiliation{saarland}{Saarland University}

\icmlcorrespondingauthor{Michael Hahn}{mhahn@lst.uni-saarland.de}

\vskip 0.3in
]

\printAffiliationsAndNotice{} %

\addtocontents{toc}{\protect\setcounter{tocdepth}{-1}}

\begin{abstract}
Chain-of-thought reasoning and scratchpads have emerged as critical tools for enhancing the computational capabilities of transformers. While theoretical results show that polynomial-length scratchpads can extend transformers' expressivity from $TC^0$ to $PTIME$, their required length remains poorly understood. Empirical evidence even suggests that transformers need scratchpads even for many problems in $TC^0$, such as \textsc{Parity} or \textsc{Multiplication}, challenging optimistic bounds derived from circuit complexity. In this work, we initiate the study of systematic lower bounds for the number of CoT steps across different algorithmic problems, in the hard-attention regime. 
We study a variety of algorithmic problems, and provide bounds that are tight up to logarithmic factors.
Overall, these results contribute to emerging understanding of the power and limitations of chain-of-thought reasoning\footnote{Code link: \url{https://github.com/lacoco-lab/scratchpad_bounds}}.
\end{abstract}

\section{Introduction}

Chain-of-Thought reasoning (CoT) has become a standard practice for solving hard problems with LLMs, enhancing the capabilities of Transformers \cite{nye2021show, Wei2022Chain} and powering a new generation of state-of-the-art models, such as OpenAI o1 \cite{jaech2024openai} and DeepSeek-R1 \cite{r1}. Models trained under this paradigm are optimized to generate a long CoT before answering any user’s request, which significantly elevates their reasoning abilities.
However, the use of CoT may substantially increase the number of tokens produced by the model and raise the inference costs \cite{han2024token}, in some cases reaching millions of tokens for a single task \cite{oai_o3_pub_breakthrough}.
Hence, shortening the generated CoT sequences without compromising quality became an important research direction \cite{deng2024explicit}. At the same time, there is so far little understanding of the minimal sufficient length of the CoT in a given case; thus the theoretical limits of CoT compression are unclear.

\begin{figure}
    \centering
    \includegraphics[width=0.8\columnwidth]{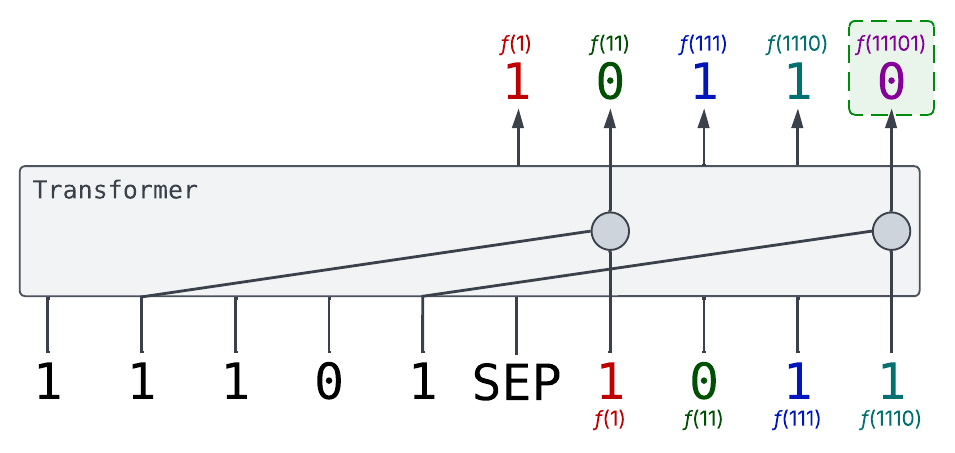}
    \caption{High-sensitivity problems such as \textsc{Parity} %
    are difficult to learnably and generalizably represent for transformers.
    Such problems can be solved by CoTs decomposing them into a sequence of local steps. We study the length that such CoTs need to have, as a function of the input length.
}
    \label{fig:motivating-figure}
\end{figure}

To gain understanding of those limits, we take up the problem of provably bounding the length of CoT sequences sufficient to solve various algorithmic problems with transformers.
By providing the lower bounds on the length of the CoT, our paper complements an established line of research on the expressive power of transformers that has focused on the model size when the answer is provided without CoT \citep[e.g.][]{strobl2023survey, bhattamishra2020ability, sanford2023representational, chen2024theoretical}.

A simple example is the \textsc{Parity} function -- deciding whether the number of 1's in a bit string is even or odd.
Transformers empirically struggle to learn it on longer inputs \citep[e.g.][]{bhattamishra2020ability, deletang2022neural, butoi2024training,hahn2024sensitive}, and this difficulty is resolved by a CoT consisting of the parities of increasing prefixes, as in Figure~\ref{fig:motivating-figure} \citep[e.g.][]{anil2022exploring}.
Here, every decoding step depends only on a bounded number of tokens. 
However, for an input of length $N$, this CoT requires $\Theta(N)$ extra decoding steps, a substantial computational burden.

An interesting and pertinent question is thus whether such a linear-length CoT is optimal, or whether a shorter CoT might exist.
The goal of this paper is to
    \emph{develop explicit and unconditional bounds on the length of CoTs}. %
This question is analogous to the classical question of bounding the time complexity of Turing machines.
We argue that it is of foundational interest in the context of LLM-based systems relying on CoT reasoning \cite{jaech2024openai, r1}.

We study this question in the \emph{unique-hard attention} (UHAT) regime, a popular theoretical abstraction  of self-attention, in which every attention head attends to the unique position where attention weights are maximized \citep[e.g.][]{hahn2020theoretical, hao2022formal, barcelo2024logical, svete2024transformers, bergstrasser2024power,  yang2024masked,barcelo2025ehrenfeucht}.
Importantly, bounds in this regime entail bounds on \emph{realistic softmax transformers operating at fixed precision}, as their expressiveness is upper-bounded by UHAT \citep{jerad2025unique}.

In this model, we rigorously prove that the CoT for \textsc{Parity} described above is optimal up to constant factors (Theorem~\ref{thm:parity-bound}).
Besides \textsc{Parity}, we consider three other tasks (\textsc{Multiplication}, \textsc{Median}, \textsc{Reachability}) that are also empirically challenging for transformers to solve without CoT.
Across tasks, we  show that CoT lengths need to scale at least linearly in the size of the original problem.
We further show that our bounds are \emph{optimal up to logarithmic factors}.
Overall, these lower bounds place broad constraints on the inference-time compute needed to enable transformers to solve these problems.

\section{Background}

\subsection{Sensitive Functions Require CoTs}\label{sec:sensitivity}

Prior work has often studied the abilities of transformers through the lens of \emph{circuit complexity} \citep[e.g.][]{hao2022formal, merrill2023logic, strobl2023averagehard, feng2023towards, li2024chain, chiang2025transformers, merrill2023parallelism}.
Transformers express a subset of the circuit complexity class $TC^0$, which covers problems solvable by bounded-depth threshold circuits \citep{merrill2023parallelism, strobl2023averagehard, chiang2025transformers}.
By the unproven conjecture $TC^0 \neq NC^1$, many problems, such as graph reachability, are thus not solvable by transformers without CoTs.
Hence, the success of CoT has been linked to its ability to expand expressiveness beyond $TC^0$ \citep{feng2023towards, merrill2023expresssive, li2024chain}.

However, CoT is empirically beneficial  even for tasks well in $TC^0$, such as \textsc{Parity} \citep{anil2022exploring}:
\textsc{Parity} is expressible in $TC^0$ and also by certain models of transformers \citep{chiang2022overcoming, kozachinskiy2024lower}, but practical transformers consistently struggle to learn it via SGD in long inputs \citep[e.g.][]{bhattamishra2020ability} unless a CoT is provided \citep[e.g.][]{anil2022exploring}.
\citet{hahn2024sensitive} show that this fact can be explained in terms of the loss landscape: Transformers require very sharp minima to represent functions that, like \textsc{Parity}, are simultaneously sensitive to every input bit, and hence do not practically learn them on long inputs; sharpness is starkly reduced in the presence of a CoT.
In fact, the result of \citet{hahn2024sensitive} extends beyond \textsc{Parity}, and 
 is grounded in a foundational concept in the analysis of Boolean Functions, \emph{sensitivity}.
For a boolean function $f : \{0,1\}^* \rightarrow \{0,1\}$, the \emph{sensitivity} counts how many Hamming neighbors have the opposite output:
\begin{equation}
    s(f,x) = \left| \{ i=1,\dots,|x| \text{ such that } f(x) \neq f(x^{\oplus i})\}\right|
\end{equation}
where $x^{\oplus i}$ is obtained from $x$ by flipping the $i$-th bit.
The \emph{average sensitivity} is defined as the average over the Hamming cube, at string length $N$  \citep[e.g.][]{odonnell2014analysis}: %
\begin{equation}
    as_N(f) = \frac{1}{2^N} \sum_{x \in \{0,1\}^N} s(f,x)
\end{equation}
In \textsc{Parity}, flipping any bit flips the output, hence $as_N(\textsc{Parity}) = N$.
\citet{hahn2024sensitive} show that such linear growth of sensitivity with input length produces high sharpness in transformers' loss landscapes.
This unifies a string of empirical results finding that transformers have an inductive bias towards low average sensitivity \citep{bhattamishra2022simplicity, vasudeva2024simplicity, abbe2023generalization}. %
\citet{hahn2024sensitive} applied this result to the \textsc{Parity} function; but, in this paper, we identify a range of other tasks also facing linear growth of sensitivity, including tasks within $TC^0$ (\textsc{Multiplication} and \textsc{Median}), and a task conjectured not to be in $TC^0$ (\textsc{Reachability}).
We thus overall
\begin{center}
    \emph{focus on algorithmic tasks with $as_N(f) = \Theta(N)$}.
\end{center}
including both tasks within $TC^0$, and tasks conjectured to be outside it.

\subsection{Model of Transformers}\label{sec:model-transformers}

The theoretical literature on transformers has developed various formal abstractions of transformers.
In this paper, we study the regime of \emph{unique hard attention} (UHAT), a popular theoretical model of self-attention, where every attention head attends to the unique position where attention scores are maximized \citep[e.g.][]{hahn2020theoretical, hao2022formal, barcelo2024logical, svete2024transformers, bergstrasser2024power,  yang2024masked,barcelo2025ehrenfeucht}.
UHAT is an appealing modeling choice, both because strong techniques for proving lower bounds are available (see Section~\ref{sec:cot-bounds}), and because interpretability work shows that language models heavily rely on heads focusing their attention on few positions \citep[e.g.][]{cabannes2024iteration, Olsson2022IncontextLA, clark2019bert, voita2019analyzing, ebrahimi2020can}.

We now introduce the relevant notions and notation.
We assume a finite alphabet $\Sigma$, with token embeddings $e(\sigma) \in \mathbb{R}^d$.
There further are positional encodings $\vp_1, \vp_2, \vp_3, \dots, \vp_{n_{max}} \in \mathbb{R}^d$, where $n_{max}$ is the maximal context size of the transformer.
We consider an input string $x \in \Sigma^*$, with length $|x|=N$.
We define the activations $\ypost_i^{(k)} \in \mathbb{R}^d$ at position $i$ of the $k$-th layer ($k=1, \dots, \numLayers$) as follows.
The zero-th layer consists of token and positional encodings: $\ypost_i^{(0)} := e(x_i) + \vp_i$ ($i=1, \dots, N$).
In each layer $l=1, \dots, L$, we first compute attention scores for the $h$-th head ($h=1,\dots,H$):
\begin{align*}
    a_{i,j}^{(k,h)} =&  (\vy_j^{(k-1)})^T K_{k,h}^T Q_{k,h} \vy_i^{(k-1)} 
    \end{align*}
    where $K_{k,h}$ (``key''), $Q_{k,h}$ (``query'') are $\in \mathbb{R}^{d\times d}$.
In softmax attention, the attention weights $\weightmac{i}{j}{k}{h}$ are then obtained via the softmax transform. In the  UHAT model, these are idealized as one-hot weights:
    \begin{align}
    \weightmac{i}{j}{k}{h} =& \frac{\exp(a_{i,j}^{(k,h)})}{\sum_{s=1}^i \exp(a_{i,s}^{(k,h)})} &&[\text{Softmax}] \\
    \weightmac{i}{j}{k}{h} =& \begin{cases} 1 & j=\arg_{s\leq i}\max a_{i,s}^{(k,h)} \\ 0 & else \end{cases} &&[\text{UHAT}] \label{eq:uhat}
\end{align}
The UHAT model can be viewed as the limit of the Softmax model when one attention score $a_{i,s}$ far exceeds the others.
If more than one $s$ attains the maximal attention score, ties are broken according to some fixed rule (e.g., choosing the left- or right-most match, \citet{yang2024masked,jerad2025unique}).
The output of the attention block is computed by weighting according to  attention weights $\hat{a}_{i,j}^{(k,h)}$ ($j \leq i$)\footnote{We assume causal masking in line with standard language model architectures, in order to allow autoregressive generation of the CoT. Our techniques are also applicable to the setting where the input itself is processed with bidirectional attention as assumed by \citet{abbe2024how} (Remark~\ref{remark:causal}).}, and applying a linear transformation $V$ (``value''); these are then aggregated across heads and combined with a skip-connection:
\begin{equation}\label{eq:def:ypre}
\ypost_i^{(k)} :=  f^{MLP}\left(\ypost_i^{(k-1)} +  \sum_{h=1}^H \sum_{j=1}^{i}\hat{a}_{i,j}^{(k,h)}  V_{k,h} \ypost_j^{(k-1)} \right)
\end{equation}
where $f^{MLP} : \mathbb{R}^d \rightarrow \mathbb{R}^d$. For our purposes here, $f^{MLP}$ may be arbitrary.\footnote{Transformers additionally implement layer norm \citep{DBLP:journals/corr/BaKH16}. Our bounds are robust to such position-wise operations. For instance,  layer norm following the MLP can be absorbed into $f^{MLP}$ for the purposes of our theorems.}
Finally, next-token predictions are made by $T := \mO \cdot \ypost_\seqlen^{(\numLayers)}$ for some parameter $\mO \in \mathbb{R}^{|\Sigma| \times d}$, where we are assuming some numbering of the alphabet $\Sigma$. 

\subsection{Formalizing CoTs}\label{sec:formalizing-cots}

\begin{definition}\label{def:cot}
Let $\Sigma$ be a finite alphabet.
    Given a function $f : \Sigma^* \rightarrow \Sigma^+$ and an alphabet $\Xi \supset \Sigma$, a \emph{chain-of-thought} (CoT) is a map  $g : \Sigma^* \rightarrow \Xi^+$ ending in the suffix $f(x)$.
    
    We write $\Xi_N$ to be the (finite) set of symbols appearing in at least one string $xg(x)$ for $x$ with $|x| \leq N$.
\end{definition}
We note that $\Xi$ is not restricted to be finite.
Some theoretical work has allowed the CoT vocabulary to grow with the input length \citep{bhattamishra2020computational,abbe2024how}; our lower-bounds are robust to this.

We now formalize what it means for a CoT to be expressible in UHAT.
For maximal generality, we assume a relaxed definition: rather than requiring that a single transformer perform the task across all input lengths (which makes it hard to distinguish unboundedly many positions), we only ask that a \emph{family} of transformers perform the task with a bounded number of layers and heads:
\begin{definition}\label{def:uhat-computable}
We say that a transformer $T$ computes the CoT $g(x)$ on input $x \in \Sigma^*$ if all symbols in $x g(x)$ appear in the vocabulary of $T$ and $|x g(x)|$ is bounded by the maximal context window, and if, when $T$ is run on $x g(x)$, its output at position $|x|+i-1$ is a one-hot vector with ``one'' at the index corresponding to $g(x)_i \in \Sigma$, for $i = 1, \dots, |g(x)|$. 
    We say that the CoT $g(x)$ is \emph{expressible in UHAT} across input lengths if, for each input length $n$, there is a UHAT transformer $T_n$ computing $g(x)$ for all $|xg(x)| = n$, and the numbers of layers and heads are uniformly bounded.
\end{definition}
While we bound layers and heads, we do not expect the width to necessarily stay bounded, which allows positional encodings to keep all unboundedly many positions distinct. This is a very weak requirement, a necessary precondition for the existence of a single length-generalizing transformer across input lengths, and much more relaxed than the degree of uniformity often assumed in lower bounds for transformers \citep[e.g.][]{merrill2023logic, huang2024formal}. 
It is nonetheless \emph{sufficient for proving essentially matching lower and upper bounds}.

\section{Results: Generic CoT Bounds}\label{sec:cot-bounds}

Our goal is to study the required length of the CoT $|g(x)|$ as a function of the input length $|x|$, under the constraint that the CoT is expressible in the sense of Def.~\ref{def:uhat-computable}.
Following standard practice in computational complexity, we focus on the worst-case complexity, which -- for a given input length $N$, amounts to $\max_{x : |x| = N} |g(x)|$. %

It is well-known that transformer CoTs are universal in the sense that they can simulate Turing machines \citep{perez2019turing, bhattamishra2020computational, hou2024universal, merrill2023expresssive, malach2024autoregressive, wei2022statistically, qiu2024ask}.
Constructions use a variety of assumptions about attention; we first note that this property holds in our setup:
\begin{fact}[Universality of UHAT CoTs]\label{lemma:universality}
    Consider a Turing machine that terminates within $\leq \tau(N)$ steps on all inputs of length $\leq N$.
    Then there is a CoT $g(x)$ over a countable alphabet $\Xi$ computing the output of the Turing machine, with length $|g(x)| = \mathcal{O}(\tau(|x|))$, expressible in UHAT.
\end{fact}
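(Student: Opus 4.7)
The plan is to adapt the standard Turing-machine-simulation-by-CoT constructions (as in the references cited in the paragraph preceding the fact) to the UHAT regime, exploiting the countably infinite CoT alphabet permitted by Definition~\ref{def:cot} to avoid having to encode unbounded head positions into a finite vocabulary. Concretely, I would have the CoT emit one token $\xi_t$ per simulated TM step, where $\xi_t \in \Xi$ encodes the tuple $(q_t, h_t, \sigma_t, h_{t+1})$ consisting of state, current head position, symbol written, and new head position. Because $h_t$ may be unbounded, $\Xi$ must be countable, which Definition~\ref{def:cot} explicitly allows. After the halting configuration is reached, a final $\mathcal{O}(1)$ tokens emit the answer $f(x)$, so $|g(x)| = \mathcal{O}(\tau(|x|))$ by construction.

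Next I would argue that a UHAT transformer of constant depth with $\mathcal{O}(1)$ heads (and width scaling with $n$, as Definition~\ref{def:uhat-computable} allows) can compute $\xi_{t+1}$ from $x\xi_1\dots\xi_t$. Transition-function evaluation, which combines the current state $q_t$ with the tape symbol $\tau_{t+1}$ at cell $h_{t+1}$ using the TM's finite transition table, is a finite lookup absorbed into the MLP. The genuinely nontrivial step is reading $\tau_{t+1}$, which is either the most recent $\sigma_s$ written by a step $s\leq t$ with $h_s = h_{t+1}$, or the input symbol $x_{h_{t+1}}$ if that cell has never been touched and $h_{t+1}\in\{1,\dots,|x|\}$, or blank otherwise. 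I would implement this with two heads. The first has $Q,K$ chosen so that its attention score at the current query position is maximal exactly on those CoT key positions $s$ whose encoded $h_s$ equals the queried $h_{t+1}$, with a small $\epsilon\cdot s$ term making the rightmost (most recent) match win the UHAT tie-break; its value carries $\sigma_s$. A parallel head matches positional encodings against the integer $h_{t+1}$ inside the input region and returns $x_{h_{t+1}}$. Both are standard UHAT primitives (``most recent match'' and ``positional lookup'').

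The main obstacle is that UHAT always returns \emph{some} argmax, so the ``no prior write'' case must be handled explicitly: otherwise the first head would return a spurious $\sigma_s$ whenever cell $h_{t+1}$ has not yet been touched. I would resolve this by prepending a sentinel position whose value encodes ``undefined'' and tuning the query--key scores so that this sentinel becomes the argmax precisely when no genuine match exists; the MLP then uses the resulting sentinel flag to select between the two heads (sentinel $\Rightarrow$ use the input-region head; otherwise the CoT head). With $\tau_{t+1}$ in hand, assembling $(q_{t+1}, h_{t+1}, \sigma_{t+1}, h_{t+2})$ from the finite transition table is a position-wise MLP computation. The resulting transformer family has a fixed number of layers and heads independent of $\tau$ and $N$, so it satisfies Definition~\ref{def:uhat-computable}, and the CoT it generates has length $\mathcal{O}(\tau(|x|))$ as claimed.
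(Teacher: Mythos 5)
Your proposal is correct and takes essentially the same approach as the paper's construction in Appendix~\ref{app:universality}: encode the head position and the TM transition explicitly in each CoT token (so it need not be recomputed from scratch), and read the current tape cell via one head that attends to the most recent write at the queried position together with a second head that reads the original input. The only minor design difference is how the ``no prior write'' case is detected: you introduce a sentinel position tuned to win the UHAT argmax when no genuine match exists, whereas the paper lets attention fall on an arbitrary position and has the MLP check whether the retrieved value actually encodes a write at the queried tape position; both mechanisms are standard and equivalent in effect.
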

The construction is discussed in Appendix~\ref{app:universality}.
As a consequence, all PTIME problems have polynomial-length CoTs, and problems outside of PTIME cannot have polynomial-length CoTs \citep{merrill2023expresssive}.
More generally, CoT is upper-bounded by Turing machine time complexity.
The converse does not hold: Many problems have efficient parallel algorithms that can be expressed by self-attention without any CoT. As a simple example, the Boolean AND function of $N$ variables requires $\Omega(N)$ steps with a Turing machine, simply because every input needs to be considered in the worst case, but it can be represented by a very simple transformer with just one attention head, and no CoT.
Our main technical contribution in this paper is to show that a diverse set of algorithmic problems \emph{do} require long CoTs, of length linear in the input. %

Our key technical tool, and first central result, is a generic CoT lower bound (Theorem~\ref{thm:uhat-cot-bound}). In order to state and prove this bound, we use the method of \emph{random restrictions}, a key technique for understanding bounded-depth circuits \citep[e.g.][]{hastad1994optimal, furst1984parity, boppana1997average} and transformers \citep{hahn2020theoretical}.
\begin{definition}\label{def:restriction}
    Let $N$ be an input length.
    A \emph{restriction} $\rho$ is a a family of maps $\rho_N : [1,N] \rightarrow \Sigma \cup \{*\}$, for $N=1,2,\dots$.
    We write $\rho\Sigma^*$ for the set of strings    $x \in \Sigma^*$ where $x_i = \rho_{|x|}(i)$ whenever $\rho_{|x|}(i) \neq *$.
\end{definition}
We show that the existence of a sublinear-length CoT has strong implications for the function computed.
\begin{theorem}[Generic CoT Bound]\label{thm:uhat-cot-bound}
Assume that $f$ has a UHAT-expressible CoT $g(x)$ of length $|g(x)| = o\left(|x|\right)$.
Choose any $C \in (0,1)$. Then there is a restriction $\rho$ such that
\begin{enumerate}
        \item $|\{i \leq N : \rho_N(i) = *\}| \geq CN$ for  sufficiently large $N$
        \item For each $N$, $f$ is constant on $\Sigma^N \cap \rho\Sigma^*$
    \end{enumerate}

\end{theorem}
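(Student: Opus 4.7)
The plan is a random-restriction argument that exploits a locality property of UHAT CoT computations. The intuition is that a CoT of length $m = |g(x)| = o(N)$ cannot inspect enough input positions to force $f$ to distinguish between inputs that agree on a large fraction of coordinates, so on a suitable sub-cube $f$ must be constant.

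First, I would establish a structural influence bound. Unfolding the attention DAG of the full UHAT transformer over the concatenation $x g(x)$, a single forward pass at any position $p$ in a UHAT with $L$ layers and $H$ heads has output determined by at most $H^L$ earlier positions, obtained by tracing each head back through the $L$ layers. Because the CoT is produced by $m$ autoregressive forward passes, the final output $f(x)$ has a set $S(x)$ of at most $c \cdot m$ input positions on which it transitively depends, where $c = c(L, H)$ is a fixed constant. Under $m = o(N)$ this gives $|S(x)| = o(N)$, though the location of $S(x)$ depends on $x$ through the UHAT argmaxes.

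Next, I would apply a random restriction $\rho$ where each input position is independently free with probability $C$ and otherwise fixed to a uniformly random symbol of $\Sigma$. By Chernoff, at least $CN$ positions are free with probability approaching $1$, giving condition 1 essentially for free. The core task is to show that with positive probability $f$ is also constant on $\rho\Sigma^*$. Using the sparsity $|S(x)| = o(N)$ together with the fact that $C < 1$, one argues that for a typical restriction the positions the attention heads actually read along the CoT trajectory lie within the fixed coordinates, so the computation — and therefore $f$ — is pinned down by the fixed bits alone. A union bound over pairs of completions of $\rho$, with probabilities controlled by the $o(N)/N$ sparsity, then yields a single restriction satisfying both conditions; collecting these per-$N$ restrictions into a family gives $\rho$.

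The main obstacle is the input-dependence of UHAT attention: unlike in a Boolean circuit, the UHAT dependency DAG can shift when input bits are flipped, so altering a bit outside $S(x)$ can nevertheless affect $f(x)$ by redirecting some attention argmax further along the CoT. To handle this I plan to prove a stability lemma stating that under a typical random restriction, every attention argmax along the CoT is robustly pinned by the fixed coordinates, i.e., cannot be changed by varying the free coordinates. This is an analog of the classical switching lemma adapted to UHAT, and is technically the delicate part, since UHAT argmaxes are global comparisons over all $N$ positions rather than local gates; the proof will require quantifying how many such comparisons can be perturbed by flipping a bounded number of free bits, and combining this with the locality bound from the first step.
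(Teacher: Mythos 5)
Your plan is close in spirit to the paper's but contains a genuine gap, which you yourself flag: the ``stability lemma'' is where the real work is, and the proposal offers no route to proving it. Two specific problems:

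First, the structural influence bound in your first step does not hold unconditionally for UHAT. Tracing each head back through $L$ layers gives $(1+H)^L$ positions \emph{on a fixed attention pattern}, but in UHAT the argmax at each head is a global comparison over all preceding positions, so the output a priori ``depends on'' every input position (flipping any bit can redirect an argmax). Your set $S(x)$ is therefore not well-defined until you have already established some form of locality, and you cannot bootstrap the argument by postulating $|S(x)| \le c\cdot m$ at the start. The paper confronts exactly this: the only way to obtain a bounded dependency set is via the depth-reduction / switching lemma (Lemma~\ref{lemma:restriction-strengthened}), and even then the bound holds only on the restricted sub-cube $\rho\Sigma^*$, not on all inputs. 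Second, your probabilistic finish is underpowered. If each position is fixed independently with probability $1-C$, the chance that a given $x$-dependent set $S(x)$ of size $o(N)$ is entirely fixed is $(1-C)^{|S(x)|}$, which is far too small to survive a union bound over $|\Sigma|^N$ inputs; and if some members of $S(x)$ remain free, you are back to needing the unstated stability of every argmax along the CoT trajectory.

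The paper's proof instead uses the switching lemma (Lemma~\ref{lemma:restriction-strengthened}) only on the \emph{input} $x$, obtaining a restriction $\rho^{(0)}$ under which every activation $\vy_i^{(l)}$ with $i \le N$ depends on at most $c$ input coordinates within $\rho^{(0)}\Sigma^*$. It then handles the CoT positions \emph{deterministically}, by induction on $k = 1, \dots, M$: at CoT step $k$, for each layer $l$ and head $h$, it determines the position $\widehat{j}$ achieving the maximal attainable attention score over inputs in the current restriction, and (because $\vy_{\widehat{j}}^{(l-1)}$ depends on at most $c$ coordinates within the sub-cube) fixes at most $c$ further input positions to force that maximum to be realized. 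This pins the argmax, hence the attention output, hence $\vy_{N+k-1}^{(l)}$, hence $g_k$. Each CoT token costs at most $c\cdot H\cdot L$ further fixed coordinates, for a total of $c\cdot H\cdot L\cdot |g(x)| = o(N)$; since the switching lemma leaves $\ge C'N$ positions free for any $C' > C$, the final restriction still leaves $\ge CN$ free. No probabilistic stability estimate for the CoT is needed, and this is what makes the argument go through; you should look for a mechanism of this deterministic kind rather than a union bound.
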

The first condition says that $\rho$ leaves a large fraction of input positions open; the second condition says that it is nonetheless sufficient for fixing the output.

A simple example of a UHAT-computable function is the AND function, where fixing a single bit to $0$ fixes the output to $0$; it is easily computed by a UHAT transformer where a single attention head attends to an occurrence of $0$ if any exists. Even though every bit can potentially matter to the output, this is easily computed in UHAT without CoT.
On the other hand, the \textsc{Parity} function cannot be fixed by fixing any strict subset of the input bits; hence, Theorem~\ref{thm:uhat-cot-bound} entails a linear-length lower bound for a CoT there.

Theorem~\ref{thm:uhat-cot-bound} is shown via Lemma~\ref{lemma:restriction-strengthened}, an analogue of the classical Switching Lemma \citep{hastad1994optimal} for $AC^0$ circuits, which permits collapsing bounded-depth circuits into shallow circuits on $\rho\Sigma^*$. Like that classical lemma, it is based on randomizing $\rho$ and showing that the probability of satisfying the desired properties is nonzero; hence, a satisfying $\rho$ exists.
\begin{lemma}\label{lemma:restriction-strengthened}
    Given a UHAT transformer $T$ operating over the finite alphabet $\Sigma$, and $C \in (0,1)$, there is $c \in \mathbb{N}$, $K \in \mathbb{N}$ ($c$ and $K$ each only depending on the number of layers and heads, and $|\Sigma|$), and a restriction $\rho$ such that
    \begin{enumerate}
        \item $|\{i \leq N : \rho_N(i) = *\}| \geq CN$ if $N \geq K$
        \item On $\Sigma^N \cap \rho \Sigma^*$, each activation $\vy_i^{(l)}$ ($i \in [1,N]$, $l\in[1,L]$) depends only on at most $c$ input positions
    \end{enumerate}
\end{lemma}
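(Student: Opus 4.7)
The plan is to prove Lemma~\ref{lemma:restriction-strengthened} by induction on the number of layers $L$, in analogy with Håstad's proof of the Switching Lemma for $AC^0$: the restriction $\rho$ is built up iteratively, with each stage refining the previous restriction by an additional random Bernoulli restriction that forces the attention operations to simplify with positive probability. The inductive claim is that for each $k \in \{0, \dots, L\}$ and each $C_k \in (0, 1)$, there exist constants $c_k, K_k$ depending only on $k$, $H$, and $|\Sigma|$, together with a restriction $\rho^{(k)}$ such that $|\{i \leq N : \rho^{(k)}_N(i) = *\}| \geq C_k N$ for $N \geq K_k$ and every $\vy_i^{(\ell)}$ with $\ell \leq k$ depends on at most $c_k$ free input positions on $\Sigma^N \cap \rho^{(k)} \Sigma^*$. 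The base case $k = 0$ is immediate with $c_0 = 1$ and $\rho^{(0)} \equiv *$, since $\vy_i^{(0)} = e(x_i) + \vp_i$ depends only on position $i$. For the inductive step, given $\rho^{(k-1)}$ with free fraction $\geq C_{k-1} > C_k$, I would refine by sampling an additional Bernoulli restriction $\rho'$ that independently fixes each currently-free position to a uniformly chosen symbol of $\Sigma$ with some probability $p$; a Chernoff bound keeps the surviving free fraction at least $C_k$ with probability approaching $1$.

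Given the IH and the decomposition in~\eqref{eq:def:ypre}, the influence of $\vy_i^{(k)}$ on the free inputs splits into contributions from $\vy_i^{(k-1)}$ (bounded by $c_{k-1}$) and, for each head $h$, the combined contribution of the argmax index $j^*(i,h) = \arg\max_{s \leq i} a_{i,s}^{(k,h)}$ together with the selected activation $\vy_{j^*(i,h)}^{(k-1)}$. A key observation simplifying the accounting is that if $j^*(i,h)$ itself depends on at most $t$ free inputs, then it takes at most $|\Sigma|^t$ distinct values on $\rho^{(k)}\Sigma^*$, so the combined influence over all values $j^*(i,h)$ can assume is at most $t + |\Sigma|^t \cdot c_{k-1}$, a constant. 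Thus the task reduces to showing, with positive probability over $\rho'$ and uniformly across all $(i,h)$, that $j^*(i,h)$ has influence at most $t$ for some constant $t = t(H, c_{k-1}, |\Sigma|)$. This yields a recurrence $c_k := c_{k-1} + H(t + |\Sigma|^t c_{k-1})$, which iterates to a constant $c = c_L$ depending only on $L$, $H$, $|\Sigma|$; the $K_k$'s come out of the Chernoff tails.

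The hard part is this switching step: bounding the influence of $j^*(i,h)$ after the random $\rho'$. The difficulty is that although each individual score $a_{i,s}^{(k,h)} = (\vy_s^{(k-1)})^\top K^\top Q\, \vy_i^{(k-1)}$ has at most $2c_{k-1}$ free inputs in its support (from the bilinear structure and the IH), the union over $s \leq i$ can cover $\Theta(N)$ free inputs, and $p^{c_{k-1}}$ is far from $1$ when $p$ is small. To argue that the argmax nevertheless depends on only $O(1)$ inputs with high probability, one exploits two UHAT-specific features: (a) whenever $\rho'$ fixes every input in the support of some $\vy_s^{(k-1)}$---an event independent across $s$ with probability at least $p^{c_{k-1}}$---the score $a_{i,s}^{(k,h)}$ collapses to a fixed function of $\vy_i^{(k-1)}$ alone, thus becoming ``inert'' with respect to all other free inputs; (b) since $|\Sigma|$ and $c_{k-1}$ are constants, the number of distinct residual ``score profiles'' realized as the remaining free inputs vary is bounded by a function of $H, c_{k-1}, |\Sigma|$, which in turn controls how the argmax can vary. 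Making (a)--(b) quantitative---so that the per-pair failure probability decays rapidly enough in $t$ to survive a union bound over the $NH$ pairs $(i,h)$---requires a Razborov-style coding argument tailored to UHAT's argmax, replacing the classical bound for CNF/DNF bottom layers, and is where the analysis departs most from the classical Håstad proof.
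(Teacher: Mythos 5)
Your high-level plan — induct on layers, driving each step with a random Bernoulli restriction, and bookkeeping the number of relevant input positions via a recurrence of the form $c_k = c_{k-1} + H\bigl(t + |\Sigma|^t c_{k-1}\bigr)$ — is exactly the structure the paper follows. The paper's proof of Lemma~\ref{lemma:restriction-strengthened} is a one-liner: iterate Lemma~\ref{lemma:depth-reduction-lemma} (the Depth Reduction Lemma) $L$ times. Your accounting (the argmax $j^*(i,h)$ taking at most $|\Sigma|^t$ values, each dragging in $\leq c_{k-1}$ inputs) matches the paper's $c \cdot (|\Sigma|^c k H + 1)$ update up to bookkeeping differences, and the Chernoff argument for keeping a $C'$-fraction of positions free is also fine.

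The genuine gap is the single-layer switching step itself. The paper does not reprove it: it cites Lemma~4 of \citet{hahn2020theoretical} and merely notes, by inspection of that proof, three mild strengthenings (preservation of \emph{all} intermediate activations rather than just the final output, arbitrary finite $\Sigma$, and compatibility with causal masking). You attempt to reconstruct it from scratch and explicitly concede that "making (a)--(b) quantitative \dots is where the analysis departs most from the classical Håstad proof" without supplying the argument. Beyond being incomplete, your sketch contains two concrete problems. First, the events "$\rho'$ fixes every free input in the support of $\vy_s^{(k-1)}$" are \emph{not} independent across $s$ — the supports of different activations can overlap arbitrarily, so (a) as stated is false. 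Second, the passage from a per-pair influence bound to a simultaneous bound over all $\Theta(NH)$ pairs $(i,h)$ is not just unspecified, it is the crux: a naive union bound over $NH$ pairs would force $t$ to grow like $\log N$, contradicting the requirement that $c$ depend only on $L, H, |\Sigma|$. Evading this requires the structural argument in the proof of Lemma~4 of \citet{hahn2020theoretical} (roughly: ordering positions by the maximal attention score achievable under the restriction, and fixing a bounded number of inputs to force the head onto the maximizer), which is not a Razborov-style coding argument. You should either reproduce that argument or cite the lemma directly, as the paper does.
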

The lemma is a strengthening of Theorem~1 of \citet{hahn2020theoretical}, the proof is in Appendix~\ref{app:uhat-cot-bound}. Note that Lemma~\ref{lemma:restriction-strengthened} applies to transformer computations without a CoT, whereas Theorem~\ref{thm:uhat-cot-bound} broadens the scope to functions computable with a $o(N)$-length CoT.
In order to deduce  Theorem~\ref{thm:uhat-cot-bound}, we first apply Lemma~\ref{lemma:restriction-strengthened} to obtain a restriction on the input $x$ itself (not the CoT tokens).
We then show that it is possible to fix $c \cdot H \cdot L \cdot |g(x)|$  further input symbols to fix \emph{every} symbol appearing in the CoT, and hence the function output $f(x)$. If $|g(x)| = o(|x|)$, the resulting restriction still leaves a constant fraction of input positions free (say, $\frac{9}{10} C N$ positions).
We provide the full argument in Appendix~\ref{app:uhat-cot-bound}.

\section{Results: Application to Algorithmic Problems}
We now proceed to proving explicit lower bounds for various algorithmic problems.
For each problem, we first examine sensitivity to establish difficulty for transformers based on the results reviewed in Section~\ref{sec:sensitivity} (independent of unproven conjectures about $TC^0$).
For high-sensitivity problems, we then use Theorem~\ref{thm:uhat-cot-bound} to lower-bound CoT length, and construct an explicit CoT that matches the bound up to logarithmic factors.

\subsection{Lower Bounds for Regular Languages}\label{sec:parity}

\begin{definition}
    \textsc{Parity} takes an input $x \in \{0,1\}^N$, and decides if the number of 1's is even or odd.
\end{definition}

\textsc{Parity} is the archetype of a highly-sensitive function, with $as_n(\textsc{Parity}) = n$, and
it has long been documented empirically that transformers struggle with it \citep[e.g.][]{bhattamishra2020ability, deletang2022neural, anil2022exploring, butoi2024training}:
We provide the following lower bound:
\begin{theorem}\label{thm:parity-bound}
Any UHAT CoT for \textsc{Parity} has length $\Omega\left(N\right)$.
\end{theorem}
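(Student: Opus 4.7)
The plan is to apply the Generic CoT Bound (Theorem~\ref{thm:uhat-cot-bound}) by contraposition, exploiting the fact that \textsc{Parity} is maximally sensitive. Assume, for contradiction, that \textsc{Parity} admits a UHAT-expressible CoT $g$ of length $|g(x)| = o(|x|)$. Fix any constant $C \in (0,1)$, say $C = 1/2$. Theorem~\ref{thm:uhat-cot-bound} then supplies a restriction $\rho$ such that, for every sufficiently large $N$, at least $CN$ input positions are left free (i.e.\ mapped to $*$), and simultaneously \textsc{Parity} is constant on $\{0,1\}^N \cap \rho \{0,1\}^*$.

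I would then derive a contradiction from the defining property of \textsc{Parity}: once $N$ is large enough that $CN \geq 1$, there is at least one free position $i$. Pick any string $x \in \{0,1\}^N \cap \rho\{0,1\}^*$; the string $x^{\oplus i}$ (flipping the $i$-th bit) also lies in $\{0,1\}^N \cap \rho\{0,1\}^*$, because $i$ is free under $\rho$ and all other coordinates agree with $x$. But $\textsc{Parity}(x) \neq \textsc{Parity}(x^{\oplus i})$ since flipping a single bit flips the parity. This contradicts the second conclusion of Theorem~\ref{thm:uhat-cot-bound}, that \textsc{Parity} is constant on $\{0,1\}^N \cap \rho\{0,1\}^*$.

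Hence the assumption $|g(x)| = o(|x|)$ is untenable, so every UHAT-expressible CoT for \textsc{Parity} has length $\Omega(N)$, as claimed. There is essentially no technical obstacle here: the entire heavy lifting has been absorbed into Theorem~\ref{thm:uhat-cot-bound} and the random-restriction machinery of Lemma~\ref{lemma:restriction-strengthened}; what remains is only the elementary observation that \textsc{Parity} has full sensitivity, so no proper restriction can make it constant. The same two-line template will apply verbatim to any function $f$ for which every nontrivial restriction leaves $f$ non-constant on the remaining cube.
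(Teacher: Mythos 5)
Your proof is correct and follows the paper's argument exactly: the paper also derives Theorem~\ref{thm:parity-bound} as a direct consequence of Theorem~\ref{thm:uhat-cot-bound}, noting that no restriction leaving any position free can fix the output of \textsc{Parity}. You simply spell out the one-bit-flip contradiction in a bit more detail than the paper's one-line remark.
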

The proof is a direct consequence of Theorem~\ref{thm:uhat-cot-bound}: as long as one does not fix \emph{all} input symbols, one can never fix the output of \textsc{Parity}.
The UHAT bound is tight, and is attained by the straightforward CoT consisting of the parities of increasing prefixes of the input.
We note that Theorem~\ref{thm:parity-bound} is substantively different from bounds based on learnability arguments applying to \emph{subset parities} \citep[e.g.][]{kim2024transformers, abbe2024how, abbe2023generalization, wies2022sub}, distinct from the \textsc{Parity} function applying to the full input (see Section~\ref{sec:disucssion} for discussion).

\paragraph{Why is Theorem~\ref{thm:parity-bound} nontrivial?}
At first sight, one might wonder if Theorem~\ref{thm:parity-bound} is trivial: Every bit matters for \textsc{Parity}, and a hard attention head can only attend to a single position; hence, one might be tempted to argue that a linear number of CoT steps is trivially needed to take every bit into consideration. However, importantly, the behavior of the attention heads itself is input-dependent and can in principle be influenced by every input bit. While each head in UHAT attends to only one position, the entire input globally determines which position this is (e.g., for the boolean AND function of $N$ bits, even though every bit can matter, a single head is enough).
Key to showing Theorem~\ref{thm:parity-bound} is that it is sufficient to fix a fraction of input bits to ``distract'' all attention heads and prevent them from considering the full input.
More broadly, our results establish a dichotomy on simulating finite-state automata:
\begin{corollary}\label{thm:finite-state}
    Let $f$ be the membership problem of a finite-state language $L$. Then exactly one of the following holds:
    \begin{enumerate}
        \item $L \in AC^0$ and $f$ is expressible in UHAT without CoT
        \item $L \not\in AC^0$, and any UHAT CoT for $f$ has length $\Omega(N)$
    \end{enumerate}
\end{corollary}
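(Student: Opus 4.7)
My plan is to prove the two halves of the dichotomy separately, combining Theorem~\ref{thm:uhat-cot-bound} with the classical algebraic characterization of regular $AC^0$ languages.

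For direction (1), if $L \in AC^0$ is regular, then $L$ is star-free: the converse follows because any non-star-free regular language has a syntactic monoid containing a nontrivial group, which supports a $\mathrm{MOD}_p$-reduction ruling out $AC^0$ membership by Razborov--Smolensky. Star-free languages are exactly the $FO[<]$-definable languages by Sch\"utzenberger's theorem, and \citet{barcelo2024logical} show that $FO[<]$ is UHAT-expressible, yielding an UHAT transformer for $L$ with no CoT steps.

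For direction (2), I would argue by contradiction: suppose $L \notin AC^0$ yet admits a UHAT CoT of length $o(N)$. Applying Theorem~\ref{thm:uhat-cot-bound} with $C$ chosen close to $1$, we obtain a restriction $\rho$ leaving at least $CN$ positions free such that the membership function of $L$ is constant on $\Sigma^N \cap \rho\Sigma^*$. The goal is to contradict this constancy by exhibiting two strings in $\Sigma^N \cap \rho\Sigma^*$ with different $L$-membership. Since $L$'s syntactic monoid contains a nontrivial group, there are words $u, v, w$ and a prime $p$ such that, inside the context $v \,\square\, w$, changing the number of copies of $u$ in $\square$ modulo $p$ flips $L$-membership. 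Because $\rho$ fixes only a small fraction of positions, a pigeonhole/packing argument yields a contiguous block of free positions large enough to vary the number of inserted copies of $u$ by $p$, while the fixed positions on either side can be extended to realizations of $v$ and $w$ using appropriate monoid elements.

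The main obstacle is executing the algebraic reduction robustly with respect to the placement of $\rho$'s fixed positions, which are not under our control. I would address this in two steps: first, take $C$ large enough (e.g., $C > 1 - 1/(2p|u|)$) to guarantee a run of consecutive free positions long enough to hold $p$ extra copies of $u$; second, use idempotents of the syntactic monoid to neutralize the contribution of the fixed positions adjacent to this run, so that their action on the DFA state does not depend on the inserted copies of $u$. A reassuring sanity check is that, specialized to $L = \textsc{Parity}$, the argument collapses exactly to the reasoning already used in Theorem~\ref{thm:parity-bound}.
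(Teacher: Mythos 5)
Your direction (1) contains a genuine error. You claim that a regular language in $AC^0$ must be star-free (aperiodic), and then invoke Sch\"utzenberger's theorem and $FO[<]$. This is false: $(aa)^*$ is in $AC^0$ (the circuit only needs to inspect the input length) yet its syntactic monoid contains the nontrivial group $\mathbb{Z}/2\mathbb{Z}$, so it is not aperiodic and not $FO[<]$-definable. The correct characterization, which the paper uses, is \citet{barrington1992regular}: a regular language is in $AC^0$ iff its syntactic morphism is \emph{quasi-aperiodic}, a strictly weaker condition than aperiodicity (it only forbids nontrivial groups inside $\eta_L(\Sigma^t)$ for each fixed $t$, not inside the whole image). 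Quasi-aperiodic languages can rely on position-modular predicates that $FO[<]$ cannot express. The paper handles direction (1) by citing Corollary~8 of \citet{yang2024masked}, which shows that all $AC^0$ regular languages (not merely the star-free ones) are UHAT-expressible; your route via $FO[<]$ only covers the proper subclass of star-free languages.

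Your direction (2) is conceptually on the right track --- both you and the paper use the Barrington et al.\ characterization to extract a nontrivial group element $m$ and reduce to modular counting --- but your execution is considerably more fraught, and you correctly sense the difficulty. The gap is that you apply Theorem~\ref{thm:uhat-cot-bound} directly to $L$ on $\Sigma^N$ and then must reason about an adversarially placed restriction $\rho$ inside an arbitrary regular language, which forces you into the pigeonhole/packing/idempotent machinery you sketch. The paper sidesteps this entirely: it first restricts the input domain to $\alpha(u|v)^*\beta$ where $\eta_L(u)=m$, $\eta_L(v)=m^k=e$, and $|u|=|v|=t$; it then \emph{hard-codes} $\alpha,\beta$ into the transformer and collapses each $t$-block into a single fresh symbol $a$ or $b$, obtaining a new UHAT CoT of the same depth/width profile over a binary alphabet where membership is exactly counting $a$'s modulo $k$. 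Only then does it apply Theorem~\ref{thm:uhat-cot-bound}, to a problem where ``no partial fixing of symbols fixes the output'' is immediate (just like for \textsc{Parity}). Your ``neutralize fixed positions using idempotents'' step has no clear meaning: $\rho$ fixes \emph{symbols} at positions, not monoid elements you get to choose, so you cannot assume the fixed positions realize idempotents. Adopting the paper's reduce-then-apply strategy eliminates these difficulties.
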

The proof is in Appendix~\ref{app:finite-state}.
Importantly, $f$ is affected by the sensitivity-based barrier discussed in Section~\ref{sec:sensitivity} if and only $L \not\in AC^0$ (Remark~\ref{remark:finite-state-sensitivity}).
Overall, this result complements results on transformer shortcuts to automata \citep{liu2022transformers} by establishing when a shortcut (i.e., a solution with CoT) can be represented well by transformers.

\paragraph{Experiments}
We trained transformers on \textsc{Parity} on lengths between 1 and 500 (Figure~\ref{fig:parity} and Appendix~\ref{app:parity-experiment}; results averaged across three runs). 
We considered the full CoT from Figure~\ref{fig:motivating-figure} (dots along diagonal line).
We also considered CoTs where only every $k$-th step of the CoT was provided, shortening the CoT by $1/k$ (dots below diagonal line).
Across input lengths, the model is successful when $k \lessapprox 3$.
We also verified that two LLMs (DeepSeek-R1, \citet{r1}, and o1-mini, \citet{jaech2024openai}) produce CoTs of at least linear length (Figure \ref{fig:llm-parity-multiplication}).

\begin{figure}
\centering
Full CoT \ \ \ \ \ \ \ \ \ \ \ \ \ \ \ \ \ \ \ \ \ \ \ Dot-by-Dot CoT

\includegraphics[width=0.23\textwidth]{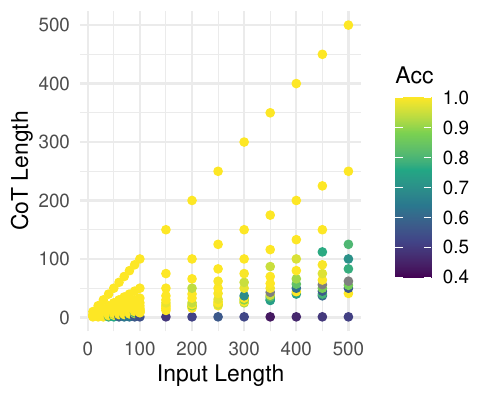}
\includegraphics[width=0.23\textwidth]{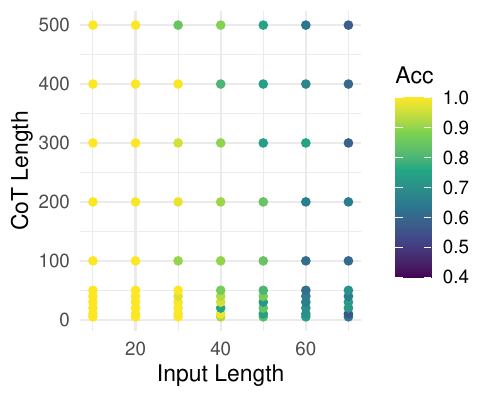}

    \caption{Results for \textsc{Parity}. Left: CoTs consisting of prefix sum parities, accuracy as a function of input and CoT lengths. CoTs of length $\Theta(N)$ succeed even on very long inputs. Right: Dot-by-dot scratchpads help much less, even when much longer then the input (Theorem~\ref{theorem:dot-by-dot}). %
    }\label{fig:parity}
\end{figure}

\subsection{Multiplication}\label{sec:multiplication}

We next examine basic arithmetic operations. For $N$-digit numbers, both addition and multiplication are in $TC^0$, and both operations can in principle be expressed by soft-attention transformers \citep{feng2024numerical}, but empirical research has found transformers to succeed much better at addition than at multiplication, which remains hard for transformers \citep{yang2023gpt}.
Addition has low sensitivity and can be represented in UHAT and fixed precision \citep{feng2024numerical};
multiplication is shown expressible by unbounded-precision transformers but not fixed-precisioin transformers \citep{feng2024numerical}. %
We examine the difficulty of computing each of the $2N$ output digits in multiplication.
\begin{definition}
Given two $N$-digit integers $X, Y$ encoded in binary,
let $M_k$ be the $k$-th bit of the binary representation of the product $XY$.
\end{definition}
Multiplication has high sensitivity; most importantly and interestingly, it turns out that digits in the middle of the result are particularly sensitive (Figure~\ref{fig:sensitivity-multiplication}). For these:
\begin{theorem}\label{thm:multiplication}
A UHAT CoT for $M_N$ requires length $\Omega(N)$.
\end{theorem}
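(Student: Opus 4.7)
My plan is to derive Theorem~\ref{thm:multiplication} by contradiction from the generic CoT bound (Theorem~\ref{thm:uhat-cot-bound}). If a UHAT CoT of length $o(N)$ existed, then applying Theorem~\ref{thm:uhat-cot-bound} to the $2N$-bit input with any $C \in (0,1)$---I would take $C = 3/4$---would yield a restriction $\rho$ that leaves at least $3N/2$ positions free yet forces $M_N$ to be constant on $\rho\Sigma^*$. The goal is then to exhibit two completions consistent with $\rho$ whose products differ in bit $N$.

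First, I would set up a partner-pair structure. Since $\rho$ fixes at most $N/2$ bits across $X$ and $Y$ combined, both numbers retain $\geq N/2$ free bit positions. Writing $S_X, S_Y$ for the fixed positions and $F_X, F_Y$ for the free ones, inclusion--exclusion on the $N-1$ ordered pairs $(i, j)$ with $i+j = N$ and $i, j \in \{1, \ldots, N-1\}$ produces at least $(N-1) - |S_X| - |S_Y| \geq N/2 - 1$ \emph{partner pairs} with $i \in F_X, j \in F_Y$. For any such $(i^*, j^*)$, the four completions obtained by setting every other free bit to $0$ and varying $(X_{i^*}, Y_{j^*}) \in \{0,1\}^2$ yield products
\begin{equation*}
P_{ab} = (X_f + a \cdot 2^{i^*})(Y_f + b \cdot 2^{j^*}),
\end{equation*}
where $X_f, Y_f$ denote the integers carried by the fixed bits alone, satisfying the integer identity $P_{11} - P_{10} - P_{01} + P_{00} = 2^{i^*+j^*} = 2^N$. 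In the clean subcase $X_f = Y_f = 0$ this immediately gives $P_{00} = P_{01} = P_{10} = 0$ and $P_{11} = 2^N$, so $M_N$ already takes both values and the proof is complete.

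The \textbf{main obstacle} is the general case of nonzero $X_f, Y_f$: the identity above is an equality of integers, not of individual bits, and carry propagation in binary addition can in principle absorb the $2^N$ jump at bit $N$. My plan to handle this is to exploit the $\Omega(N)$ partner pairs simultaneously. If $M_N$ were constant across all four $(X_{i^*}, Y_{j^*})$-completions for every such pair, each pair would impose a nontrivial carry-matching condition on the low-order bits of $X_f Y_f$, $2^{i^*} Y_f$, and $2^{j^*} X_f$; since $X_f, Y_f$ are jointly determined by at most $N/2$ bits, a linear number of independent carry conditions on them cannot all be satisfied, and a pigeonhole argument finishes the contradiction. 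An alternative route I would pursue in parallel is to allow toggling additional free bits of $X$ and $Y$ alongside $X_{i^*}$ and $Y_{j^*}$, using the $\Omega(N)$ unused free positions as explicit ``carry cancellers'' that zero out undesired carries into bit $N$ while preserving the $+2^N$ contribution. Both routes reduce the lower bound to a finitary bookkeeping statement about carries that the abundance of free positions supplied by Theorem~\ref{thm:uhat-cot-bound} is more than sufficient to resolve.
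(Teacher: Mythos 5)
Your overall strategy is the same as the paper's: invoke Theorem~\ref{thm:uhat-cot-bound} to get a restriction $\rho$ leaving a large constant fraction of the $2N$ input bits free, use the convolution identity $XY = \sum_k 2^k \sum_{i+j=k}\xi_i\eta_j$ to relate $M_N$ to the ``partner'' products $\xi_i\eta_j$ with $i+j=N$, and conclude that fixing a small fraction of bits cannot fix $M_N$. The setup you develop in more detail (counting free partner pairs, the four completions with the integer identity $P_{11}-P_{10}-P_{01}+P_{00}=2^{N}$) is a faithful elaboration of what the paper compresses into its statement that $M_N$ is ``the high-order parity $\bigoplus_{i+j=N}\xi_i\eta_j$, XOR-red with carries obtained from terms at lower $k$'s.''

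Where you diverge is in explicitly flagging, and not resolving, the carry obstruction. The paper's own proof dispatches this with a single assertion (``if one fixes $CN$ input bits, for $C$ small, the result cannot be fixed''), without spelling out how the carries from lower-order terms are controlled. You are right that the integer identity does not directly give a bit identity at position $N$, and that something must be said about carries. Of your two proposed repairs, the ``carry-canceller'' route (using additional free low-order positions to force a completion in which the carry into bit $N$ is pinned down, so that toggling $(X_{i^*},Y_{j^*})$ provably flips $M_N$) is the natural way to close the gap. The pigeonhole route is less convincing, since the constraint you would be pigeonholing over---the single carry bit into position $N$---does not obviously generate $\Omega(N)$ independent conditions. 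As it stands, your proposal is on the paper's track but leaves the final step as a sketch; completing the carry-canceller argument (or observing, as the paper implicitly does, that a restriction fixing fewer than half the positions cannot pin down both the degree-$N$ partner parity and all carry contributions into bit $N$) is what remains to make it a full proof.
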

The proof is in Appendix~\ref{sec:multiplication-lower-bound-proof}.
Importantly, addition faces no such hurdle, as any digit has at most polylogarithmic average sensitivity due to the existence of an $AC^0$ (and UHAT) construction for adding $N$-digit numbers without CoT \citep{feng2024numerical}; thus, transformers can output each digit in parallel better for addition than for multiplication (Figure~\ref{fig:arithmetic-comparison}).

\begin{figure}
    \centering
    \includegraphics[width=0.95\linewidth]{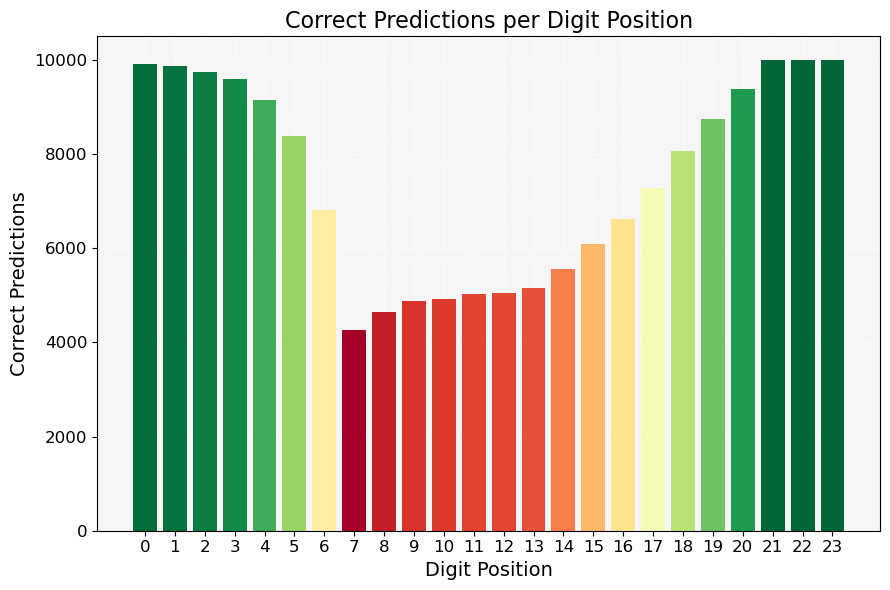}
    \caption{Accuracy of a transformer trained on autoregressive 12-digit multiplication (most significant digit at the left, zero-padded on the left), on a test set with 10K number pairs. High-sensitivity digits in the middle show substantially decreased accuracy even if digits at beginning and end are predicted exactly.
    See Figure~\ref{fig:multiplication-accuracy-by-digit} for more.
    }
    \label{fig:multiplication-accuracy-by-digit-main}
\end{figure}

Given our $\Omega(N)$ lower bound (Theorem~\ref{thm:multiplication}), what is the best upper bound?
Existing work providing scratchpads for transformers performing multiplication \citep{hou2024universal} uses the naive grade-school algorithm, which has $\Theta(N^2)$ complexity, much  larger than the $\Omega(N)$ lower bound. In fact, there is a UHAT scratchpad matching the $\Omega(N)$ lower bound up to a logarithmic factor:
\begin{theorem}\label{thm:fast-multipl-scratchpad}
    There is a UHAT scratchpad for the full product, $M_1 \dots M_{2N}$, with length $\Theta(N \log N)$.
\end{theorem}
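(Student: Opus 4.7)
The plan is to realize an $O(N\log N)$-time integer-multiplication algorithm as a UHAT-expressible CoT, writing one instruction of its straight-line program per token. Concretely, I would use an NTT-based multiplier over a finite field $\mathbb{F}_p$, with $p=\operatorname{poly}(N)$ chosen to admit a principal root of unity of order at least $4N$. The full pipeline -- NTT of the coefficient vector of $X$, NTT of the coefficient vector of $Y$, pointwise product, inverse NTT, and final carry resolution -- is a straight-line program (SLP) of $O(N\log N)$ instructions, each of the form $v_t \leftarrow \alpha v_i + \beta v_j \pmod{p}$ (butterfly) or $v_t \leftarrow v_i \cdot v_j \pmod{p}$ (pointwise), where the operand indices $(i,j)$ and constants $(\alpha,\beta)$ are fixed functions of $t$ alone, determined by the stage and butterfly index.

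I would design the CoT so that its first $O(N\log N)$ tokens simulate the SLP, with the $t$-th token storing $v_t \in \mathbb{F}_p$ drawn from the polynomially sized alphabet $\Xi_N$ (permitted by Def.~\ref{def:cot}). After the inverse NTT, the $2N$ coefficients of the product polynomial live in the scratchpad, and a final carry-propagation sweep of $O(N)$ tokens writes out the $2N$ product bits $M_1\dots M_{2N}$, each step combining the current coefficient with the running carry, both retrieved by single UHAT heads. This yields total CoT length $\Theta(N\log N)$, matching the claim.

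To implement one SLP step as one UHAT decoding step, I would pre-bake the per-position metadata -- the operand addresses $i(t),j(t)$, the constants $\alpha(t),\beta(t)$, and the instruction type -- into the positional encoding $\vp_t$, since these are all fixed functions of $t$ and can be baked in once and for all at each input length. Two UHAT heads then fetch the operand tokens via the standard tag-matching primitive: head $h$ attends to the unique CoT position whose positional tag agrees with the $h$-th operand address carried in $\vp_t$. A constant-depth MLP then performs the required $\mathbb{F}_p$ arithmetic; since $p$ is fixed at each length, this is a bounded piecewise-linear function well within the capacity of the $f^{MLP}$ block. The number of layers and heads stays $O(1)$ in $N$, satisfying the uniform-bound requirement of Def.~\ref{def:uhat-computable}.

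The main obstacle is making the non-local butterfly access pattern compatible with hard -- rather than soft -- attention: UHAT cannot synthesize an operand address online by aggregating over many positions, and the bit-reversal routing of NTT is precisely the sort of access pattern where one is tempted to do so. I sidestep this by treating $i(t),j(t)$ as data embedded in $\vp_t$ rather than as online computation, so that each head faces only a unique-argmax lookup against a precomputed tag. Once this routing primitive is in place, everything else -- the modular arithmetic inside each butterfly, the pointwise multiplications, and the linear-time carry resolution at the end -- is routine.
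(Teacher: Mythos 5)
Your proposal is correct and follows essentially the same route as the paper's Appendix~\ref{app:mult-scratchpad}: both realize the Sch\"onhage--Strassen/NTT multiplier as a CoT, rely on the Cooley--Tukey butterfly pattern being a fixed function of the step index so that operand addresses and twiddle constants can be hard-coded into positional encodings, fetch operands with unique-argmax heads, perform modular arithmetic in $f^{MLP}$, and exploit the polynomially sized CoT alphabet to hold $\mathbb{F}_p$ elements. The paper additionally spells out an explicit token layout (reversal/padding, index-hint copies before the pointwise product, and a final recombination pass), but the underlying argument is the same as yours.
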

The construction is based on the Fourier transform (Appendix~\ref{app:mult-scratchpad}).

\paragraph{Remarks} Theorem~\ref{thm:multiplication} concerns direct decoding of an individual digit $M_N$; we comment on barriers on \emph{autoregressive} decoding of products in Appendix~\ref{app:autoregressive-mult}.
Recent work has found that custom positional encodings which match structurally corresponding digits to help with addition \citep{zhou2023algorithms,mcleish2024transformers,cho2024position,cho2024arithmetic,sabbaghi2024explicitly}.
Theorem~\ref{thm:multiplication} holds for arbitrary positional encoding vectors, suggesting that, while beneficial in the case of \textsc{Addition}, specialized positional encodings are insufficient to overcome the difficulty of \textsc{Multiplication}.

\paragraph{Experiments}
We trained small transformers (2 layers, 2 heads) to multiply binary numbers with up to 16 digits, either directly, or with the $\mathcal{O}(N \log N)$ CoT from Theorem~\ref{thm:fast-multipl-scratchpad}.
Multiplication failed to generalize when no CoT was given (Figure~\ref{fig:arithmetic-comparison}), with difficulty driven by the high-sensitivity middle digits (Figure~\ref{fig:multiplication-accuracy-by-digit-main}). The CoT succeeded at all lengths up to 16 (Table~\ref{tab:scratchpad_accuracy}).\footnote{We focused extensive experiments to numbers of up this length due to resource constraints.} 
See Appendix~\ref{app:experiments-multiplication} for details.
We also verified that two LLMs (DeepSeek-R1 \cite{r1} and o1-mini \cite{jaech2024openai}) produce CoTs of at least linear length (Figure \ref{fig:llm-parity-multiplication}).

\subsection{Order Statistics}\label{sec:order-statistics}
We next consider another task in $TC^0$, which concerns finding order statistics.
\begin{definition}
Consider the \textsc{Median} task, where the input consists of $N$ numbers, each with $B$ bits, 
and the target is the $\lfloor N/2\rfloor$-th number when the list is sorted.

\end{definition}
$TC^0$ circuits solve this by selecting the number that is simultaneously smaller than and greater than $\lfloor N/2 \rfloor$ other numbers. %
However, the median, especially its last bit, is sensitive to alterations of the $N$ integers (Appendix~\ref{app:theory-order-statistics}). %
We obtain a CoT bound:
\begin{theorem}\label{thm:median}
For the \textsc{Median} task, a UHAT scratchpad requires length $\Omega(N)$. This bound is attained. %
\end{theorem}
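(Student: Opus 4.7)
The plan is to establish the $\Omega(N)$ lower bound via our generic CoT bound, Theorem~\ref{thm:uhat-cot-bound}, and match it with an explicit UHAT scratchpad that selects the $\lfloor N/2\rfloor$-th smallest element. Throughout I model the input as $N$ positions over the alphabet $\Sigma=\{0,1,\dots,2^B-1\}$.

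For the lower bound I argue by contradiction. Suppose \textsc{Median} admits a UHAT CoT $g$ with $|g(x)|=o(N)$. Applying Theorem~\ref{thm:uhat-cot-bound} with a constant $C\in(1/2,1)$, say $C=3/4$, yields a restriction $\rho$ such that for all sufficiently large $N$ at least $\lceil 3N/4\rceil$ of the $N$ number positions are free and yet the median is constant on all completions in $\Sigma^N\cap\rho\Sigma^*$. I then contradict this by exhibiting two completions with different medians. If every free position is set to $0$, the sorted list begins with at least $\lceil 3N/4\rceil>\lfloor N/2\rfloor$ zeros, so the median is $0$. If every free position is set to $2^B-1$, then at most $N-\lceil 3N/4\rceil\leq N/4<\lfloor N/2\rfloor$ values can be strictly smaller than $2^B-1$ in the sorted list, so the median equals $2^B-1$. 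Since $0\neq 2^B-1$ whenever $B\geq 1$, this contradicts the constancy of the median and gives $|g(x)|=\Omega(N)$.

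For the matching upper bound I construct a UHAT scratchpad of length $\lfloor N/2\rfloor=O(N)$ implementing selection sort: at step $k$ the scratchpad emits the $k$-th smallest input number. In UHAT, step $k$ uses a single attention head whose query is read off the most recent scratchpad token (holding the $(k-1)$-th smallest value $v$) and whose key/value embeddings are arranged so the attention score is maximized at the input position carrying the smallest value strictly greater than $v$, with positional encodings serving as a tie-breaker. The value at that position is then copied into the next scratchpad token, and after $\lfloor N/2\rfloor$ steps the last emitted token is the median.

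The main obstacle I anticipate is on the lower-bound side, specifically the interaction with the input encoding. At the number-level encoding with constant $B$ the $C>1/2$ argument above is immediate. If the input is instead encoded bit-by-bit so that $|x|\approx NB$, I must apply Theorem~\ref{thm:uhat-cot-bound} with $C$ chosen large enough---roughly $C$ pushed close to $1$---to force strictly more than half of the $N$ numbers to be \emph{fully} free rather than having the freedom concentrated in a few partially-free numbers, so that the $0$-versus-$(2^B-1)$ completion argument still goes through. A secondary concern is arranging the UHAT attention in the upper-bound construction to simultaneously perform a comparison against $v$ and a minimization over the admissible positions in a single attention score, which is standard but requires careful choice of the query/key embeddings and tie-breaking.
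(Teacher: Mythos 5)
Your proposal matches the paper's proof in both halves. The paper likewise invokes Theorem~\ref{thm:uhat-cot-bound} on the $BN$-bit input and observes that with $C$ chosen so that at most, say, $N/10$ bit positions are fixed, at least $9N/10 > N/2$ of the $N$ numbers remain fully free, so setting them uniformly to $0$ versus to $2^B-1$ produces different medians; your $0$-versus-$(2^B-1)$ completion argument fills in exactly the step the paper leaves implicit. On the encoding issue you flag: the resolution is simply to take $C > 1 - 1/(2B)$ (valid constant for fixed $B$), since the count of fixed \emph{bits} upper-bounds the count of partially-fixed \emph{numbers}; your phrasing about ``freedom concentrated'' slightly misidentifies the danger (it is the fixed bits being spread out, not the free ones being concentrated), but the threshold condition you land on is the right one. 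For the upper bound the paper also enumerates the $\lfloor N/2\rfloor$ smallest values and gives an explicit $(K^TQ)_{ij}$ realizing your ``smallest value strictly greater than $v$'' head in one shot, confirming that the single-head comparison-plus-argmin you worry about is indeed expressible in UHAT.
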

The proof is in Appendix~\ref{app:theory-order-statistics}; 
the bound is attained by a CoT enumerating the lowest $\lfloor N/2\rfloor$ numbers. %

\paragraph{Experiments} 
We trained transformers on \textsc{Median} on input lengths between 6 and 398 (Figure~\ref{fig:median} and Appendix~\ref{app:median-exp-detail}), corresponding to 1 to 99 decimal numbers (each with $B$ digits). We considered the full CoT (upper most dots in each input length). Like \textsc{Parity}, we also considered CoTs where only every $k$-th step of the CoT was provided (starting from the smallest number), shortening the CoT by $1/k$ (dots below the upper most dots in each input length).
Across input lengths, the model is always successful when using the full CoT, and there is a chance of failure when $k\geq2$ (accuracies are averaged over 3 runs). %

\subsection{Graph Reachability}\label{sec:reachability}
Another foundational aspect of reasoning is graph reachability. %
\begin{definition}
    Given a set of vertices $V$ and a set of directed edges $E \subseteq V\times V$, the Reachability task is defined as follows.  \emph{Input}: Given some numbering of $V$, we assume a list of all edges $E$, each coded as a pair of $\log V$-digit binary numbers.
    We also assume a query pair of two vertices $i, j$.
\emph{Output}: A binary label, indicating if there is a path from $i$ to $j$.
\end{definition}
The reachability problem in graphs, even in directed acyclic graphs (DAG), is ${\bf NL}$-complete \citep{jones1976new}; %
hence, by the unproven conjecture $TC^0\neq NC^1$, is expected not to be solved by transformers without CoTs. We provide an \emph{unconditional lower bound}, even for a sub-family of graphs for which the reachability problem is solvable in $TC^0$, by observing that \textsc{Parity} is reducible to DAG reachability:

\begin{theorem}\label{thm:reachability}
There is a family $\mathfrak{G}$ of DAGs inside which reachability is solvable in $TC^0$, but cannot be represented by a transformer at sublinear average sensitivity. A UHAT CoT needs length $\Omega\left(|E| \log |V|\right)$. 
This bound is attained.
\end{theorem}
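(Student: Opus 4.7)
The plan is to exhibit a family $\mathfrak{G}$ of layered DAGs that encodes \textsc{Parity}. For $x \in \{0,1\}^N$, form $G_x$ with vertex set $\{v_{i,b} : 0 \leq i \leq N,\ b \in \{0,1\}\}$ and, for each $i \in [N]$ and $b \in \{0,1\}$, the edge $(v_{i-1,b}, v_{i, b \oplus x_i})$. Reachability from $v_{0,0}$ to $v_{N,0}$ then equals $\neg\bigoplus_i x_i$. A bounded-depth threshold circuit can parse each edge from the edge list, extract the bit encoding the choice of $x_i$, and compute the parity of the $x_i$'s, so reachability restricted to $\mathfrak{G}$ lies in $TC^0$, whereas on arbitrary DAGs it is $\textbf{NL}$-complete. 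The function inherits average sensitivity $\Omega(N)$ directly from \textsc{Parity}, so by the sensitivity obstruction reviewed in Section~\ref{sec:sensitivity} it cannot be represented at sublinear average sensitivity.

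For the CoT lower bound I would apply Theorem~\ref{thm:uhat-cot-bound} contrapositively. If $|g(x)| = o(|x|)$ where $|x| = \Theta(|E|\log|V|)$ is the length of the binary edge-list encoding, there would exist a restriction leaving a constant fraction of the input positions free under which reachability is constant; any such restriction must however pin down every encoding bit that affects some effective $x_i$. To force this quota to saturate the full $\Theta(|E|\log|V|)$ input length, I would inflate the vertex labelling so that every bit of every edge-target encoding is informative—for instance, by choosing the codewords of $v_{i,0}$ and $v_{i,1}$ so that any individual bit flip in a target produces a codeword that the transformer treats as the other choice, toggling the effective $x_i$. No restriction leaving any such bit free can then fix the output, forcing $|g(x)| = \Omega(|E|\log|V|)$.

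The matching upper bound comes from a UHAT scratchpad that simulates breadth-first search: the scratchpad iterates through the edge list, emitting each newly reachable vertex's $\log|V|$-bit identifier whenever its source is already marked, and halts once a full pass produces no new vertex. This terminates in $O(|V|)$ rounds and emits $O(|E|\log|V|)$ tokens in total; each round is expressible in UHAT via standard induction-head constructions, using one attention head to fetch the next edge and another to check whether its source lies in the already-emitted reachable set.

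The main obstacle will be designing the padded encoding so that the family $\mathfrak{G}$ is closed under the single-bit flips used in the sensitivity argument—perturbations must land on well-formed DAGs in $\mathfrak{G}$ with flipped effective parity bits, not on ill-formed inputs—while simultaneously keeping both the $TC^0$ parsing circuit and the UHAT scratchpad functioning on this encoding. Without this redundancy, the naive reduction yields only $\Omega(|E|)$, so the extra $\log|V|$ factor in the stated bound rests entirely on engineering the encoding carefully.
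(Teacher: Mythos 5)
Your construction, the $TC^0$ claim, the sensitivity observation, and the BFS-based upper bound all coincide with the paper's proof in Appendix D.4 (indeed your layered two-track formulation is cleaner than the paper's edge description, which has a typo). What distinguishes your write-up is the final paragraph, where you flag a real subtlety: Theorem~\ref{thm:uhat-cot-bound}'s restriction acts on \emph{bit positions}, but the parity carried by the DAG lives in only $\Theta(|E|)$ of the $\Theta(|E|\log|V|)$ input bits. A restriction that fixes one distinguishing bit per edge target pins down every effective $x_i$ while leaving a $1-O(1/\log|V|)$ fraction of positions free, so the naive reduction does not yield $\Omega(|E|\log|V|)$. The paper's proof quietly switches from bits to edges ("fixing a small constant fraction of \emph{edges} ... cannot fix the reachability") and so does not close this gap either; you are right to worry about it.

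However, the fix you sketch does not go through as stated, and one of your intermediate claims is wrong. First, if the two candidate codewords for a target differ in more than one bit, a single-bit flip lands on a string that encodes \emph{neither} vertex -- an ill-formed input whose $f$-value belongs to the unconstrained transformer, not to you; you cannot stipulate that "the transformer treats it as the other choice." Second, the assertion that "any such restriction must pin down every encoding bit that affects some effective $x_i$" misreads Theorem~\ref{thm:uhat-cot-bound}: the restriction only needs to make $f$ constant on $\Sigma^N \cap \rho\Sigma^*$, and (as the one-bit-per-target example shows) it can achieve this while leaving most informative bits free. Separating what does follow ($\Omega(|E|)$, via the parity reduction) from what would need a genuinely new argument or encoding (the extra $\log|V|$ factor) is work that neither your proposal nor the paper completes.
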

The proof proceeds by coding \textsc{Parity} into DAGs with $2N$ vertices (lower bound), and coding breadh-first search into a CoT (upper bound), see Appendix~\ref{app:theory-dag-reachability}.

\paragraph{Experiments} We generate random DAGs with sizes ranging from 5 to 35 vertices and use them to train Transformers to solve a DAG reachability task in the general case with a $\mathcal{O}(|E| \log |V|)$-length scratchpad. The results, presented in Figure \ref{fig:dag}, show that a scratchpad of this length is indeed sufficient to solve the task. In contrast, a model trained without a scratchpad performs at chance level.
See Appendix \ref{app:dag-experiments} for details.

\begin{figure}[ht]
\vskip 0.2in
\begin{center}
\centerline{\includegraphics[width=\columnwidth]{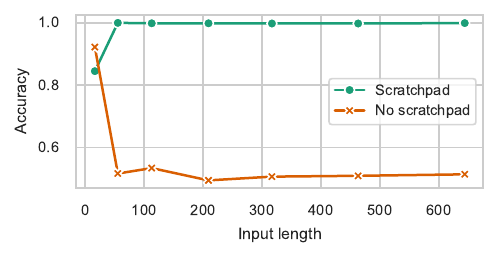}}
\caption{Results for DAG reachability. Without a CoT, the model performs at chance level for all but very small graphs. An $\mathcal{O}(|E| \log |V|)$-length CoT leads to a perfect accuracy independent of the graph size.}
\label{fig:dag}
\end{center}
\vskip -0.2in
\end{figure}

\subsection{Limitations of Dot-by-Dot CoTs}\label{sec:dot-by-dot}

Recent work has observed a benefit even for CoTs consisting just of repetition of a single token (``pause token''), which might give a transformer the opportunity to perform extra computations even without outputting any intermediate steps \cite{goyal2024think,pfau2024lets}.
While the power of such ``dot-by-dot'' CoTs remains in $TC^0$, provided their length is polynomially bounded, they may still enable additional computations \cite{pfau2024lets}, which raises the potential challenge of unauditable unobservable computations in LLMs.
Our methods result in additional barriers for such CoTs, suggesting that high-sensitivity computations require a substantial degree of reliance on \emph{explicit} CoT tokens. Specifically, for \textsc{Parity}, such a CoT cannot have polynomially bounded length: %
\begin{theorem}\label{theorem:dot-by-dot}
Consider a UHAT-expressible CoT for \textsc{Parity} where $g(x)$ has the form $.\dots . \# f(x)$. (``dot-by-dot CoT'').
Then $|g(x)| = \omega(|x|^k)$ for any $k \in \mathbb{N}$.
\end{theorem}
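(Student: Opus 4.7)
The plan is to proceed by contradiction: assume $|g(x)| \leq |x|^k$ for some fixed $k \in \mathbb{N}$. By Def.~\ref{def:uhat-computable}, there is a family $\{T_N\}$ of UHAT transformers with uniformly bounded layers $L$ and heads $H$, such that $T_N$ run on the sequence $x \cdot .\dots.\cdot \#$ (with $|g(x)|-2$ dots) outputs $\textsc{Parity}(x)$ at the final position. The crucial feature of the dot-by-dot setting is that every CoT token except the final answer is a deterministic function of its sequence position, so no input information needs to be ``used up'' to certify the intermediate predictions.

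The plan is then to run the random-restriction argument behind Lemma~\ref{lemma:restriction-strengthened} and the proof of Theorem~\ref{thm:uhat-cot-bound}, with two key adaptations. First, the restriction is taken only over the $N$ input positions; the dot and $\#$ positions are left as deterministic constants of the transformer's input. Second, the iterative step that pays $c \cdot H \cdot L$ input bits per CoT symbol in the proof of Theorem~\ref{thm:uhat-cot-bound} is trivial for every dot and $\#$ symbol, since their correctness under any restriction is automatic from the assumption that $T_N$ computes the CoT; hence we only incur the $c \cdot H \cdot L$-bit cost once, for the final $\textsc{Parity}(x)$ prediction. Because $|g(x)| = O(N^k)$, the total number of attention events that the switching-lemma-style union bound must simultaneously control is $L \cdot H \cdot (N+|g(x)|) = \mathrm{poly}(N)$, so the argument still closes, yielding some $c$ depending on $L$, $H$, $|\Sigma|$, $k$ (at worst logarithmically on $N$). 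For any fixed $C \in (0,1)$, this produces a restriction $\rho$ with at least $CN$ free input positions under which the transformer's final output depends on at most $c$ of them.

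The contradiction is then immediate from sensitivity: $\textsc{Parity}$, restricted to any completion of $\rho$, is $1$-sensitive in each of the $\geq CN$ free input bits, so it cannot equal a function of only $c$ of them once $N$ is large enough that $c < CN$. This rules out $|g(x)| = O(|x|^k)$ for each $k$, proving the theorem. The main obstacle is the first half of the second paragraph: one must carefully check that the switching-lemma-style union bound still closes when the transformer operates on the inflated length-$(N+|g(x)|)$ sequence rather than a length-$N$ one, and that the input-only restriction can still be realized with all intermediate dot and $\#$ predictions remaining correct. Both are feasible precisely because $|g(x)|$ is only polynomial --- the very regime this theorem rules out.
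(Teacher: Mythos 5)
Your approach is genuinely different from the paper's, and it has a gap.

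\textbf{The paper's proof.} The paper gives a one-paragraph argument using no restrictions at all: the transformer $T_n$ at sequence length $n = N + |g(x)| - 2$ computes, at its last position, a function in $AC^0$ of circuit size $\mathrm{poly}(n)$ and bounded depth (via the inclusion UHAT $\subseteq AC^0$ of \citet{hao2022formal}). Restricting the dot and $\#$ positions to their constants yields a bounded-depth circuit on the $N$ input bits that equals \textsc{Parity}, still of size $\mathrm{poly}(n)$. By Boppana/LMN, its average sensitivity is $O(\mathrm{poly}(\log n))$, so $N = O(\mathrm{poly}(\log(N + |g(x)|)))$, forcing $|g(x)|$ to be super-polynomial. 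The crucial feature this exploits is that the $AC^0$ sensitivity bound depends only \emph{logarithmically} on circuit size, which absorbs the inflated sequence length gracefully.

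\textbf{The gap in your argument.} You claim that the iterative $c\cdot H\cdot L$-per-CoT-token cost in the proof of Theorem~\ref{thm:uhat-cot-bound} is ``trivial for every dot and $\#$ symbol, since their correctness under any restriction is automatic.'' This conflates correctness of the \emph{token prediction} with fixing of the \emph{activations}. The inductive invariant that makes the proof of Theorem~\ref{thm:uhat-cot-bound} close is that all activations $\vy^{(l)}_{N+i}$ at earlier CoT positions, across all layers $l$, are constant on $\rho^{(k)}\Sigma^*$. The layer-$0$ activations at dot positions are indeed fixed, but from layer 1 on, heads at dot positions can attend to input positions and import input-dependent information into those activations. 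To fix the final attention head, one must control those activations, and doing so head-by-head, layer-by-layer costs $c\cdot H\cdot L$ restricted input bits \emph{per dot position}, totaling $\Omega(|g(x)|)$, which exhausts all $N$ input bits whenever $|g(x)| = \Omega(N)$.

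If you instead try to avoid fixing the dot activations and rely on Lemma~\ref{lemma:restriction-strengthened} directly, you hit a second obstacle: that lemma requires a restriction leaving $\geq Cn$ positions free out of $n$ total, but in the dot-by-dot setting only the $N$ input positions can ever be free, and $N/n = o(1)$ once $|g(x)| = \omega(N)$. The lemma's hypothesis is simply not met, and the constant $c$ it produces is only guaranteed to be independent of $n$ under that hypothesis. You anticipate this, proposing $c$ ``at worst logarithmic in $N$,'' but you give no derivation; establishing this would essentially require re-deriving the $AC^0$ average-sensitivity bound in the transformer setting from scratch — which is exactly the machinery the paper invokes as a black box. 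The restriction machinery of Sections~\ref{sec:cot-bounds}, as packaged, is tuned for the $o(N)$-length CoT regime and does not transfer to the $\mathrm{poly}(N)$-length regime without work you have not done.
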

The proof is in  Appendix~\ref{app:theory-dot-by-dot}.
Complementing this lower bound, an exponentially-sized UHAT CoT does exist in principle (Appendix~\ref{app:theory-dot-by-dot}).
Overall, we have thus a super-polynomial separation between the lengths required for full CoTs ($\Theta(N)$) and dot-by-dot CoTs ($\omega(poly(N))$)
\paragraph{Experiments}
Figure~\ref{fig:parity}B shows that a dot-by-dot variant of the CoT for \textsc{Parity} is not successful at expanding the lengths at which a transformer is learned successfully.

\begin{figure}
\centering
\includegraphics[width=0.4\textwidth]{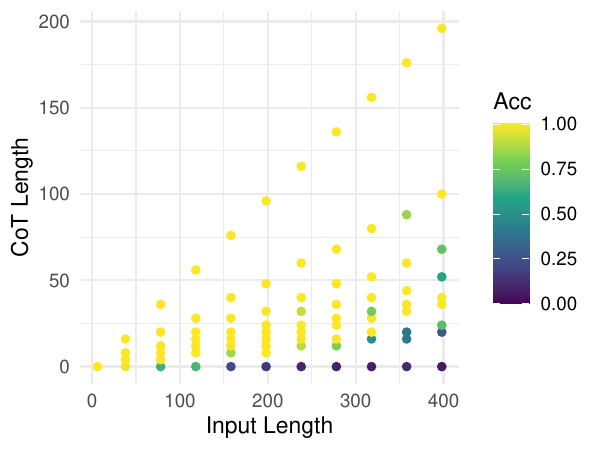}
    \caption{Results for \textsc{Median}: CoTs consisting of the sorted first $\lfloor N/2 \rfloor$ numbers. Accuracy as a function of input and CoT lengths. 
    }
\label{fig:median}
\end{figure}

\section{Discussion}\label{sec:disucssion}

\begin{figure*}
    \centering
    \begin{subfigure}{0.3\linewidth}
        \centering
\ \ \ \ \ \ \ \         \textsc{Parity} \\
        \includegraphics[width=\linewidth]{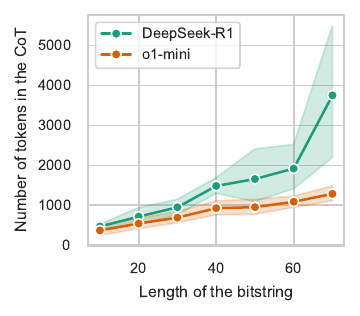}
    \end{subfigure}
    \hfill
    \begin{subfigure}{0.3\linewidth}
        \centering
    \ \ \ \ \ \ \ \     \textsc{Multiplication} \\
        \includegraphics[width=\linewidth]{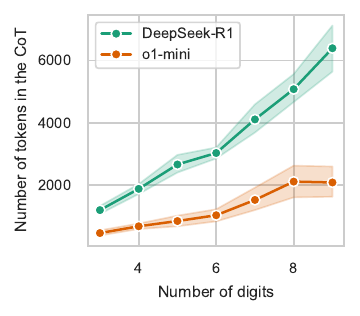}
    \end{subfigure}
    \hfill
    \begin{subfigure}{0.3\linewidth}
        \centering
    \ \ \ \ \ \ \ \     \textsc{Median} \\
        \includegraphics[width=\linewidth]{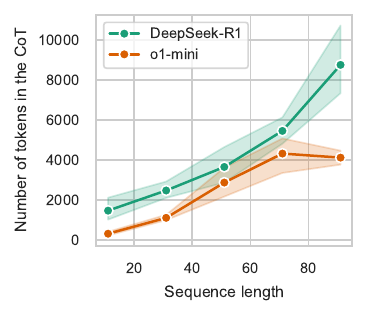}
    \end{subfigure}

    \caption{Length of reasoning traces of DeepSeek R1 \citep{r1} and o1-mini \citep{jaech2024openai} LLMs prompted to solve the tasks of parity (left), multiplication (center), and median (right). 
    We only include traces leading to the correct answer.
    In both cases, the growth of CoT appears at least linear.
    Manual inspection of reasoning traces of DeepSeek R1 shows that the model perform the counting of all ones for \textsc{Parity}, and the naive quadratic algorithm for \textsc{Multiplication}. The experimental details are provided in Appendix \ref{app:pretrained}.
    }
    \label{fig:llm-parity-multiplication}
\end{figure*}

\subsection{Implications}
Across Sections~\ref{sec:parity}--\ref{sec:reachability}, our lower bounds are tight up to polylogarithmic factors.
By showing that the UHAT CoT length must scale with the input length, they provide barriers on the possibility for self-attention to solve these tasks with small inference-time compute in generating additional tokens.

An attractive approach for shortening CoT reasoning and reducing inference-time compute is by fine-tuning transformers to perform reasoning in fewer and fewer steps \citep{deng2024explicit}.
Our results show that such a process is likely to run up against representational limitations of the architecture. 
The findings demonstrate efficiency limitations to CoT-based reasoning that might be overcome via tool use or stronger architectures.
Logarithmic growth of the depth may be one way to overcome these \cite{merrill2024little}; within fixed-depth architectures,  certain SSM architectures overcome at least the difficulty of \textsc{Parity} \citep{grazzi2024unlocking}. It remains to be understood if this applies to the other problems from Sections~\ref{sec:parity}--\ref{sec:reachability}.

\subsection{Related Work}

\paragraph{Theoretical Understanding of Scratchpads and CoT}
The first theoretical study of scratchpads for transformers is by \citet{perez2019turing}, who showed that transformers can autoregressively simulate the computations of a Turing machine. %
More recently, work has shown that polynomial-length CoT can allow transformers to transcend the complexity class $TC^0$ \citep{feng2023towards, merrill2023expresssive, li2024chain}, and express all problems in PTIME, which by widely believed but unproven conjectures far exceeds $TC^0$.
There are also lower bounds for scratchpads  for \emph{single-layer} transformers \cite{peng2024limitations,barcelo2025ehrenfeucht}, showing that a one-layer transformer (both the input itself and the CoT are processed by a single layer) requires a substantial number of steps to solve certain problems, such as iterated function composition.
\citet{chen2024theoretical} extended this line of work to multi-layer transformers, proving a benefit for CoT in iterated function composition.

Another angle to understanding CoTs is via learnability arguments \citep{wies2022sub,kim2024transformers,hahn2023theory,abbe2024how}.
A particularly promising angle is the \emph{globality degree} \cite{abbe2024how}, with potentially broad implications for scratchpads, though resulting lower-bounds remain largely conjectural beyond special cases (Appendix~\ref{app:globality}).
Interestingly, the globality degree is linked to our sensitivity-based techniques and could in principle lead to stronger bounds; a future full proof of the main conjecture of \citet{abbe2024how} would, in combination with our unconditional results here, entail even further results (Appendix~\ref{app:globality}).

\paragraph{Lower bounds for transformers}
Lower bounds for UHAT transformers primarily rest on random restrictions, either directly \citep{hahn2020theoretical, barcelo2024logical} or via reduction to known circuit bounds proven with those \citep{hao2022formal}.
A key technical step in our work is to expand the reach of the random restriction technique to transformers including CoTs.
Another approach to obtaining lower bounds relies on circuit conjectures from computational complexity \citep{merrill2023parallelism, sanford2024transformers} and is thus conditional on these (widely believed) conjectures.
There are further techniques for single-layer transformers \citep{kozachinskiy2024lower, peng2024limitations, sanford2023representational, Bhattamishra2024Separations, barcelo2025ehrenfeucht}, for autoregressive transformers \citep{chen2024theoretical}, and for length-generalizng transformers \citep{huang2024formal} relying on communication complexity or VC dimension. 

\paragraph{Relation to Results on Subset Parities}
Our results concerning \textsc{Parity} are distinct from results on \emph{Subset Parities}.
A long string of results has established difficulty of learning subset parities -- that is, parity functions restricted to subsets $U \subseteq \{1, \dots, N\}$ -- in various setups, and established benefits for providing intermediate steps \citep[e.g.][]{wies2022sub, abbe2023generalization, kim2024transformers, anonymous2024chainofthought}.
Intuitively, the difficulty of learning subset parities emerges from the fact that one has to identify the set $U$ from the exponentially-large power-set.
In learning such functions, there is a provable benefit to providing intermediate steps in training \citep{wies2022sub, abbe2023generalization, kim2024transformers, anonymous2024chainofthought}.
However, this argument does not readily imply that causal transformers will find the \textsc{Parity} function difficult to learn and that CoTs will help there, because the \textsc{Parity} function applies to a very specific set ($U = \{1, \dots, N\}$).
Our results thus are distinct from results on subset parities due to \citet{wies2022sub, abbe2023generalization, kim2024transformers, anonymous2024chainofthought}.
Interestingly, \citet{abbe2024learning} show that \textsc{Parity} may be easy to learn for MLPs with Rademacher initialization; however, this requires a weight norm substantially growing with $N$, making transformer length generalization unlikely \citep{huang2024formal}.

\paragraph{Mechanistic Understanding of CoT}

Concurrent with the theoretical line of work, the mechanisms behind the CoT abilities in LLMs have been studied empirically. \citet{wang2024large} used path patching to identify attention heads critical for CoT and found that only a few attention heads are important for that. \citet{dutta2024think} showed that LLMs employ multiple neural pathways that work in parallel to generate the final answer, all writing to the residual stream. \citet{kudo2024think} demonstrated for arithmetic reasoning that single-step subproblems can be solved by LLMs even before the scratchpad begins, while more complicated tasks are solved during CoT generation.

\subsection{Limitations and Open Questions}
One limitation of our bounds is that they apply in the worst-case setting. Providing average-case bounds is an interesting problem for future research.
A second limitation of our results is that they focus on the UHAT model, while there are other formal models of transformers (e.g. log-precision transformers, \citet{merrill2023logic}).
Our results pose the open question whether some other model might enable substantially shorter but still learnable CoTs:
\begin{problem}\label{prob:sublinear}
    Are there sublinear-length CoTs for any high-sensitivity problem from Sections~\ref{sec:parity}--\ref{sec:reachability} that are expressible by some formal model of transformers (other than UHAT; e.g., log-precision transformers), while being practically learnable?
\end{problem}
Solving this problem in the affirmative might have substantial implications for real-world LLMs, and any such solution is likely to require further progress in understanding the learnability properties of transformers.
On the other hand, a negative answer might suggest that tool use or architectural improvements are likely to be unavoidable for transformer-based LLMs even on many $TC^0$ problems, due to architectural limitations that make successful CoTs impractically long.

\section{Conclusion}
The success of scratchpad and chain-of-thought techniques raises the need to understand their strengths and limitations.
Here, we have provided lower bounds for the number of steps required for CoT reasoning in various fundamental algorithmic problems, in the UHAT model.
These bounds are tight up to polylogarithmic factors, and are attained by realistically trainable models.
Taken together, our results provide theoretical bounds on the ability of transformers to solve reasoning problems with CoT.

\section*{Impact Statement}
This paper presents work whose goal is to advance the field of Machine Learning. There are many potential societal consequences of our work, none which we feel must be specifically highlighted here.
 
\section*{Author Contributions}
MH coordinated the project.
All authors contributed to the conceptual framework.
Experiments were contributed by AA (Section~\ref{sec:multiplication}), 
XH (Section~\ref{sec:order-statistics}), MR (Section~\ref{sec:reachability}), and MH (Sections~\ref{sec:cot-bounds}, \ref{sec:parity}, and \ref{sec:dot-by-dot}).
MH contributed the bounds for UHAT transformers, with input from the other authors.
AA and MH jointly developed theory in Section~\ref{sec:multiplication}.

\section*{Acknowledgments}
Funded by the Deutsche Forschungsgemeinschaft (DFG, German Research Foundation) – Project-ID 232722074 – SFB 1102.
The authors thank Satwik Bhattamishra, Will Merrill, Sophie Hao, Anej Svete, Franz Novak, Ryan Cotterell, Emmanuel Abb{\'e}, and Andy Yang for discussion, and Yash Sarrof and Entang Wang for feedback on the paper.

\bibliography{literature}
\bibliographystyle{icml2025}

\newpage
\appendix
\onecolumn

\cleardoublepage

\addtocontents{toc}{\protect\setcounter{tocdepth}{3}} %
\renewcommand{\contentsname}{Appendix Contents} %
\tableofcontents %

\section{FAQ}

\begin{enumerate}

\item \textit{Isn't it obvious that one needs at least a linear number of steps to solve these algorithmic problems?}

In general, serial computation models such as Turing machines need at least a linear number of steps in order to solve tasks that require knowing the full input.
The situation is different for parallel computation as performed by transformers: Many problems do have direct parallel solutions. For instance, for all regular languages expressible in the circuit complexity class $AC^0$, transformers can express these in the UHAT model without CoT (Theorem~\ref{thm:finite-state}), essentially via parallel `shortcuts'. Our results are notable in that they establish that many algorithmic problems do require linear-length CoTs, with barriers on the possibility of substantial parallel shortcuts.

\item \textit{All lower bounds in this paper are essentially on the order of $\Omega(N)$ or $\Omega(N \log N)$. Do any algorithmic problems require substantially super-linear (e.g., quadratic) CoTs?}

This question is closely linked to deep unsolved questions at the heart of computational complexity. Due to the Turing completeness of transformer CoTs, proving such  lower bounds on CoTs would entail superlinear (e.g., quadratic) lower bounds on multitape Turing machine time complexity, which has been extremely challenging, even for NP-complete problems.

\item \textit{Transformers are already known to be Turing-complete. Why does this paper construct CoTs for the algorithmic problems -- isn't reducing to known Turing machine constructions enough?}

Compared to modern random-access models, single-tape Turing machines (as used in typical Turing machine completeness proofs for transformers) require substantial overhead to process data structures central to many algorithmic problems. For instance, a straightforward implementation of the queue used in breadth-first search (Theorem~\ref{thm:reachability}) takes a quadratic number of steps on a single-tape Turing machine, as the head needs to repeatedly move between start and end position of the queue. It is thus not immediate from Turing completeness that these problems can all be solved at (near-)linear CoT lengths. Multi-tape Turing machines (which can also be coded into CoTs) may provide substantially better bounds, but CoTs derived from Turing machine constructions are still likely to have substantial overhead (even if just constant factors). In contrast, we show that explicit constructions on the four algorithmic problems studied can be practically learned, confirming that the upper bounds are meaningful.

\item \textit{Implementations of self-attention already have quadratic complexity. Why should one be concerned with a further linear number of CoT steps?}

It is true that generating $N$ tokens, with KV-Cache, will have just $\mathcal{O}(N^2)$ complexity, asymptotically the same as directly providing an answer. However, it can still lead to substantial overhead, even if just by a constant factor, if the number of CoT tokens grows appreciably with $N$ (compare Figure~\ref{fig:llm-parity-multiplication}). Thus, any potential sublinear CoTs would be of great interest in increasing efficiency; our results demonstrate barriers to such solutions.

\item \textit{What is the practical impact of the results? Do the results relate to any specific NLP tasks?}

\textsc{Parity}, \textsc{Multiplication}, \textsc{Median}, \textsc{Reachability} are foundational problems, which instantiate simple models of reasoning problems that have been of broad interest.
For instance, \textsc{Parity} is a simple example of state tracking, a family of reasoning problems that have been of substantial interests and that still pose challenges for LLMs \citep[e.g.][]{merrill2024the,kim2023entity, hsieh2024ruler}.
There also has been a large amount of interest in the ability of transformers and LLMs to perform arithmetic such as \textsc{Multiplication}, as it is a fundamental ingredient of mathematical reasoning.
\textsc{Reachability} is a simple case of search, which is foundational to many aspects of reasoning, and transformers' abilities to perform such problems have been an object of interest \citep[e.g.][]{lehnert2024beyond}.
On all these problems, our results entail barriers on the possibility of transformer algorithms avoiding substantial CoTs.

\item \textit{Definition~\ref{def:cot} enforces a finite input alphabet, but allows an infinite CoT alphabet. Why?}

The input alphabet needs to be finite for Theorem~\ref{thm:uhat-cot-bound} to go through, because the proof of Lemma~\ref{lemma:depth-reduction-lemma} by \citet{hahn2020theoretical} involves a union bound over the alphabet.
On the other hand, this is not needed for the CoT tokens.
Some prior work \citep{bhattamishra2020computational, abbe2024how} allows the CoT vocabulary to grow with the input length. Theorem~\ref{thm:uhat-cot-bound} can accommodate this, and we indeed find this useful in our constructions.

\item \textit{Definition~\ref{def:uhat-computable} allows using different transformers $T$ at every input length. Isn't this unrealistic? What about the role of length generalization?}

It is true that, in practice, one will expect transformers to perform the tasks across input lengths.
However, the theoretical literature on transformers has developed different approaches to formalize this.
Notably, a single transformer at fixed width may have trouble solving a single task across unboundedly many input lengths unless very specific positional encodings are used \citep[e.g.][]{merrill2023expresssive}, and work has thus often considered cases where the width may grow with the length \citep[e.g.][]{liu2022transformers, huang2024formal, Bhattamishra2024Separations}.
To accommodate such variability, we aim for the most general setup under which we can prove lower bounds. We hence expect only the depth and the number of heads to stay constant, but allow everything else to grow with the input length. This is the most general setup under which our techniques allow us to show Theorem~\ref{thm:uhat-cot-bound}.

\item \textit{ %
The results in this paper are proven for a hard-attention model, whereas real-world implementations use softmax attention, which can express functions that hard-attention transformers cannot. In particular, UHAT is bounded by $AC^0$, whereas softmax transformers are bounded by $TC^0$. Given this difference, what do the results entail about real-world LLMs?}

It is true that softmax attention can express some functions that hard attention (and $AC^0$) cannot, such as the \textsc{Majority} function.
On the other hand, $TC^0$ is an overly optimistic upper bound on practical abilities of transformers, as many $TC^0$ functions are not practically learned by transformers due to the sensitivity limitations discussed in Section~\ref{sec:sensitivity}.
Hence, we focused on tasks for which existing results  based on average sensitivity \citep{hahn2024sensitive} entail that transformers  will struggle across different formal models of self-attention, irrespective of differences in expressive capacity (Section~\ref{sec:sensitivity}), including problems within $TC^0$.
Thus, for the tasks considered here, a CoT is practically needed irrespective of the specific formalization of self-attention; this is also confirmed by our experiments, where transformers consistently failed in the absence of CoTs.
We further note that existing CoT constructions from the literature can by and large be expressed well using hard-attention operations (Appendices~\ref{app:cot-literature} and \ref{app:universality}).
In LLM experiments, we observed algorithms in line with our theoretical constructions (Appendix~\ref{app:pretrained}).
Our results thus place strong constraints on any possibility of evading linear lower bounds.
In principle, it is possible that other formal models of transformers allow substantially faster CoTs on some algorithmic problems while maintaining practical learnability (Problem~\ref{prob:sublinear}); rigorously proving or refuting the existence of such solutions is likely to require substantial advances in understanding transformers' learnability properties.

\item \textit{CoT lower bounds are proven for hard attention, while experiments are conducted with softmax attention. Isn't there a mismatch between theory (hard attention) and experiments (softmax transformers)?}

It is true that we do not conduct experiments in the hard-attention setup, because there is no efficient training procedure for that setup.
Our experiments, conducted in the practical softmax attention setting, complement the theoretical results in two ways:

\begin{enumerate}
\item First, transformers consistently failed in the absence of CoT on the four algorithmic tasks. This supports the theoretical prediction that a CoT improves transformers' ability to solve these tasks.
\item Second, by showing that CoTs implementing the theoretical upper bounds can be practically learned, we demonstrate the real-world meaningfulness of our theoretical bounds.

\end{enumerate}

\end{enumerate}

\section{General Theoretical Results}

\subsection{Proof of Theorem~\ref{thm:uhat-cot-bound}: Generic CoT  Bound (Main Result)}\label{app:uhat-cot-bound}

Here, we prove our main lower bound (Theorem~\ref{thm:uhat-cot-bound}), which provides a generic condition under which sublinear CoTs can be possible. Our techniques build on lower-bounding methods for bounded-depth circuits \citep{furst1984parity, hastad1994optimal}.
Recall the definition of restrictions (Definition~\ref{def:restriction}).

\begin{definition}
    We say $\rho' \succ \rho$ if, whenever $\rho'_N(i) = *$, then $\rho_N(i) = *$ ($1 \leq i \leq N$).
\end{definition}

\begin{definition}[$c$-Transformer]
A $c$-Transformer conforms to the definition of transformers from Section~\ref{sec:model-transformers}, except that each $\vy_i^{(0)}$ is a function of $\leq c$ input positions.
That is, there are functions $f_i$ such that: %
\begin{equation}
\vy_i^{(0)} = f_i(x_{1\dots N})
\end{equation}
and $f_i$ depends on at most $c$ of its inputs.
\end{definition}

We build on the following lemma shown by \citet{hahn2020theoretical},  rephrased for self-containedness. It can be viewed as a transformers analogue of the Switching Lemma \citep{hastad1994optimal} for bounded-depth circuits:
\begin{lemma}[Depth Reduction Lemma]\label{lemma:depth-reduction-lemma}
    Given a UHAT $c$-Transformer $T$ with $L$ layers, a constant $C \in (0,1]$, and  restrictions $\rho_N = \rho_1, \rho_2, \dots$ such that
\begin{equation}
|\{i \leq N: \rho_N(i) = *\}| \geq CN
\end{equation}
for all sufficiently large $N$.

Choose any $C' \in (0,C)$.
Then there are restrictions $\rho_1' \succ \rho_1, \rho_2' \succ \rho_2, \dots$ 
such that
\begin{equation}
|\{i \leq N: \rho'_N(i) = *\}| \geq C'N
\end{equation}
for all sufficiently large $N$, 
and such that there is a $ c\cdot(|\Sigma|^ckH+1)$-transformer $T'$ with $L-1$ layers, for some integer $k$ (depending on $C'$), %
such that, for each $i \in [1,N]$ and each $l \in [1,L]$,
\begin{center}
$\vy_i^{(l)}$ in $T$
\end{center}
equals
\begin{center}
$\vy_i^{(l-1)}$ in $T'$
\end{center}
whenever the input is in $\rho'\Sigma^*$. %
\end{lemma}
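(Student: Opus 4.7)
The plan is to prove this by a random-restriction argument that adapts Theorem~1 of \citet{hahn2020theoretical} so as to operate \emph{above} the given restriction $\rho$, rather than on the unrestricted Hamming cube. The key idea is that in a UHAT $c$-transformer, each zero-th layer activation $\vy_j^{(0)}$ depends on at most $c$ inputs and hence takes at most $|\Sigma|^c$ possible values; if we can fix enough extra input positions to pin down, for each position $i$ and head $h$, where the first-layer attention head attends, then $\vy_i^{(1)}$ itself becomes a function of only boundedly many inputs, and can play the role of $\vy_i^{(0)}$ in a $(L-1)$-layer transformer $T'$.

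\textbf{Step 1 (sampling $\rho'$).} Pick $q \in (0,1)$ with $qC > C'$. For each $n$, independently for each $i$ with $\rho_n(i) = *$, leave $\rho'_n(i) = *$ with probability $q$, and otherwise draw $\rho'_n(i)$ uniformly from $\Sigma$; on positions where $\rho_n(i) \neq *$, set $\rho'_n(i) = \rho_n(i)$. By a Chernoff bound, $|\{i \leq n : \rho'_n(i) = *\}| \geq C' n$ for all sufficiently large $n$, except on an event of probability $e^{-\Omega(n)}$.

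\textbf{Step 2 (controlling first-layer attention by a Switching-Lemma-style decision tree).} Fix $i$ and a head $h$. Build a decision tree that discovers which position $j^*$ is the argmax of $a^{(1,h)}_{i,\cdot}$: at each step, maintain a candidate argmax $j_0$, query the values of the (at most $c$) inputs on which $\vy^{(0)}_{j_0}$ depends and those on which $\vy^{(0)}_i$ depends, and then look for some other $j$ whose score exceeds that of $j_0$. Because each $\vy^{(0)}_{j}$ takes at most $|\Sigma|^c$ values, the relevant ``switch'' at each node of the tree is determined by at most $c$ freshly revealed inputs, and under a $q$-random restriction the probability that no switch is forced after $k$ queries is bounded by $(\alpha q)^k$ for some $\alpha$ depending on $c$ and $|\Sigma|$, by the standard Håstad-style counting of partial decision trees. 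Choosing $k$ large relative to $q$ (which depends only on $C-C'$), the probability that the decision tree for $(i,h)$ has depth greater than $k$ is $o(1/(nH))$.

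\textbf{Step 3 (union bound and definition of $T'$).} Union-bounding over the at most $n$ positions and $H$ heads (and over the bad event in Step~1), with positive probability the sampled $\rho'$ simultaneously (i) satisfies $|\{i : \rho'_n(i)=*\}| \geq C'n$ and (ii) for every $i$ and $h$, the first-layer attention target $j^*(i,h)$ is determined by at most $ck$ revealed inputs, whose values are in turn constrained to one of $|\Sigma|^c$ equivalence classes per queried $\vy^{(0)}_{\cdot}$. On $\rho'\Sigma^*$ the activation $\vy_i^{(1)}$ is then a function of at most $c \cdot(|\Sigma|^c kH + 1)$ input positions (the $+1$ absorbing the $c$ inputs underlying $\vy^{(0)}_i$ itself). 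Taking $T'$ to be the transformer obtained by deleting the first layer of $T$ and redefining the zero-th-layer map as $f'_i := \vy_i^{(1)}$ (viewed as a function of these $\leq c(|\Sigma|^c kH + 1)$ inputs) gives the claimed $c(|\Sigma|^c kH + 1)$-transformer with $L-1$ layers. The hardest step will be Step~2: one must check that Hahn's decision-tree argument, originally phrased over the uniform measure on $\{0,1\}^n$, still yields the exponential tail $(\alpha q)^k$ when working above an arbitrary starting restriction $\rho$, and that the resulting constants $k$ and $c(|\Sigma|^c kH+1)$ depend only on $L$, $H$, $|\Sigma|$, and $C-C'$, so that the lemma can be iterated through all $L$ layers without blowing up with $n$.
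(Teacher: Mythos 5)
Your high-level plan is essentially a reconstruction of Hahn (2020)'s Lemma 4, whereas the paper's own proof is a citation to that lemma together with three observations: (i) the construction of $T'$ preserves \emph{all} activations $\vy_i^{(l)}$, not just the final output, so the stronger conclusion follows at no extra cost; (ii) the argument goes through for an arbitrary finite alphabet with $2^c$ replaced by $|\Sigma|^c$; (iii) causal masking is compatible since it can be hard-coded into the attention scores. You do not mention points (i)--(iii), which are actually the only content in the paper's ``proof'' of this lemma; everything else is deferred to the cited source. So you are re-deriving from scratch something the paper treats as essentially known.

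The reconstruction itself, however, has a genuine gap in the quantitative bookkeeping of Steps 2--3, and it is precisely the issue you flag at the end but do not resolve. You claim that for each $(i,h)$ the decision tree fails to stabilize within depth $k$ with probability $\leq (\alpha q)^k$, and then you union-bound over all $\leq n$ positions $i$ and $H$ heads to conclude that, with positive probability, \emph{all} of them stabilize. For a fixed $q$ (and $q$ is pinned down in Step~1 by the requirement $qC > C'$, so it cannot shrink with $n$), this per-pair failure probability is a constant depending only on $c$, $|\Sigma|$, $q$, $k$. Hence the union bound over $nH$ pairs gives total failure probability $\approx nH(\alpha q)^k$, which is $<1$ only if $k \gtrsim \log n$. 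But the statement --- and, crucially, its use in Lemma~\ref{lemma:restriction-strengthened} and Theorem~\ref{thm:uhat-cot-bound}, where the lemma is iterated $L$ times to produce a constant $c$ ``depending only on the number of layers and heads, and $|\Sigma|$'' --- requires $k$ to be a constant independent of $n$. Otherwise the parameter $c$ in the resulting $c$-transformer, and therefore the blowup in the iterated bound, would grow with $n$, and the downstream argument in Appendix~\ref{app:uhat-cot-bound} (which fixes $\leq c\cdot H\cdot L\cdot |g(x)|$ additional positions) would no longer yield a constant-fraction-free restriction. As written, your Step~3 cannot give a constant $k$, so the argument does not establish what the lemma claims. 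To repair it, one cannot simply insist that \emph{every} $(i,h)$ pair has a short decision tree; one must instead control the expected total number of additional restrictions incurred by the (exponentially few but not vanishingly few) pairs where the tree is deeper, and absorb that into the budget $(C-C')n$. That accounting is the actual content of Hahn's Lemma~4, and it is absent from your sketch.
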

\begin{proof}
    This is a stronger statement of Lemma 4 of \citet{hahn2020theoretical}.
    That lemma only stated that the final output $\vy_N^{(L)}$ was preserved in $T'$. However, by the way $T'$ is obtained in that proof\footnote{``We can thus remove layer 0, convert layer-1 activations $\vy_j^{(1)}$ into layer-0 activations $\vy_j^{(0)}$, and obtain a $(c\cdot(2^ckH+1))$-transformer performing the same computation as before when $\rho^{(3)}$ is applied.'' \citep[quoted from][]{hahn2020theoretical}. Here, ``$\rho^{(3)}$'' corresponds to the restriction $\rho'$ from the statement of the lemma.}, the stronger statement for all activations follows.
    We also note that the original statement considered $\Sigma = \{0,1\}$, but the proof transfers to arbitrary finite alphabets $\Sigma$ without change, except that $T'$ is now a  $ c\cdot(|\Sigma|^ckH+1)$-transformer, rather than a $ c\cdot(2^ckH+1)$-transformer.

    We note that \citet{hahn2020theoretical} did not assume causal masking, whereas we are assuming it by default (Section~\ref{sec:model-transformers}). Causal masking can be easily simulated in UHAT, by setting up positional encodings in such a way that $a_{i,j}^{(k,h)}$ is extremely small whenever $j > i$. Hence, the proof of the Depth Reduction Lemma continues to work in the presence of hard-coded causal masking.
\end{proof}
This lemma entails the following fact, a strengthening of Theorem 1 of \citet{hahn2020theoretical}:
\begin{lemma}[Restated from Lemma~\ref{lemma:restriction-strengthened}]
\label{app:lemma-restriction-strengthened}
    Let $T$ be a UHAT transformer operating over a finite alphabet.
    Take any $C \in (0,1)$.
    Then there is $c \in \mathbb{N}$, $k \in \mathbb{N}$ such that, for each $N > k$, there is $\rho$ such that
    \begin{enumerate}
    \item $\left|\{i : \rho(i) = * \}\right| \geq CN$
        \item within $\rho\Sigma^*$, each $\vy_i^{(l)}$ is determined by $\leq c$ input positions
    \end{enumerate}
\end{lemma}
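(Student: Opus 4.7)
The plan is to bootstrap from Lemma~\ref{lemma:depth-reduction-lemma} (Depth Reduction) by iterating it $L$ times, peeling off one layer per application and tracking how both the locality constant and the fraction of surviving free positions evolve. The key initial observation is that the original $L$-layer UHAT transformer $T$ is already a $1$-transformer, because every initial embedding $\vy_i^{(0)} = e(x_i) + \vp_i$ depends on a single input position. So we start with a $1$-transformer of depth $L$, and our goal is to reduce it to depth $0$ while keeping at least $C N$ positions free.

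First, I would choose a strictly decreasing sequence of fractions $1 \geq C_0 > C_1 > \cdots > C_L > C$ (e.g.\ $C_i = C + (L-i)\frac{1-C}{L+1}$), so that each application of Lemma~\ref{lemma:depth-reduction-lemma} only has to give up a small amount of free mass. Starting from $T_0 := T$ (an $L$-layer $c_0$-transformer with $c_0 = 1$) and the trivial restriction $\rho_0$, inductively apply the Depth Reduction Lemma with the pair of constants $(C_i, C_{i+1})$ to obtain a new restriction $\rho_{i+1} \succ \rho_i$ satisfying $|\{j \leq n : \rho_{i+1,n}(j) = *\}| \geq C_{i+1} n$ for all $n \geq K_{i+1}$, and a $(L-i-1)$-layer $c_{i+1}$-transformer $T_{i+1}$ with $c_{i+1} = c_i(|\Sigma|^{c_i} k_i H + 1)$ and some integer $k_i$, such that on $\rho_{i+1}\Sigma^*$ one has the identity $\vy_j^{(l)} \text{ in } T_i \; = \; \vy_j^{(l-1)} \text{ in } T_{i+1}$ for every $j \in [1,n]$ and every layer $l$ of $T_i$. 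After $L$ steps, $T_L$ has depth $0$ and is a $c_L$-transformer, so each of its layer-$0$ activations depends on at most $c := c_L$ input positions. Set $K := \max_{i \leq L} K_i$ and use $\rho := \rho_L$.

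Now collapsing the chain: iterating the matching property, for every layer $l \in [1,L]$ and every position $i$, the activation $\vy_i^{(l)}$ in $T$ equals $\vy_i^{(0)}$ in $T_l$ on $\rho\Sigma^*$, hence depends on at most $c_l \leq c$ input positions. The case $l = 0$ is trivial since the original layer-$0$ activations already depend on one position each. By construction, $\rho$ leaves at least $C_L N > C N$ free positions whenever $N \geq K$, giving both conclusions of the lemma; the constants $c$ and $K$ depend only on $L$, $H$, $|\Sigma|$, and the target fraction $C$, as required.

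The only real subtlety is verifying that the Depth Reduction Lemma as stated in Lemma~\ref{lemma:depth-reduction-lemma} preserves \emph{all} intermediate activations and not merely the final output; this is where we need the strengthened form recorded in Appendix~\ref{app:uhat-cot-bound} (the footnote in its proof confirms that the replacement $T'$ is literally obtained by collapsing the first layer, so every upper-layer activation of $T$ becomes an activation of $T'$ one layer lower). Apart from that, the argument is purely bookkeeping: pick the $C_i$ with enough headroom and iterate.
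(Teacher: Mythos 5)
Your proposal is correct and takes essentially the same approach as the paper, whose own proof is a one-liner (``By applying Lemma~\ref{lemma:depth-reduction-lemma} iteratively, $L$ times''); you have simply filled in the bookkeeping of that iteration, including the key observation that $T$ is initially a $1$-transformer and that the strengthened Depth Reduction Lemma preserves all intermediate activations. The only cosmetic slip is that your example sequence $C_i = C + (L-i)\frac{1-C}{L+1}$ gives $C_L = C$ rather than $C_L > C$, but since the target inequality is $\geq CN$ rather than $> CN$, this does not affect the argument.
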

\begin{proof}
By applying Lemma~\ref{lemma:depth-reduction-lemma} iteratively, $L$ times.
\end{proof}

\begin{remark}
    A reader might wonder why, instead of introducing Lemma \ref{app:lemma-restriction-strengthened}, one cannot simply apply Theorem 1 of \citet{hahn2020theoretical} independently to each layer of a Transformer. The reason is that the latter approach would only ensure the existence of $L$ independent restrictions -- one for each layer --whereas Lemma \ref{app:lemma-restriction-strengthened} guarantees that a single restriction fixes each $y_i^{(l)}$.
\end{remark}

We now conclude the key result, Theorem~\ref{thm:uhat-cot-bound}:
\begin{theorem}[Restated from Theorem \ref{thm:uhat-cot-bound}]
Assume that $f$ has a UHAT-expressible CoT $g(x)$ of length $|g(x)| = o\left(|x|\right)$.
Choose any $C \in (0,1)$. Then there is a restriction $\rho$ such that
\begin{enumerate}
        \item $|\{i \leq N : \rho_N(i) = *\}| \geq Cn$ for  sufficiently large $N$
        \item For each $N \in \mathbb{N}$, $f$ is constant on $\Sigma^N \cap \rho\Sigma^*$
    \end{enumerate}

\end{theorem}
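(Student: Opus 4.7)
The plan is to invoke Lemma~\ref{lemma:restriction-strengthened} on the transformer processing the original input $x$ alone, and then iteratively fix a small number of additional $x$-positions so as to pin down every CoT token in turn. Fix any $C_1 \in (C, 1)$, e.g.\ $C_1 = (C+1)/2$. Applying Lemma~\ref{lemma:restriction-strengthened} to $T$ on inputs of length $N = |x|$ with constant $C_1$ yields a restriction $\rho_0$ on the $N$ input positions that leaves at least $C_1 N$ positions free and, within $\rho_0\Sigma^*$, makes every activation $\vy_i^{(l)}$ with $i \leq N$ and $l \leq L$ depend on at most $c$ positions of $x$, where $c$ is a constant depending only on $L$, $H$, and $|\Sigma|$.

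Next, I build a chain $\rho_0 \prec \rho_1 \prec \cdots \prec \rho_K$ by successively adding $x$-position fixings so that $\rho_k$ forces $g(x)_1, \ldots, g(x)_k$ to be specific constants $\sigma_1, \ldots, \sigma_k$ on $\rho_k \Sigma^*$. For $k=1$, the activation $\vy_N^{(L)}$ that yields $g(x)_1$ depends on at most $c$ positions of $x$ within $\rho_0\Sigma^*$, so fixing those produces $\rho_1$. For $k > 1$, assume inductively that $g(x)_1, \ldots, g(x)_{k-1}$ are constants under $\rho_{k-1}$. When $T$ is run on the length-$(N+k-1)$ sequence $x\sigma_1\cdots\sigma_{k-1}$, causal masking guarantees that activations at positions $\leq N$ coincide with those obtained from processing $x$ alone, so each still depends on $\leq c$ positions of $x$. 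A short induction on the layer index then shows that any activation at a CoT position depends on at most $D := c(H+1)^L$ positions of $x$: at layer $0$ each CoT embedding is a constant (dependency $0$), and at every higher layer each of the $H$ attention heads (plus the skip connection) either attends to a position $\leq N$---contributing $\leq c$ dependencies via Lemma~\ref{lemma:restriction-strengthened}---or attends to an earlier CoT position, whose dependency is controlled by the inductive hypothesis. Fixing the $\leq D$ positions of $x$ on which $\vy_{N+k-1}^{(L)}$ depends then yields $\rho_k$ and makes $g(x)_k$ constant.

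After $K = |g(x)| = o(N)$ such steps, setting $\rho := \rho_K$ fixes the entire CoT and therefore $f(x) = g(x)_K$ on $\rho\Sigma^*$. The total number of $x$-positions added on top of $\rho_0$ is at most $DK = o(N)$, so $|\{i \leq N : \rho(i) = *\}| \geq C_1 N - DK \geq CN$ for all sufficiently large $N$, by the choice $C_1 > C$, which is exactly the two conclusions of the theorem. The main obstacle is the layer-by-layer bookkeeping for CoT-position activations: a priori the attention tree can branch $(H+1)^L$ times, and dependencies can compound across distinct CoT positions that refer to each other through attention at intermediate layers. The crux is to verify that this recursion is genuinely short-circuited whenever the attention tree hits an $x$-position (via the $c$-bound of Lemma~\ref{lemma:restriction-strengthened}) and grounded at layer $0$ (by the constancy of already-fixed CoT embeddings), so that the per-CoT-token bound $D$ remains architecture-dependent but independent of $N$ and $K$.
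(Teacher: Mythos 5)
Your high-level strategy---invoke Lemma~\ref{lemma:restriction-strengthened} on the raw input $x$, then iterate over CoT tokens adding a constant number of extra fixings each---is the right one and matches the paper's. The problem is in the layer-by-layer accounting, which papers over the central difficulty.

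You claim that for a CoT position, ``each of the $H$ attention heads [\ldots] either attends to a position $\leq N$---contributing $\leq c$ dependencies---or attends to an earlier CoT position.'' But within $\rho_0\Sigma^*$ the \emph{choice} of which position a head attends to is itself data-dependent, and that choice is not captured by the $c$-bound. A one-layer counterexample: suppose the query at a CoT position is constant, each key $K_j$ depends only on $x_j$, and the attention score at $j$ is $1$ if $x_j=1$ and $0$ otherwise, with leftmost tie-breaking. Then the attended position is the leftmost $j$ with $x_j=1$, and the head's output depends on \emph{all} input positions to the left of that index---despite every individual activation depending on $\leq c=1$ inputs. So the claim ``dependency controlled by $c$'' is false as stated, and your bound $D = c(H+1)^L$ does not hold. (This is precisely what the paper flags in ``Why is Theorem~\ref{thm:parity-bound} nontrivial?'': the attention pattern is input-dependent even when each individual activation is locally determined.)

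The fix is the paper's additional ``distraction'' step, which your proposal omits. For each CoT position $N+k-1$, layer $l$, head $h$: compute $A_j := \max_{x\in\rho\Sigma^*} a^{(l,h)}_{N+k-1,j}$ over all candidate positions $j$, pick $\widehat{j}$ that maximizes $A_j$ (with the tie-breaking rule), and then \emph{fix at most $c$ extra input positions} so that $\vy^{(l-1)}_{\widehat{j}}$ actually attains the score $A_{\widehat{j}}$. After this, no other position can beat $\widehat{j}$, so the argmax is forced to $\widehat{j}$ uniformly on the restricted set. Iterating over all $L$ layers and $H$ heads in order, each CoT token costs at most $c\cdot H\cdot L$ new fixings---additive, not multiplicative---so the total is $c\cdot H\cdot L\cdot|g(x)| = o(N)$, which is what one needs. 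Without this explicit pinning of the argmax, the induction you propose does not go through.
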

\begin{proof}
First, we obtain a restriction $\rho^{(0)}$, and integers $c$, $k$ from Lemma~\ref{lemma:restriction-strengthened}, applying $T$ only on the input $x \in \Sigma^*$ itself.

For any $N > k$, we now consider the following. Let $M := \max_{|x| = N} |g(x)|$. Without loss of generality, we can pad the CoTs for all $x$ with $|x|=N$ to have length $M$.

Let $g(x) = g_1 \dots g_M \in \Xi^M$.

We now construct a sequence $\rho^{(0)} \prec \rho^{(1)} \prec \rho^{(2)} \prec \rho^{(3)} \prec ... \rho^{(M)}$ while maintaining the following properties for $k = 1\dots M$ ($\dagger$):
\begin{enumerate}
    \item $\left|\{i : \rho^{(k)}_N(i) = * \}\right| \geq CN - k \cdot c\cdot H \cdot L$
    \item For each $l \in [0,L]$, $i \in [1,k-1]$, the activation $\vy^{(l)}_{N+i}$ is constant on all strings in $\Sigma^N \cap \rho^{(k)}_N\Sigma^*$.
    \item $g_1 \dots g_k \in \Xi^k$ is constant on all strings in $\Sigma^N \cap \rho^{(k)}_N\Sigma^*$.
\end{enumerate}
We prove this by induction.

\textbf{Inductive Base ($k=1$)} 
We can fix $\vy_{N}^{(1)}, \dots, \vy_{N}^{(L)}$ by fixing at most $\leq c\cdot L$ input positions, obtaining $\rho^{(1)} \succ \rho$, with
\begin{equation}
    \left|\{i : \rho^{(1)}(i) = * \}\right| \geq CN - c \cdot L \geq CN - 1 \cdot c \cdot H \cdot L
\end{equation}
Since $g_1$ is determined by $\vy_{N}^{(L)}$, this also fixes $g_1$.

\textbf{Inductive Step ($k>1$)}
Assume that the claim has been shown for all $k' < k$ (Inductive Hypothesis).
Note that $g_k$ is determined by $\vy_{N+k-1}^{(L)}$.
By the Inductive Hypothesis, $g_{k-1}$ is fixed by $\rho^{(k-1)}$; hence, $\vy_{N+k-1}^{(0)}$ also is.
We now perform the following construction for each layer $l=1, 2, \dots, L$, and -- within each layer -- for each head $h=1, \dots, H$.
We iteratively expand $\rho^{(k-1)}$ by restricting more input tokens.

At a given layer $l$ and head $h$, let $\tilde{\rho} \succ \rho^{(k-1)}$ be the restriction obtained after treating layers $1, \dots, l-1$ and, within layer $l$, heads $1, \dots, h-1$.
The activation $\vy_{N+k-1}^{(l-1)}$ is now already fixed by $\tilde{\rho}$.
Our goal is now to restrict a few more input positions to fix the attention of this head to a specific position, and thus fix its output.
For this, determine the maximal value of any attention score over all inputs satisfying the restriction, and fix input tokens so that this value is achieved, forcing the attention head to attend to a specific position.
Formally, for each $j \in [1,N+k-1]$, we consider
\begin{equation}
A_{j} := \max_{x \in \Sigma^N \cap \rho^{(k-1)}\Sigma^*} a_{N+1,j}^{(l,h)}
\end{equation}
Now let $\widehat{j}$ be the $j$ that maximizes $A_j$ (under the tie-breaking procedure, e.g. choosing the leftmost one, if more than one attain the same $A_j$).
One possibility is that $\widehat{j} \in [N+1, \dots, N+k-1]$; then attention is guaranteed to fall onto this position because each relevant activation is constant across $\rho^{(k-1)} \Sigma^*$ by Inductive Hypothesis.
The more interesting possibility is when $\widehat{j} \in [1,N]$.
Note that $\vy_{\widehat{j}}^{(l-1)}$ depends only on $\leq c$ input tokens in $\rho \Sigma^*$; hence, we can expand the restriction $\tilde{\rho}$ by fixing $\leq c$ additional input symbols to force $\vy_{\widehat{j}}^{(l-1)}$ to take on a value leading to that maximal attention score $a_{N+k-1,{\widehat{j}}}^{(l,h)}$.
Overall, performing this sequentially on each layer and head, we ultimately fix all activations $\vy_{N+k-1}^{(l)}$ and hence $g_k$, with a restriction $\rho^{(k)}$ that restricts at most $\leq c \cdot L \cdot H$ further input tokens beyond $\rho^{(k-1)}$.

This concludes the inductive proof of the claim ($\dagger$).
Overall, by taking $\rho := \rho^{(M)}$, we have fixed an additional $\leq c\cdot H\cdot L \cdot |g(x)| = o(N)$ input positions to fix all CoT tokens.
As the output $f(x)$ is part of the CoT, $f(x)$ must also be constant across all $x \in \Sigma^N \cap \rho_{M}\Sigma^*$.
\end{proof}

\begin{remark}\label{remark:causal}
We assume, by default, causally masked attention throughout the transformer (Section~\ref{sec:model-transformers}), for consistency with typical modern language models.
Our results are also compatible with setups where attention is bidirectional on the input and causal masking at most applies during CoT generation, which is the setup assumed by \citet{abbe2024how}.
As explained in the proof of the Depth Reduction Lemma, it holds independently of whether causal masking is applied to the input or not.
\end{remark}

\subsection{Proof of Theorem~\ref{theorem:dot-by-dot} (Barriers for Dot-by-Dot Scratchpads)}\label{app:theory-dot-by-dot}

\begin{theorem}[Repeated from Theorem~\ref{theorem:dot-by-dot}]
Consider a UHAT-expressible CoT for \textsc{Parity} where $g(x)$ has the form
\begin{equation}
.\dots . \# f(x) \ \ \ \ \ \ \ \ \ \ \ \ \text{(``dot-by-dot CoT'')}
\end{equation}
    This CoT has length $\omega(|x|^k)$ for all $k > 0$.
\end{theorem}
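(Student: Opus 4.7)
The plan is to prove the theorem by contradiction, applying Lemma~\ref{lemma:restriction-strengthened} directly to the transformer as it processes the full dot-by-dot prefix --- rather than going via Theorem~\ref{thm:uhat-cot-bound}, which would only yield a linear bound.

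Suppose for contradiction that $|g(x)| \leq N^k$ for some fixed $k \in \mathbb{N}$, and let $n' := N + |g(x)| - 1$ be the length of the prefix $x \cdot . \cdots . \cdot \#$ on which the transformer must produce $\textsc{Parity}(x)$ at position $n'$. I would apply Lemma~\ref{lemma:restriction-strengthened} to the family $T_N$ (viewed as operating on inputs of length $n'$) with restriction parameter $C_N := 1 - 1/(2 n')$. Then $(1 - C_N) n' = 1/2$, and since the number of fixed positions is a nonnegative integer, the resulting restriction $\rho$ in fact leaves \emph{every} one of the $n'$ positions free. The lemma then bounds the number of input positions on which the final activation $\vy_{n'}^{(\numLayers)}$ can depend by some $c = c(C_N, \numLayers, H, |\Sigma|)$.

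On the valid dot-by-dot inputs, the transformer must compute $\textsc{Parity}(x)$; since $\textsc{Parity}$ has full sensitivity on the $x$-coordinates, the output function must depend on all $N$ of the $x$-positions, forcing $c \geq N$. On the other hand, tracking the $\numLayers$-fold iterated depth-reduction that proves Lemma~\ref{lemma:restriction-strengthened} (via Lemma~\ref{lemma:depth-reduction-lemma}), each collapse step inflates the dependency count by a factor logarithmic in $1/(1 - C)$, so $c$ grows only polylogarithmically in $1/(1 - C_N) = 2 n' = \mathcal{O}(N^k)$; that is, $c = \mathcal{O}((\log N)^{\numLayers}) = o(N)$. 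This contradicts $c \geq N$ for all sufficiently large $N$, so no polynomial-length dot-by-dot CoT can express $\textsc{Parity}$, and $|g(x)| = \omega(|x|^k)$ for every $k$.

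The main technical obstacle is making the dependence of $c$ on $C$ explicit, since the stated form of Lemma~\ref{lemma:restriction-strengthened} absorbs $C$ into the ``depending-on'' list. Concretely, one must follow the iterated depth-reduction step by step and track how the integer $k$ in Lemma~\ref{lemma:depth-reduction-lemma} (said to depend on $C'$ in each iteration) scales as $\log(1/(1 - C'))$, which is the standard switching-lemma accounting. A secondary subtlety is that the lemma's conclusion is stated on $\rho \Sigma^{n'}$ rather than on the subset of valid dot-by-dot inputs; but since the bound on dependency holds over the full restricted input space, it automatically carries over to any subset including the valid inputs, so no extra ``biasing'' of the restriction toward the correct CoT symbols is actually needed.
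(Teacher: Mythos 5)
Your approach does not work, and the key flaw is in the choice $C_N := 1 - 1/(2n')$. Lemma~\ref{lemma:restriction-strengthened} (and the underlying Depth Reduction Lemma) is proven for a \emph{fixed constant} $C \in (0,1)$, and the constants $c$, $k$ are derived under that assumption. More importantly, with your choice $(1-C_N)n' < 1$, the resulting restriction $\rho$ is forced to be the trivial restriction (all positions left free), so $\rho\Sigma^{n'} = \Sigma^{n'}$. The conclusion of the lemma would then say that, on \emph{all} inputs, the final activation of the UHAT transformer is a function of a fixed set of $\leq c$ input positions. This is simply false: even a one-layer UHAT transformer (e.g.\ one that attends to the last occurrence of a $1$ and reads off the adjacent symbol) can have its hard-attention head land on arbitrarily many different positions across inputs, and its output is not determined by any bounded set of coordinates. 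The entire point of the random-restriction technique --- here, as in the classical Switching Lemma --- is that one must \emph{pay} by fixing a constant fraction of positions to force all attention heads into a locally determined pattern; that payment cannot be made vanishingly small. Your polylog accounting is also off for a second, independent reason: the blowup $c \mapsto c(|\Sigma|^c k H + 1)$ is iterated over $L$ layers and is exponential in $c$, so even if $k$ scaled as $\log(1/(1-C'))$ per step, $c$ after $L$ steps would be a tower of exponentials in $\log(1/(1-C_N)) = \Theta(\log n')$, not $\mathcal{O}((\log N)^{\numLayers})$.

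The paper's actual proof takes an entirely different and much shorter route. Since the dot-by-dot prefix $x.\dots.\#$ contains no informative CoT tokens, the output at the final position is a UHAT computation on an input of length $n = N + |g(x)| - 2$, hence lies in $AC^0$ \citep{hao2022formal} with circuit size $\mathrm{poly}(n)$. Fixing the non-$x$ positions to their required symbols yields an $AC^0$ circuit on the $N$ bits of $x$ computing \textsc{Parity}, still of size $\mathrm{poly}(n)$. The Linial--Mansour--Nisan / Boppana bound gives $as_N = \mathcal{O}(\mathrm{polylog}(n))$, yet \textsc{Parity} has $as_N = N$; hence $N = \mathcal{O}(\mathrm{polylog}(N + |g(x)|))$, forcing $|g(x)| = \omega(N^k)$ for all $k$. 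If you want to avoid citing the $AC^0$ sensitivity bound, you should look for a different strengthening of the restriction argument, rather than pushing $C$ toward $1$.
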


\begin{proof}
In such a CoT, we can consider the suffix $f(x)$ as the target of prediction given the input $x \dots \#$ of length $n := |x| + |g(x)| - 2$, with $N := |x|$.
The function predicting $f(x)$ based on a prefix $ x \dots \#$ acts on an input of length $n = N + |g(x)| -2$, where $N = |x|$, and has average sensitivity $N$. 
We note the following fact: Due to the inclusion of UHAT in $AC^0$ \citep{hao2022formal}, and the known bound on the average sensitivity of $AC^0$ circuits (Corollary 12.14 in \citet{jukna2012boolean}); originally due to \citet{linial1993constant,boppana1997average}),  we have $as_N(f) = \mathcal{O}(poly(\log(n)))$.
        Hence, $N = \mathcal{O}(poly(\log(N + |g(x)|)))$; hence $|g(x)|$ cannot be bounded by a polynomial of $N$.
\end{proof}

\begin{theorem}[Exponentially-Sized Dot-by-Dot Scratchpad for \textsc{Parity}]
    \textsc{Parity} has a UHAT dot-by-dot CoT of length $\mathcal{O}(\exp(N))$
\end{theorem}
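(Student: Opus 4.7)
To prove the claim, I would exhibit an explicit dot-by-dot CoT of length $(N{+}1)\cdot 2^{N}+2 = O(\exp(N))$ together with a UHAT transformer of constant depth and constant number of heads that realises it. The strategy is to use the exponentially many scratchpad positions to enumerate all length-$N$ bit-strings, so that the final prediction reduces to a brute-force lookup rather than a genuine $N$-fan-in computation.

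Concretely, the CoT consists of $(N{+}1)\cdot 2^{N}$ dots followed by $\#$ and $f(x)$. I would partition the dots into $2^{N}$ groups indexed by $b\in\{0,1\}^{N}$; within group $b$, allocate $N$ ``comparison'' positions $(b,1),\dots,(b,N)$ and one ``aggregator'' position $(b,0)$, laid out so that all comparisons of a group precede its aggregator, and all aggregators precede the $\#$ position. The positional encoding at $(b,i)$ would carry the vector $b$, the index $i$ (with $i{=}0$ flagging the aggregator), a dot-vs-input-vs-$\#$ marker, and the parity $\pi(b):=\bigoplus_{j}b_{j}$. Positional encodings of width $\Theta(N)$ are admissible, since Definition~\ref{def:uhat-computable} bounds only depth and heads, not width.

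The transformer would have three layers with $O(1)$ heads. In layer~1, the head at each comparison position $(b,i)$ attends to input position $i$ via the standard quadratic positional-score construction (scoring candidate position $j$ by $-(j-i)^{2}$, with a large penalty on non-input positions), and a position-wise MLP records the bit $c(b,i):=\mathbf{1}[x_{i}=b_{i}]$ in the residual stream. In layer~2, the head at aggregator $(b,0)$ restricts attention to comparison positions with matching group label $b$ (via a large penalty proportional to $\|b-b'\|^{2}$) and selects the one minimising $c(b,i)$; the extracted value then equals $m(b):=\bigwedge_{i}c(b,i)=\mathbf{1}[b=x]$. In layer~3, the head at the final position $F$ (the $\#$-token position, where $f(x)$ is predicted) attends to the unique aggregator $(b^{\ast},0)$ with $m(b^{\ast})=1$ and reads $\pi(b^{\ast})$ directly from its positional encoding; since $b^{\ast}=x$, this is the parity of $x$. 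A suitable output matrix $\mO$ then emits parity at $F$, $\#$ at the last dot, and ``.'' everywhere else.

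The main technical obstacle is routine bookkeeping: verifying that each of the three attention patterns can be realised by linear $Q,K$ matrices acting on activations of the prescribed shape, and that UHAT tie-breaking does not corrupt the computed bits. The delicate case is layer~2, where several comparison positions within a group may tie for the minimum of $c(b,i)$; but all tied positions carry the same value, so the extracted $m(b)$ is unambiguous. Layer~3 is tie-free, since $b\mapsto m(b)$ has a unique preimage of $1$. Causal masking is respected by the chosen layout. Altogether, the CoT length is $O(N\cdot 2^{N})\subseteq O(\exp(N))$, matching the statement.
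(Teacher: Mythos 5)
Your construction is correct and establishes the statement, but it routes the computation differently from the paper. The paper uses only $2^N + 2$ dots and a two-layer, one-head UHAT: each dot position $i$ is assigned a candidate bitstring $\xi^{(i)}$, and a \emph{single} attention head at that position emits a query that scores input position $j$ by $\mathbf{1}[x_j \neq \xi^{(i)}_j]$ (with a penalty on scratchpad positions). Under UHAT's max rule, the head attends to a mismatch if one exists; the MLP at position $i$, knowing $\xi^{(i)}$ via the residual stream, then decides in one step whether $x = \xi^{(i)}$. A second-layer head at the post-dot position then picks the unique matching position and reads off the hard-coded parity. You instead allocate $N$ ``comparison'' dots per candidate $b$ plus an aggregator, so that each bit comparison gets its own scratchpad position, and you pay for this with an extra layer and a factor-$N$ blowup in CoT length to $(N{+}1)\cdot 2^N + 2$. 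Both are $O(\exp(N))$, so the statement holds either way; the paper's version is simply tighter, exploiting that in UHAT a \emph{single} head can search for a mismatch across all $N$ positions (by putting the comparison into the attention score rather than the residual stream), whereas your version decomposes that fan-in-$N$ check into $N$ explicit scratchpad positions. Your tie-breaking analysis in layers 2 and 3 is sound, and allowing positional-encoding width $\Theta(N)$ is indeed admissible under Definition~\ref{def:uhat-computable}.
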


\begin{proof}
    We lay out a construction for a dot-by-dot scratchpad of length $2^N + 2$, expressible in a UHAT Transformer with two layers, one attention head, and $d = 2N$ hidden dimensions. 
    Each of the first $2^N$ positions in a scratchpad is assigned a unique bitstring $\xi^{(i)} \in \{0,1\}^N$ ($i \in [1,2^N]$): that is, $i$-th token in a scratchpad (or $N+i$-th token overall) corresponds to a binary encoding of $i - 1$.

    The construction works as follows. In the first layer, each of the $2^N$ scratchpad tokens assigned with $\xi^{(i)}$ determines whether the input string is equal to $\xi^{(i)}$. In the second layer, the $(2N + 1)$-th scratchpad token gathers information from them and uses it to predict the hard-coded value of parity for the input string (which is the $(2N + 2)$-th scratchpad token).
    
    In the first layer, at the position corresponding to string $\xi^{(i)}$, an attention head sends out a query looking for positions $j$ where $x_j \neq \xi^{(i)}_j$.
For this, the query corresponding to bit string $\xi^{(i)}$, emanating from the $i$-th position in the CoT (overall, index $N+i$), has the form
\begin{equation}
    \left(\begin{matrix}
        \xi^{(i)}_1 &\dots & \xi^{(i)}_N & (1-\xi^{(i)}_1) & \dots & (1-\xi^{(i)}_N)
    \end{matrix}\right)
\end{equation}
At this position, the key and value vectors are zero.

The key and value corresponding to the $j$-th bit in the input has the form $K_j$:
\begin{equation}
    K_j = \begin{cases}
        e_j & x_j = 0 \\
        e_{N+j} & x_j = 1
    \end{cases}
\end{equation}
where $e_{\dots} \in \mathbb{R}^{2N}$ are one-hot vectors.

Keys of all tokens in the scratchpad are constant vectors of -1.

Then, the attention score 
\begin{equation}
    a_{N+i,j} = \begin{cases}
        1 & x_j \neq \xi^{(i)}_j \\
        0 & x_j = \xi^{(i)}_j \\
        < 0 & j > N
    \end{cases}
\end{equation}
If one position $j$ achieves a $1$, one of these will be selected by tie-breaking; if none does, one of the $N$ input bits will be selected. 
The MLP at the query position checks if the retrieved value vector indeed indicates a mismatch $x_j \neq \xi^{(i)}_j$, using knowledge about $\xi^{(i)}$ forwarded via the residual connection. If no mismatch is found, the transformer can conclude that $x = \xi^{(i)}$, and use that in the second layer.

In the second layer, we are interested in the $(2^N + 1)$-th scratchpad token, superceeding all positions corresponding to $\xi^{(i)}$.
At this position, the second-layer attention head attends to the unique position at which the first layer found no mismatch, and retrieves the hard-coded answer computed from the positional embedding of that position.
\end{proof}

\subsection{Proof of Fact~\ref{lemma:universality} (Universality of UHAT CoTs)}\label{app:universality}

A substantial number of constructions coding Turing machine computations into transformer CoTs have been presented in the literature \citep[e.g.][]{perez2019turing, wei2022statistically, bhattamishra2020computational, hou2024universal, qiu2024ask, malach2024autoregressive}, but they all use different assumptions about the formal model of transformers.
For self-containedness, we here provide a simple construction. 
Our construction is largely equivalent to that of \citet{wei2022statistically}, but we present it in a simplified and self-contained manner in the notation used in our paper.
We also note that constructions from \citet{bhattamishra2020computational,hou2024universal,qiu2024ask,malach2024autoregressive} can also be expressed straightforwardly in UHAT.
We discuss existing constructions from the literature at the end of this subsection.

\paragraph{Construction}

Consider a Turing machine defined by
\begin{enumerate}
    \item a finite tape alphabet $\Sigma$
    \item a finite state set $Q$
    \item the action set
    \begin{equation}
        \mathcal{A} := \{\text{LEFT}, \text{RIGHT}\} \cup \{\text{WRITE}(\sigma) : \sigma \in \Sigma\}
    \end{equation}
    \item a transition function $\delta$, mapping $\Sigma \times 
    \mathcal{Q}$ to $\mathcal{A} \times \mathcal{Q}$
    \item a start state $s_0 \in Q$
    \item a terminating set $T \subset Q$
\end{enumerate}
with the following computation:
\begin{enumerate}
\item We consider a tape with positions $0, 1, 2, \dots$.
    \item At the beginning, the machine starts at position 0 and in state $s_0$; the tape holds a finite input word starting at $0$, ending in a separator symbol. All remaining tape positions hold a blank symbol $\text{BLANK} \in \Sigma$.
    \item At each step, the next action and state are decided based on $\delta$ applied to the current state and the current tape symbol
    \item The machine stops when a state from $T$ is reached
    \item The final state indicates whether the input word was accepted or rejected
\end{enumerate}
We encode the computations as follows.
We first encode the input word as a string over the input alphabet $\Sigma$, followed by a separator symbol. 
We then construct a CoT over the infinite alphabet
\begin{equation}
    \Xi := \mathbb{N} \times  \mathcal{A} \times \mathcal{Q}
\end{equation}
For each state transition, we record (i) the  tape position after carrying out $\delta$ (an element of $\mathbb{N})$, (ii) the output of $\delta$ (an element of $\mathcal{A} \times \mathcal{Q}$).

We relate this construction to other constructions in Appendix~\ref{app:relation-literature}.

We now show that this is implemented by UHAT as defined in Section~\ref{sec:formalizing-cots}.
We need to define a set of UHATs $T_N$, operating on inputs with length $\leq N$, with uniformly bounded number of layers and heads.
We define
\begin{equation}
    \Xi_N := [0,N] \times \mathcal{A} \times \mathcal{Q}
\end{equation}

First, let $N$ be a bound on the input length.
At each position,
\begin{enumerate}
    \item The input token $\xi = (i,a,q) \in [0,N] \times \mathcal{A} \times \mathcal{Q}$ provides the  tape position $i$, the output $a$ of $\delta$, and the resulting state $q$.
    \item An attention head attends to the last step at which the Turing machine head had been at tape position $i$ while doing a WRITE operation.
    
    The key, given input token $\xi' = (i',a',q')$, is the one-hot vector indicating the tape position $i'$, an indicator for $a' \in \{\text{WRITE}(\sigma) : \sigma\}$, and a scalar indicating the position $i'$.
\begin{equation}
    \left(\begin{matrix}
        e_{i'} \in \mathbb{R}^N \\
        1_{\exists \sigma: \delta_i = \text{WRITE}(\sigma)} \in \mathbb{R} \\
        i \in \mathbb{R} 
    \end{matrix}\right)
\end{equation}
The query, given input token $\xi = (i,a,q)$, is 
\begin{equation}
    \left(\begin{matrix}
        3N e_{i} \in \mathbb{R}^N \\
        2N \in \mathbb{R} \\
        1 \in \mathbb{R} 
    \end{matrix}\right)
\end{equation}
If the tape position has previously appeared, the attention score will be maximized by the most recent position at which a write operation occurred; otherwise, it will fall somewhere else.
We define the value, given input token $\xi' = (i',a',q')$, as
\begin{equation}
    \left(\begin{matrix}
        e_{i'} \\
        e_{a'}
    \end{matrix}\right)
\end{equation}
The MLP then checks if the value is in 
\begin{equation}
    \left\{ \left(\begin{matrix}
        e_{i} \\
        e_{\text{WRITE}(\sigma)}
    \end{matrix}\right) : \sigma \in \Sigma \right\}
\end{equation}
If not, the symbol at  tape position $i$ must be what it was set to before the computation started.
Else, the symbol at the position must be as given by the action $a' \in \mathcal{A}$ retrieved.

Simultaneously, a second head attends to position $i$ and check whether it is part of the original input (i.e., precedes the separator); if it is, that position provides the symbol; if it is not, the tape position has not yet been written to.

Based on the information gathered by these two heads, the MLP then computes the next action, and outputs the new tape position, the action, and the new state.
\end{enumerate}

\paragraph{Comparison to other constructions}
Our construction is closest to that of \citet{wei2022statistically}.
It is also similar to that of \citet{qiu2024ask}.
The original Turing completeness proof for scratchpads \cite{perez2019turing} recomputed the current tape position in every step, an idea used by \citet{merrill2023expresssive}; our translation differs by keeping the tape position explicitly in the CoT, rather than recomputing it in every step; we believe that this reduces some technical challenges in the construction (at the price of making the transformer's width and parameters dependent on the input length).
\citet{bhattamishra2020computational} noted that the classical RNN construction of \citet{siegelman1991neural} can be straightforwardly replicated in a transformer using hard attention and unbounded-precision activations. This construction also is expressible in the UHAT model.
A related translation, albeit in the special case where the Turing machine produces no repeated strings, is developed in Theorem 4.1 of \citet{hou2024universal}; hence our lower bounds also hold for the ``Turing programs'' scratchpad technique proposed by \citet{hou2024universal}.
Another translation (applicable to, but not specific to transformers) is used by \citet{malach2024autoregressive}, it uses even simpler computations in each step, but may suffer a polynomial slowdown compared to Turing machines.

\section{Applications to Algorithmic Tasks}

\subsection{\textsc{Parity} and other Finite-State Languages}

\subsubsection{Proof of Corollary~\ref{thm:finite-state}:  Characterization for Finite-State Languages}\label{app:finite-state}

\begin{corollary}[Restated from Corollary~\ref{thm:finite-state}]
    Let $f$ be the membership problem of a finite-state language $L$. Then $f$ exactly one of the following holds:
    \begin{enumerate}
        \item $L \in AC^0$ and $f$ is expressible in UHAT without CoT
        \item $L$ is not decidable in $AC^0$, and any UHAT CoT for $f$ has length $\Omega(N)$
    \end{enumerate}
\end{corollary}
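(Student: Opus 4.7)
The plan is to handle the two alternatives separately. For part (1), the statement that every regular $L \in AC^0$ is expressible in UHAT (with no CoT) is already part of the transformer-expressivity literature on unique hard attention, and I would simply appeal to an appropriate construction from prior work; once such a UHAT transformer is in hand, there is no CoT whose length needs to be bounded. The two cases are moreover mutually exclusive because UHAT without CoT is contained in $AC^0$ (a standard inclusion), so case (2) precludes case (1).

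For part (2), I would proceed by a reduction from $\textsc{Parity}$ combined with Theorem~\ref{thm:parity-bound}. The key algebraic ingredient is the classical characterisation of $AC^0 \cap \mathrm{Reg}$ due to Barrington, Compton, Straubing, and Thérien: a regular language lies outside $AC^0$ exactly when its syntactic monoid contains a non-trivial group. From such a group I would extract framing words $\alpha, \beta \in \Sigma^\ast$ and a pair $v_0, v_1 \in \Sigma^\ast$ of equal length such that, for all bit-sequences $b_1, \dots, b_N$,
\[
\alpha\, v_{b_1} v_{b_2} \cdots v_{b_N}\, \beta \in L
\quad\Longleftrightarrow\quad
b_1 \oplus \cdots \oplus b_N = 0,
\]
possibly after an intermediate reduction from a mod-$p$ counter (for the prime-order $p > 2$ case) to honest binary parity by grouping bits in blocks. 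Crucially this substitution is a pointwise, bounded-blow-up token-level encoding, and is therefore trivially realisable as a UHAT preprocessing layer.

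Given such a reduction, any UHAT CoT $g_L$ for $L$ of worst-case length $T(N)$ yields a UHAT CoT for $\textsc{Parity}$ of length $T(\Theta(N))$, by applying the substitution to the parity input and then running $g_L$ on the resulting $\Theta(N)$-length string. Theorem~\ref{thm:parity-bound} then forces $T(\Theta(N)) = \Omega(N)$, and hence $T(N) = \Omega(N)$, which is the claimed lower bound. The step I anticipate to be the main obstacle is producing the substitution uniformly from an abstract non-$AC^0$ regular $L$: for concrete examples (such as $\textsc{Parity}$ itself, or ``first symbol is $a$ and the remainder has an even number of $a$s'') the words $\alpha, \beta, v_0, v_1$ can be written down by inspection, but in general one must locate a non-trivial group inside the syntactic monoid, choose $v_0, v_1$ mapping to the identity and to an element of prime order $p$, and pick $\alpha, \beta$ so that the accumulated group element lies in an accepting residue class exactly for even-parity inputs. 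The existence of this barrier is independently foreshadowed by Remark~\ref{remark:finite-state-sensitivity}, which asserts that the high-sensitivity issue of Section~\ref{sec:sensitivity} afflicts exactly the regular languages outside $AC^0$.
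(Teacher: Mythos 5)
Your treatment of case~(1) and the mutual exclusivity of the two alternatives matches the paper, and so does the group-theoretic setup for case~(2): you correctly identify the Barrington--Compton--Straubing--Th\'erien characterization, the extraction of a non-trivial group element, the same-length words $v_0, v_1$ and framing words $\alpha, \beta$. The gap is in what you do with the resulting counter, and it is more than the ``obstacle'' you flag --- the step you propose cannot be carried out.

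Your plan is to arrange $\eta_L(v_0)$ to be the identity and $\eta_L(v_1)$ to be an element $m$ of prime order $p$, so that $\alpha v_{b_1}\cdots v_{b_N}\beta \in L$ if and only if $b_1 \oplus \cdots \oplus b_N = 0$, and then invoke Theorem~\ref{thm:parity-bound}. For any such substitution the syntactic image of the inner word is $m^{\sum_i b_i}$, which depends only on $\sum_i b_i \bmod p$. When $p > 2$ this quantity cannot determine $\sum_i b_i \bmod 2$: taking $N = p$, the all-ones input gives $\sum b_i = p \equiv 0 \pmod p$ with odd parity, while the all-zeros input gives $\sum b_i = 0$ with even parity, yet both produce identical syntactic values. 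So no choice of $\alpha, \beta, v_0, v_1$ can realize your biconditional. Grouping bits into blocks does not help either: replicating each $v_1$ a constant $c$ times replaces $\sum b_i \bmod p$ with $c\sum b_i \bmod p$, still a function only of $\sum b_i \bmod p$. And this is not a corner case you can defer --- the non-trivial group supplied by the characterization need not contain any involution. The regular language ``number of $a$'s divisible by $3$'' is outside $AC^0$ and has syntactic monoid $\mathbb{Z}/3\mathbb{Z}$, which has no element of order $2$ to reduce \textsc{Parity} onto.

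The paper avoids routing through \textsc{Parity} entirely. After producing $u, v$ with $\eta_L(u) = m$ of order $k$, $\eta_L(v) = m^k$ (identity), and $\alpha, \beta$ with $\alpha u\beta$ and $\alpha v\beta$ of differing membership, it restricts attention to inputs $\alpha(u\,|\,v)^*\beta$, on which $L$-membership is a non-constant function of the number of $u$-blocks modulo $k$. It then applies the generic Theorem~\ref{thm:uhat-cot-bound} directly: a sublinear CoT would yield a restriction leaving at least $CN$ of the $N$ block positions free on which membership is constant; but once $CN \geq k$ the free blocks can be toggled between $u$ and $v$ to realize every residue modulo $k$, so membership is not constant --- a contradiction. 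In short, your reduction-from-\textsc{Parity} argument works only when the extracted group element has order $2$; for odd prime order it must be replaced, not patched, by this direct application of Theorem~\ref{thm:uhat-cot-bound} to the modular counting function.
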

\begin{proof}
We show this using the characterization of regular languages in $AC^0$ by \citet{barrington1992regular} and the follow-up result on hard-attention transformers by \citet{yang2024masked}.

First, if $L \in AC^0$, then it is definable in UHAT (without CoT) by Corollary~8 of \citet{yang2024masked}.

Conversely, if $L \not\in AC^0$, then by Theorem 3 of \citet{barrington1992regular}, the syntactic morphism of $L$ is not quasi-aperiodic.
That is, if $\eta_L$ is the syntactic morphism of $L$\footnote{We refer to \citet{barrington1992regular} for the relevant definition.}, then there is $t \in \mathbb{N}$ such that $\eta_L(\Sigma^t)$ contains a nontrivial group $G$.
As in the proof of Theorem 3 of \citet{barrington1992regular}, we now consider an element $m \in G$ of order $k>0$, and define $M' := \{m, m^2, \dots, m^k\}$ to be the subgroup defined by $m$; its identity element is $m^k$.
We can then find strings $u, v \in \Sigma^t$ such that $\eta_L(u) = m$, $\eta_L(v) = m^k$.

Namely, there are strings $\alpha, \beta \in \Sigma^*$ such that either $\alpha u \beta \in L$, $\alpha v \beta \not\in L$, or the other opposite holds. By passing from $L$ to its complement if needed, we may assume without loss of generality that $\alpha u \beta \in L$, $\alpha v \beta \not\in L$.
We now consider inputs of the form
\begin{equation}
    \alpha (u|v)^* \beta
\end{equation}
Now assume that a UHAT CoT with length $|g(x)| = o(|x|)$ exists for deciding, on such inputs, membership in $L$. 
By hard-coding the (fixed) strings $\alpha, \beta$, we can convert this into a UHAT CoT performing the same problem with only the inner part, a word in $(u|v)^*$, given as input. As $|u|=|v|$, we can further convert this into a UHAT CoT over a new alphabet $\Sigma' = \{a, b\}$, translating $u \in \Sigma^t$ to $a \in \Sigma'$ and $v \in \Sigma^t$ to $b \in \Sigma'$, cutting the string length by a constant factor of $t = |u| = |v|$.
We now reach a contradiction: On inputs of the chosen form, checking membership in $L$ amounts to counting how often $u$ (or, after changing the alphabet, $a$) appears modulo $k$. Fixing any constant fraction of symbols cannot fix the result. By Theorem~\ref{thm:uhat-cot-bound}, the UHAT CoT cannot have had length $|g(x)| = o(|x|)$ in the first place.
Hence, any UHAT CoT for deciding membership in $L$ has length $\Omega(N)$.
\end{proof}

\begin{remark}\label{remark:finite-state-sensitivity}
We also mentioned, below Theorem~\ref{thm:finite-state}, that  $f$ is affected by the sensitivity-based barrier discussed in Section~\ref{sec:sensitivity} if and only $L \not\in AC^0$.
This is seen as follows.
First, if $L \in AC^0$, then by the result of \citet{boppana1997average}, $as_N(f) = \mathcal{O}(poly(\log(N))) = o(N)$.
Second, if $L \not\in AC^0$, then the deciding membership of strings from $\alpha(u|v)^* \beta$ as defined in the proof of Theorem~\ref{thm:finite-state} amounts to counting how often $u$ appears modulo $k$.
Any function $\Sigma^* \rightarrow \{0,1\}$ that exhibits this behavior on strings from $\alpha(u|v)^* \beta$ must have a high sensitivity.
Overall, this means that shortcuts to finite-state automata without linear-length CoT \citep{liu2022transformers} are likely to be learned well on long inputs if and only the automaton can be simulated in $AC^0$.
\end{remark}

\subsubsection{Experiment}\label{app:parity-experiment}
We trained a 2-layer 2-head transformer using the GPT-2 architecture \citep{radford2019language}, with $d=256$.
Each model was trained at a fixed input and CoT length.
Each training and testing input was generated randomly on the fly.
Training was for 30K steps, at a batch size of 64, with AdamW (learning rate 1e-4).

\subsection{Arithmetic}
\subsubsection{Proof of Theorem~\ref{thm:multiplication} (Lower Bound for \textsc{Multiplication)})}\label{sec:multiplication-lower-bound-proof}

\begin{figure}
    \centering
    \textsc{Addition} \ \ \ \ \ \ \ \ \ \ \ \ \ \ \ \ \ \ \ \  \ \ \ \ \ \ \ \ \ \ \ \ \ \ \ \ \ \ \ \ \ \ \ \ \ \ \ \ \ \ \ \ \ \ \ \ \ \ \ \  \ \ \textsc{Multiplication}
    
    \includegraphics[width=0.45\linewidth]{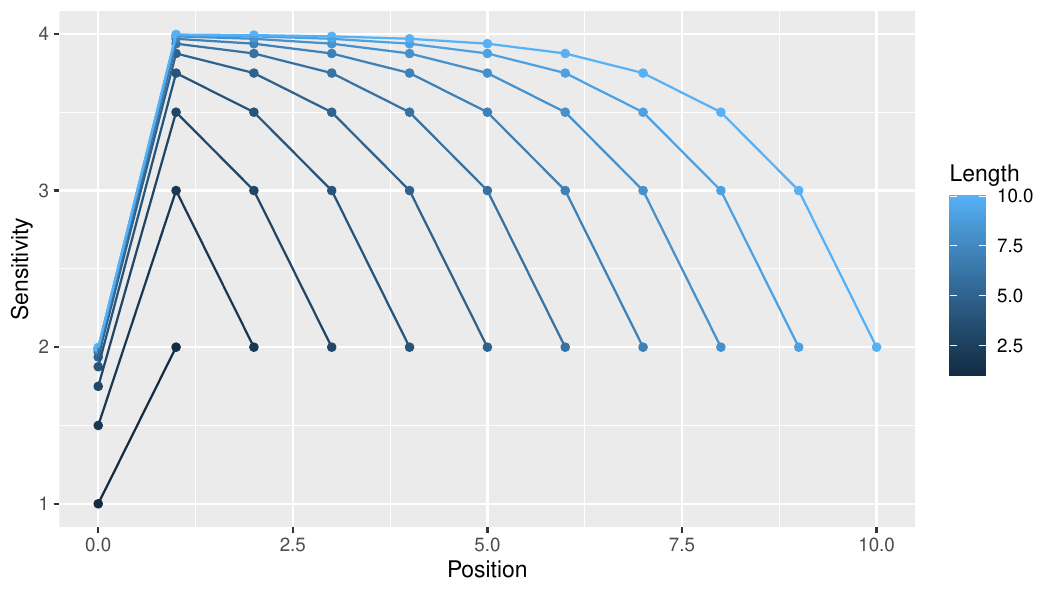}
    \includegraphics[width=0.45\linewidth]{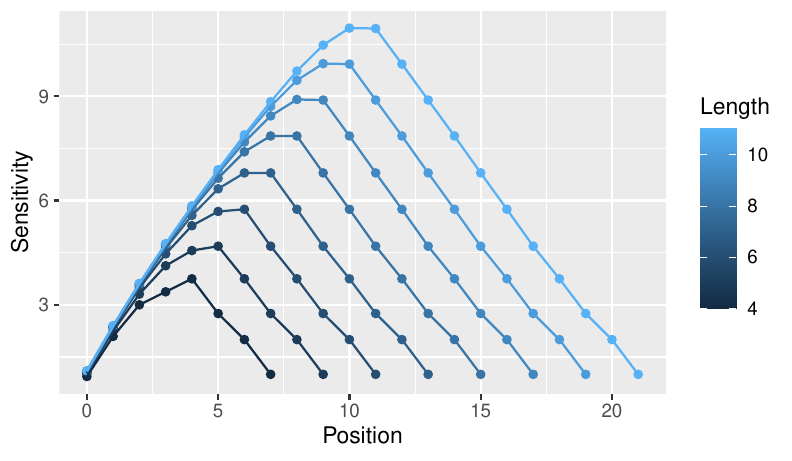}
    \caption{Arithmetic.
    Average sensitivity (Section~\ref{sec:sensitivity}) of the results digits as a function of the length $N$ of the operands (colored lines) and the position $k$ (x-axis).  The most significant digit is at position $k=0$.
    \newline
    Left: \textsc{Addition}.     Sensitivity grows sublinearly with $n$.
    \newline
    Right: \textsc{Multiplication}. Sensitivity grows rapidly with the length of the multiplicands, and is highest in the middle, where sensitivity shows linear growth with $n$.
    }
    \label{fig:sensitivity-multiplication}
\end{figure}

\begin{figure}
    \centering
    \includegraphics[width=0.45\linewidth]{plots/Digit_sensitivity_autoregressive.png}
        \includegraphics[width=0.45\linewidth]{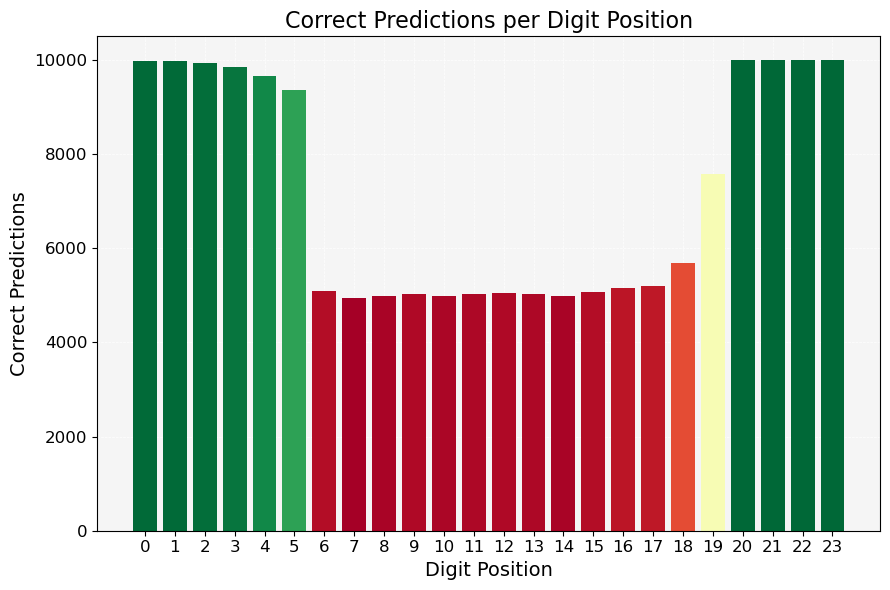}
    \caption{Accuracy of transformers trained on 12-digit multiplication (most significant digit at the left, zero-padded on the left), either autoregressively (left) or in parallel (right), on a test set with 10K number pairs. In both setups, the high-sensitivity digits in the middle (compare Figure~\ref{fig:sensitivity-multiplication} Right) show substantially decreased accuracy even if digits at beginning and end are predicted exactly.}
    \label{fig:multiplication-accuracy-by-digit}
\end{figure}

\begin{theorem}
A UHAT CoT for $M_N$ requires length $\Omega(N)$.
\end{theorem}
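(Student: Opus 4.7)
The plan is to apply Theorem~\ref{thm:uhat-cot-bound} to $M_N$ and derive a contradiction from any putative sublinear CoT. Suppose for contradiction that $M_N$ admits a UHAT CoT of length $o(N) = o(|x|)$, where $|x| = 2N$. Then for any $C \in (0,1)$, Theorem~\ref{thm:uhat-cot-bound} yields a restriction $\rho$ leaving $\ge 2CN$ positions free such that $M_N$ is constant on $\rho\Sigma^*$. Choosing $C$ sufficiently close to $1$, only $o(N)$ positions are fixed, so by pigeonhole each of $X$ and $Y$ retains at least $(1-\epsilon)N$ free bits for any prescribed $\epsilon > 0$. Let $S_X, S_Y \subseteq [0,N-1]$ denote the free-bit sets, with $|S_X|, |S_Y| \ge (1-\epsilon)N$, and let $X_0, Y_0$ be the values obtained by setting all free bits to $0$.

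The heart of the argument is to show that $M_N$ is necessarily non-constant on any such $\rho\Sigma^*$, contradicting the above. First, since $|S_X| + |S_Y| > N+1$ for small $\epsilon$, a simple pigeonhole argument (essentially $|S_X \cap (N - S_Y) \cap [1,N-1]| \ge |S_X| + |S_Y| - N - 1$) produces indices $i^* \in S_X$ and $j^* \in S_Y$ with $i^* + j^* = N$. I then compare the configuration with all free bits $0$ to the one setting $x_{i^*} = y_{j^*} = 1$ (other free bits still $0$): their products differ by $X_0\cdot 2^{j^*} + Y_0\cdot 2^{i^*} + 2^N$. Crucially, since $i^* \in S_X$ and $j^* \in S_Y$ are free, bits $i^*$ of $X_0$ and $j^*$ of $Y_0$ are both $0$, so the first two terms contribute $0$ at bit $N$ directly, while $2^N$ contributes $1$ at bit $N$ directly. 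In the toy case $X_0 = Y_0 = 0$ this immediately yields $M_N = 0$ versus $M_N = 1$ on the two configurations.

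The main obstacle, and where the bulk of the work will go, is handling carry propagation from positions below $N$ in the general case when $X_0$ and $Y_0$ are non-zero, since those carries could in principle cancel the $+2^N$ flip. The plan to overcome this is to exploit the other $\Omega(N)$ available diagonal pairs: by the same pigeonhole estimate, there are $\Omega(N)$ pairs $(i_k^*, j_k^*)$ with $i_k^* + j_k^* = N$, $i_k^* \in S_X$, $j_k^* \in S_Y$. Varying $(x_{i_k^*}, y_{j_k^*})_k$ jointly produces an exponentially large family of configurations in $\rho\Sigma^*$; expanding the product gives a ``diagonal'' contribution $(\sum_k u_k v_k)\cdot 2^N$ at bit $N$, plus off-diagonal terms at positions $N + (i_k^* - i_l^*)$ whose effect on bit $N$ is mediated only through carries. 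I would then either select a combinatorial sub-family on which these off-diagonal contributions provably cancel modulo $2^{N}$ (isolating the diagonal flip), or alternatively argue by averaging/counting over the family that both values of $M_N$ must be attained since the image set of products mod $2^{N+1}$ is too large to lie on one side of the bit-$N$ split. Either route contradicts the assumed constancy, establishing $\Omega(N)$ as required.
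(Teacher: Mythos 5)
Your plan matches the paper's approach: apply Theorem~\ref{thm:uhat-cot-bound} to the putative sublinear CoT, obtain a restriction $\rho$ leaving a constant fraction of the $2N$ input bits free, and then derive a contradiction from the bilinear structure of the $N$-th product bit. The paper's own proof expands $XY = \sum_k 2^k \sum_{i+j=k}\xi_i\eta_j$, observes that $M_N$ is the degree-$N$ ``convolution parity'' $\bigoplus_{i+j=N}\xi_i\eta_j$ XOR-ed with carries from lower-order terms, and then simply asserts that fixing $CN$ bits (for small $C$) cannot fix the result. Your pigeonhole count of free diagonal pairs $(i^*,j^*)$ with $i^*+j^*=N$ is a more explicit rendering of the same idea, and your toy case $X_0=Y_0=0$ is a clean instance.

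Where your proposal has a genuine gap is precisely the point you flag yourself: handling carries when $X_0,Y_0\neq 0$. Flipping a single diagonal pair changes the product by $X_0 2^{j^*}+Y_0 2^{i^*}+2^N$, and the carry into bit $N$ from the first two terms can in principle cancel the $+2^N$; the four-point identity $P_{11}-P_{10}-P_{01}+P_{00}=2^N$ does not by itself force a bit-$N$ disagreement among the four values, so a single diagonal pair is not enough. Neither of your two proposed routes (a cancellation sub-family, or an averaging/counting argument over the free subcube, tracking contributions at positions $N+(i_k^*-i_l^*)$) is actually carried out, so the argument as written is incomplete at exactly the step where the paper is terse. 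To be fair, the paper's proof gives no more detail at this step either; but since you explicitly earmark it as ``the bulk of the work,'' you should be aware that this carry analysis is where the real content of the proof lies, and a sketch of either route (e.g., isolating the inner-product term $\sum_k u_kv_k$ modulo carries, or showing the image of the subcube under multiplication mod $2^{N+1}$ is too spread out to lie on one side of the bit-$N$ threshold) is needed before the proof can be considered complete.
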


\begin{proof}
In order to understand the difficulty of different digits in multiplication, we first review the connection between multiplication and bit string convolution.
The integers $X$, $Y$, $XY$ can be written in terms of binary coefficients $\xi_i, \eta_i \in \{0,1\}$ as:
\begin{align*}
    X =& \sum_{i=0}^N 2^i \xi_i \\
    Y =& \sum_{i=0}^N 2^i \eta_i \\
    XY =& \left(\sum_{i=0}^N 2^i \xi_i\right) \cdot \left(\sum_{i=0}^N 2^i \eta_i\right) \\
    = & \sum_{i,j} 2^{i+j} \xi_i \eta_j \\
    = & \sum_{k} 2^{k} \sum_{i+j=k} \xi_i \eta_j \\
\end{align*}
Hence, the $N$-th digit of $XY$, $M_N$, is determined by, on the one hand, the high-order parity $\bigoplus_{i+j=k} \xi_i \eta_j$, XOR-red with carries obtained from terms at lower $k$'s.
Now if one fixes $CN$ input bits, for $C$ small, the result cannot be fixed.
\end{proof}

\begin{remark}
Regarding $as(M_N)$, we note that the innermost digits $M_N$ are, up the impact of carries, derived from high-degree parities as discussed in the proof of Theorem~\ref{thm:multiplication}.
In accordance with this, we find that $as(M_k)$ is approximately $\min(k, 2N-k)$ and in particular peaks at $k\approx N$ (Figure~\ref{fig:sensitivity-multiplication}).
Empirically, the middle digits are the hardest for transformers (Figure~\ref{fig:multiplication-accuracy-by-digit}).

\end{remark}

\subsubsection{Remark: Autoregressive Multiplication}\label{app:autoregressive-mult}

Theorem~\ref{thm:multiplication} shows that querying individual digits of a product of $N$-digit numbers requires at least a linear-length CoT.
An interesting question is whether autoregressive decoding can help, i.e., the earlier digits in the result can serve as an effective chain-of-thought for later digits.
Empirical results cast doubt on such a possibility, because transformers struggle substantially more with multiplication than addition even in autoregressive generation (Figure~\ref{fig:arithmetic-comparison}).

\begin{figure}
    \centering
    \begin{minipage}{0.5\textwidth}
        \centering
        \includegraphics[width=\linewidth]{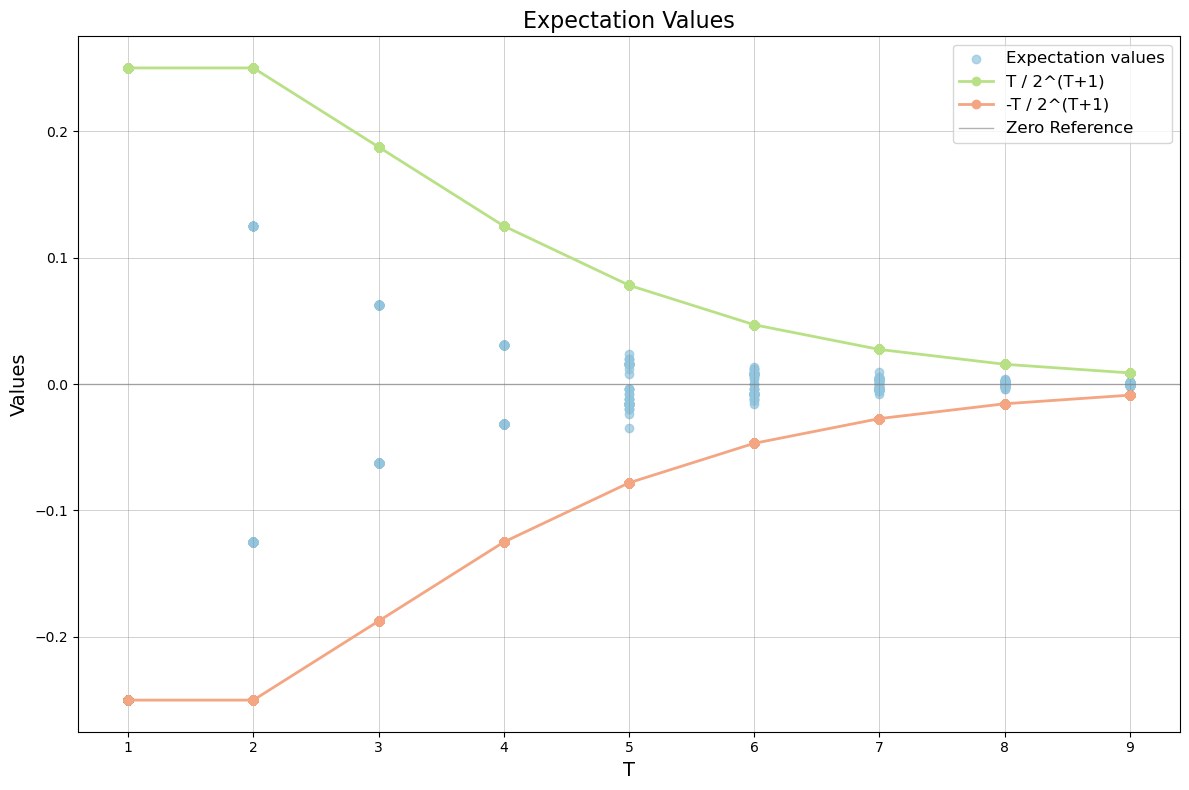}
    \end{minipage}\hfill
    \begin{minipage}{0.5\textwidth}
        \centering
        \includegraphics[width=\linewidth]{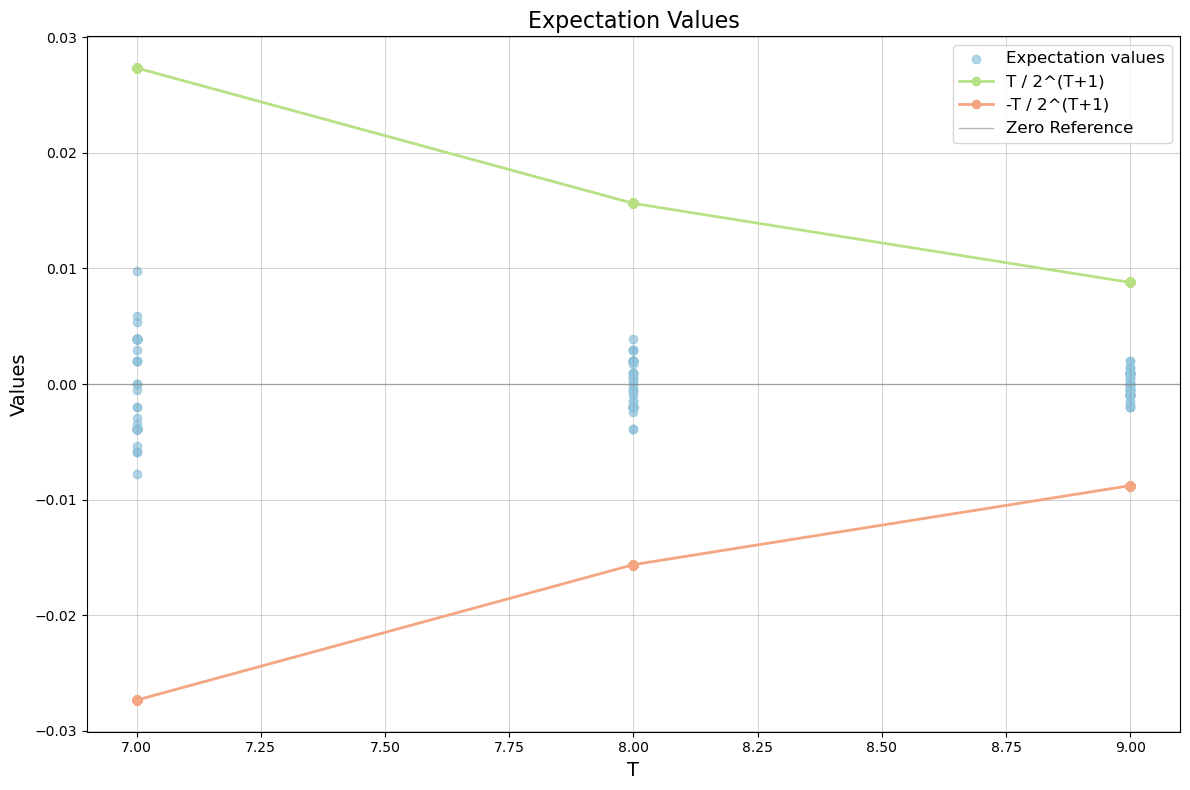}
    \end{minipage}

    \caption{Numerical Experiments for Conjecture~\ref{conj:product-xor}:  100 different combinations samples for each T. %
    Blue dots indicate values for (\ref{eq:xor-conjecture}) for 100 different sampled $(A, B, C)$.
    Right: Zoomed in on T = 7,8,9.
    }\label{fig:conj-test}
\end{figure}

Here, we outline a path to rigorously showing this.
The key idea is the following conjecture, which is of purely combinatorial nature:
\begin{conjecture}\label{conj:product-xor}
Given two binary encodings $X, Y \in \{0,1\}^N$ of $N$-bit integers, let $M_i(X,Y)$ be the $i$-th bit of the product $XY$.
We assume standard binary encoding with the most significant bit on the left.

Then, there is a constant $D>0$ such that for any $T \in [1,N]$, sets $A, B \subseteq [1,N]$, $C \subseteq [1,T-1]$, we have
    \begin{equation}\label{eq:xor-conjecture}
        \left|\mathbb{E}\left[\hat{M}_T(X,Y)   \prod_{i\in A} \hat{X}_i \prod_{j\in B} \hat{Y}_j  \prod_{k \in C} \hat{M}_k(X,Y)\right] \right| \leq 2^{-D\cdot T}
    \end{equation}
    where $\hat{X}_i = 2X_i-1 \in \{-1,1\}$, and similar for $\hat{Y}$, $\hat{M}$.
\end{conjecture}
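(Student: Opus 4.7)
The plan is to reduce the conjecture to an explicit character-sum bound on $\mathbb{Z}/2^{T+1}\mathbb{Z}$ and then exploit the multiplicative structure of the 2-adic units. Index bits from the least significant (which is WLOG). Since $M_k$ for $k \leq T$ depends only on $\tilde X := X \bmod 2^{T+1}$ and $\tilde Y := Y \bmod 2^{T+1}$, I would first split $A = A_L \sqcup A_H$ and $B = B_L \sqcup B_H$ according to whether indices fall in the low $T+1$ bits or above. Under uniform $X, Y$ the high bits are independent of $\tilde X$, $\tilde Y$ and every $M_k$ with $k \leq T$, so $\mathbb{E}[\chi_{A_H}(X)]$ and $\mathbb{E}[\chi_{B_H}(Y)]$ factor out and vanish unless $A_H = B_H = \emptyset$. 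It thus suffices, with $m := 2^{T+1}$ and $D := C \cup \{T\}$, to prove $|\Phi| \leq 2^{-D' T}$ for
\begin{equation}
\Phi := \mathbb{E}_{\tilde X, \tilde Y \in \mathbb{Z}/m}\bigl[\chi_D(\tilde X \tilde Y \bmod m)\,\chi_A(\tilde X)\,\chi_B(\tilde Y)\bigr],
\end{equation}
where $\chi_S$ denotes the Walsh character reading the bits indexed by $S$.

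I would first dispose of the case $B = \emptyset$ (and, by symmetry, $A = \emptyset$) by conditioning on the 2-adic valuation $r := v_2(\tilde X)$. If $r \leq T$, write $\tilde X = 2^r X'$ with $X'$ a unit mod $2^{T+1-r}$; then multiplication by $X'$ is a bijection, so the bits $M_r, \dots, M_T$ of $\tilde X \tilde Y$ are jointly uniform on $\{0,1\}^{T+1-r}$ as $\tilde Y$ ranges. Since $T \in D$, the inner expectation $\mathbb{E}_{\tilde Y}[\chi_D(\tilde X \tilde Y) \mid \tilde X]$ vanishes; only the event $\tilde X = 0$ contributes, with probability $2^{-(T+1)}$. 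This already matches the target with $D' = 1$, and after pulling out $\chi_A(\tilde X)$ on the event $\tilde X = 0$ (where it is just a sign) the bound remains $|\Phi| \leq 2^{-(T+1)}$.

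For the general case with both $A$ and $B$ nonempty, I would view $K(\tilde X, \tilde Y) := \chi_D(\tilde X \tilde Y \bmod m)$ as a $\pm 1$-valued $m \times m$ kernel, so that $|\Phi| \leq \|K\|_{\mathrm{op}}/m$ by Cauchy--Schwarz applied to $\Phi = m^{-2}\langle \chi_A, K\chi_B\rangle$ (using $\|\chi_A\|_2 = \|\chi_B\|_2 = \sqrt{m}$). The Frobenius norm of $K$ equals $m$, so a random-matrix heuristic predicts $\|K\|_{\mathrm{op}} = O(m^{1/2})$, yielding $|\Phi| = O(2^{-T/2})$; this is consistent with the numerical scaling in Figure~\ref{fig:conj-test}. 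To make this rigorous I would compute the Gram matrix $(K^\top K)_{Y,Y'} = \sum_X \chi_D(XY)\chi_D(XY')$, stratify by $v_2(Y - Y')$, and on each stratum reduce the inner sum to a cancellation estimate over a single coset of units in $\mathbb{Z}/2^\ell$. A complementary route expands each $\chi_D$ as an $L^1$-controlled combination of additive characters $e^{2\pi i a \cdot /m}$ and then invokes Gauss-sum orthogonality under multiplication by units.

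The main obstacle is that $\chi_D$ is \emph{not} additive on $(\mathbb{Z}/m, +)$: reading a single bit is a threshold-type function of the sum, so the sharp cancellation of Gauss sums becomes only approximate, and the bit-to-additive-character change of basis incurs a blow-up that must be offset against the unit-orthogonality gain. I believe the cleanest path is to diagonalize $K^\top K$ directly using the filtration of $(\mathbb{Z}/m)^\times$ by the principal units $1 + 2^j\mathbb{Z}/m$, on which $\chi_D$ transforms in a controlled way under multiplication. Any such argument fixes an explicit constant $D' > 0$, which is all that the downstream application to autoregressive multiplication requires.
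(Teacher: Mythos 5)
The statement you are attempting is not proven in the paper: it is explicitly presented as a \emph{conjecture}, supported only by numerical evidence (Figure~\ref{fig:conj-test}) and then used as a hypothesis in the subsequent lemma on autoregressive multiplication. So there is no paper proof to compare against; you are attempting an open problem, and your write-up itself acknowledges unresolved obstacles, so it should not be read as a completed argument in any case.

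More importantly, your sketch breaks at the first step. You assert that indexing bits from the least significant position is WLOG and then restrict to $\tilde{X} := X \bmod 2^{T+1}$, $\tilde{Y} := Y \bmod 2^{T+1}$. This is not WLOG: the conjecture fixes ``most significant bit on the left'' and quantifies over $T \in [1,N]$, which singles out the \emph{high-order} half of the $2N$-bit product (from the MSB down to the middle digit $M_N$). Those digits are not functions of $XY \bmod 2^{T+1}$; through carry propagation and through the full range of cross terms $\xi_i\eta_j$ they depend on essentially all bits of $X$ and $Y$. There is no bit-reversal or complementation symmetry that maps high-order bits of a product of $N$-bit integers to low-order bits of a product of related $N$-bit integers, so the two conventions genuinely differ. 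Consequently the modular kernel $K(\tilde{X},\tilde{Y}) = \chi_D(\tilde{X}\tilde{Y} \bmod 2^{T+1})$, the conditioning on the $2$-adic valuation $v_2(\tilde{X})$, the unit-group bijection that disposes of the $B=\emptyset$ case, and the proposed diagonalisation of $K^{\top}K$ along the principal-unit filtration of $(\mathbb{Z}/2^{T+1})^{\times}$ are all tools for multiplication mod $2^{T+1}$; they simply do not touch the high-order digits that the conjecture is about. Even in your intended (low-order) setting, the general case with $A,B \neq \emptyset$ is a plan rather than a proof: the operator-norm heuristic $\|K\|_{\mathrm{op}} = O(m^{1/2})$ is unjustified, and you yourself flag the non-additivity of $\chi_D$ and the attendant loss in the Gauss-sum estimate as an obstacle you have not overcome.
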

This statement intuitively states that digits of the product are not substantively correlated with combinations of individual digits in the inputs or preceding digits in the result.
We provide numerical evidence for this conjecture in Figure~\ref{fig:conj-test}.
Any proof of this conjecture would imply lower bounds on autoregregressive multiplication. Formally, we are interested in autoregressive predictors
\begin{equation}
    F_T : \{0,1\}^{2N+T-1} \rightarrow \{0,1\}
\end{equation}
with the property that
\begin{equation}
    M_T = F_T(X_1, X_2, \dots, X_N, Y_1, Y_2, \dots, Y_N, M_1, \dots, M_{T-1})
\end{equation}
We then have bounds based on sensitivity (hence learnability by Section~\ref{sec:sensitivity}) and UHAT expressiveness:
\begin{lemma}
    Assume Conjecture~\ref{conj:product-xor}.
    Then any Boolean function implementing $F_N$ needs to satisfy\footnote{In fact, the stronger bound \begin{equation}
    as(F_T) = \Omega(T)
    \end{equation} can be concluded uniformly for $T$, independently of $N$. This is because $F_T$ is only relatively weakly dependent on digits at positions much larger than $T$, making products trivially close to zero in Conjecture~\ref{conj:product-xor} when $A$ or $B$ include indices substantially greater than $T$.} 
    \begin{equation}
    as(F_N) = \Omega(N)
    \end{equation}
        A CoT directly outputting $M_1 \dots M_{2N}$, without $\omega(1)$ further intermediate steps, cannot be implemented in UHAT.
\end{lemma}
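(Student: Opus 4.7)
The lemma decomposes into two claims: (i) $as(F_N) = \Omega(N)$, and (ii) no UHAT can implement the direct CoT with only $O(1)$ intermediate tokens. I would prove (i) via a direct Fourier-analytic argument using the Conjecture, and (ii) by combining (i) with the LMN Fourier-concentration theorem for $AC^0$.

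For (i), work in the $\pm 1$ encoding and Fourier-expand $M_N$ as a function of the $2N$ bits of $(X, Y)$. Specializing Conjecture~\ref{conj:product-xor} to $T = N$, $C = \emptyset$ bounds every Fourier coefficient: $|\widehat{M_N}(A, B)| \leq 2^{-DN}$ for all $A, B \subseteq [N]$. The entropy estimate $\sum_{j \leq cN}\binom{2N}{j} \leq 2^{2NH(c/2)}$, with $c$ small enough that $H(c/2) < D$, yields
\[
\sum_{|A|+|B| \leq cN}\widehat{M_N}(A,B)^2 \;\leq\; 2^{2NH(c/2) - 2DN} \;=\; 2^{-\Omega(N)}.
\]
Combined with Parseval $\sum_{A,B}\widehat{M_N}(A,B)^2 = 1$, this forces nearly all Fourier mass to degree $\geq cN$, so $as(M_N) \geq cN(1-o(1)) = \Omega(N)$. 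Any Boolean function implementing $F_N$ agrees with $M_N$ on the consistent sub-cube; under the natural reading in which the learnability-relevant sensitivity is sensitivity in $(X, Y)$ with $m = M(X,Y)$, this gives $as(F_N) = \Omega(N)$.

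For (ii), suppose for contradiction that a UHAT implements the CoT. At step $T = N$ it realizes a function $g(X, Y, m)$ with $g(X, Y, M(X, Y)) = M_N(X, Y)$ on every consistent input. Since UHAT $\subset AC^0$, LMN yields a polylog-degree approximant $g_{\leq L}$ with $\|g - g_{\leq L}\|^2_{L_2(\mathrm{unif})} \leq n^{-c}$ for $L = \mathrm{polylog}(n)$ and any fixed $c$. Expanding $g_{\leq L} = \sum_{|S| \leq L}\alpha_S \chi_S$, substituting $m = M(X, Y)$, and bounding the correlation with $\hat{M}_N$ via Conjecture~\ref{conj:product-xor} on each term gives
\[
\Bigl|\mathbb{E}_{X,Y}\bigl[g_{\leq L}(X, Y, M(X,Y))\,\hat{M}_N(X,Y)\bigr]\Bigr| \;\leq\; \sum_{|S| \leq L}|\alpha_S| \cdot 2^{-DN} \;\leq\; (3N)^{L/2}\cdot 2^{-DN} \;=\; o(1),
\]
by Cauchy-Schwarz and the count $\binom{3N-1}{\leq L} \leq (3N)^L$. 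If the transfer to the consistent sub-cube preserves the LMN closeness up to an $o(1)$ loss, this forces the correlation of $g$ itself with $\hat{M}_N$ on consistent inputs to be $o(1)$; but $g = M_N$ there makes the correlation exactly $1$, the desired contradiction.

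The main obstacle is precisely this transfer step: the consistent sub-cube has relative measure only $2^{-(N-1)}$, so uniform $L_2$ closeness of $g$ and $g_{\leq L}$ does not automatically yield closeness on the sub-cube. I would handle this by showing that the substituted characters $\chi_A(X)\chi_B(Y)\prod_{k \in C}\hat{M}_k(X, Y)$ form a near-orthonormal system in $L_2(X, Y)$: their pairwise inner products are exactly the expectations in the Conjecture, hence bounded by $2^{-D \max(C \triangle C')}$ when $C \neq C'$ and vanishing when $C = C'$ but $(A, B) \neq (A', B')$. A Gershgorin-style estimate, possibly refined by a dyadic decomposition on $\max(C \triangle C')$, would promote these entry-wise bounds into an operator-norm bound on the relevant Gram matrix and keep $\|g - g_{\leq L}\|_{L_2(\mathrm{cons})}$ under control. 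Making this transfer quantitatively airtight is where the technical work concentrates and may require a slightly strengthened decay rate in the Conjecture.
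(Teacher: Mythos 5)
Your argument diverges from the paper's in two places, and both matter.

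For part (i), you specialize the Conjecture to $C = \emptyset$ and bound the Fourier spectrum of $M_N$ as a function of $(X,Y)$ alone. This yields $as(M_N) = \Omega(N)$, but the lemma concerns $as(F_N)$, where $F_N$ is a Boolean function on all $3N-1$ bits including the hint bits $m_1,\dots,m_{N-1}$; these are distinct quantities, since $F_N$ may depend substantially on the hints while merely agreeing with $M_N$ once $m = M(X,Y)$ is substituted. The paper's proof therefore keeps the full $C$-dependence, Fourier-expands $F_N$ over $\{-1,1\}^{3N-1}$, and bounds $\bigl|\mathbb{E}_{X,Y}\bigl[\chi_{A,B,C}(X,Y,M(X,Y))\,\hat{M}_N(X,Y)\bigr]\bigr| \le 2^{-DN}$ for \emph{every} low-weight $(A,B,C)$, so that correlation with \emph{any} low-degree Boolean function of $(X,Y,m)$ is small. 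Your closing sentence (``under the natural reading\dots'') acknowledges the mismatch but only reinterprets the claim rather than proving it; without the $C$-terms, nothing rules out an $F_N$ that leans heavily on the $M$-hints.

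For part (ii), the paper's route is a one-liner that you overlook: since UHAT $\subset AC^0$ and Boppana's theorem gives average sensitivity $\mathrm{polylog}$ for $AC^0$ functions, part (i)'s bound $as(F_N) = \Omega(N)$ is an immediate contradiction. No low-degree approximant and no sub-cube transfer appear anywhere in that step. Your route through LMN Fourier concentration, Cauchy--Schwarz, and a proposed Gershgorin/Gram-matrix operator-norm estimate is both far heavier and, as you yourself concede, incomplete at the transfer step (and possibly needs a strengthened Conjecture). The obstacle you flag---uniform $L^2$ closeness does not survive restriction to the exponentially small consistent sub-cube---is a genuinely subtle point, and it arguably also deserves scrutiny in the paper's own derivation of (i) at the ``This entails\dots'' line, where a correlation bound on the restricted distribution is read as a statement about Fourier mass over the uniform cube. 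But on the paper's route it never arises in (ii), because Boppana controls average sensitivity directly. In short: your (i) drops terms that the claim needs, and your (ii) substitutes a complicated and incomplete argument for what is, given (i), a direct corollary.
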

\begin{proof}
    It is sufficient to show the lower bound on average sensitivity; the claim about UHAT follows from the inclusion of UHAT in $AC^0$ and Boppana's upper bound on the average sensitivity of $AC^0$ functions \citep{boppana1997average}.
We refer to \citet{odonnell2014analysis} to background on Fourier analysis of Boolean functions.
We consider the Fourier transform of $F_N$ as a Boolean function:
    \begin{equation}
        \hat{F}_N(X,Y,M_{1\dots N-1}) = \sum_{A, B \subseteq [1,N]; C \subseteq [1,N-1]} \lambda_{A,B,C} \cdot \left[\prod_{i\in A} \hat{X}_i \prod_{j\in B} Y_j  \prod_{k \in C} \hat{M}_k\right]
    \end{equation}
for unique coefficients $\lambda_{A,B,C} \in \mathbb{R}$ given by
\begin{equation}
    \lambda_{A,B,C} = \mathbb{E}_{X,Y}\left[\hat{M}_N(X,Y)   \prod_{i\in A} \hat{X}_i \prod_{j\in B} \hat{Y}_j  \prod_{k \in C} \hat{M}_k(X,Y)\right]
\end{equation}
We have, by the conjecture,
\begin{align*}
    \lambda_{A,B,C}^2 = & \left(\mathbb{E}\left[\hat{M}_N(X,Y)   \prod_{i\in A} \hat{X}_i \prod_{j\in B} \hat{Y}_j  \prod_{k \in C} \hat{M}_k(X,Y)\right]\right)^2 \leq 2^{-2DN}
\end{align*}
Now, for any $\eta \in (0,1)$, the number of tuples $(A,B,C)$ where $|A|+|B|+|C| \leq \eta \cdot 3N$ is $\approx 2^{\gamma_\eta N}$ for some $\gamma_\eta > 0$. Hence,
    \begin{align*}
    \sum_{A,B,C : |A|+|B|+|C| \leq \eta \cdot 3N} |\lambda_{A,B,C}|^2 \lessapprox 2^{(\gamma_\eta-2D) N}
\end{align*}
By Parseval's Theorem:
\begin{align*}
    1 = \sum_{A, B \subseteq [1,N]; C \subseteq [1,N-1]} \lambda_{A,B,C}^2 
    \end{align*}
    We choose $\eta$ so that $\gamma_\eta < 2D$.
Then, by the link between Fourier transforms and average sensitivity \citep{odonnell2014analysis}, we find:
\begin{align*}
    as(F_N) =& \sum_{A,B,C} \left(|A|+|B|+|C|\right) \cdot |\lambda_{A,B,C}|^2 \\
    \geq & 3\eta N \sum_{A,B,C : |A|+|B|+|C| \geq \eta \cdot 3N} |\lambda_{A,B,C}|^2 \\
    \geq & 3\eta N \cdot (1-2^{(\gamma_\eta-2D) N}) \\
    = & \Omega(N)
\end{align*}
This concludes the proof.
\end{proof}

\subsubsection{Construction of $\mathcal{O}(N \log N)$-length CoT for Multiplication}\label{app:mult-scratchpad}

Whereas the traditional approach for multiplying two $N$-digit numbers requires $\Theta(N^2)$ steps, modern algorithms are more efficient for large inputs, and achieve an asymptotic runtime up to $\Theta(N \log N)$ \citep[e.g.][]{schonhage1982asymptotically,harvey2021integer}.
We illustrate this by demonstrating a scratchpad whose length, $\mathcal{O}(N \log N)$, matches a widely conjectured lower bound on the asymptotic time complexity of multiplication in the Turing machine model.
It is based on the Sch{\"o}nhage-Strassen algorithm \cite{schonhage1982asymptotically}.\footnote{The Sch{\"o}nhage-Strassen algorithm \cite{schonhage1982asymptotically} requires $\mathcal{O}(N \log N \log\log N)$ steps, slightly different from the $\mathcal{O}(N \log N)$ CoT length. This is because we use an unbounded CoT alphabet to encode and perform arithmetic on the entries of the NTT. Moving to a constant-size CoT alphabet would incur a $\log\log N$ factor as in the standard Sch{\"o}nhage-Strassen algorithm. The algorithm of \citet{harvey2021integer} attains $\mathcal{O}(N \log N)$ complexity, but is substantially more complex.}

\paragraph{Background: Number Theoretic Transform (NTT)}
The Number Theoretic Transform is an analogue of the Discrete Fourier Transform (DFT) that operates over finite fields. While the DFT leverages the complex roots of unity, the NTT employs roots of unity in a modular arithmetic setting. Formally, let \( F_q \) denote a finite field of size \( q \), where \( q = p^k \) for some prime \( p \) and integer \( k \). Consider a prime \( p \) such that there exists a primitive \( n \)-th root of unity \( \omega \in F_q \), \( \omega^n \equiv 1 \pmod{p} \) and for all \( 1 \leq k < n \), \( \omega^k \neq 1 \). The NTT is then defined as a mapping of an input vector of length \( n \), say \( a = (a_0, a_1, \dots, a_{n-1}) \), to an output vector \( A = (A_0, A_1, \dots, A_{n-1}) \), where:
\[
A_k = \sum_{j=0}^{n-1} a_j \omega^{jk} \pmod{p}, \quad \text{for} \, k = 0, 1, \dots, n-1.
\]
This transformation can be seen as a modular version of the classical DFT. The inverse of the NTT (INTT) is similarly defined, allowing for the reconstruction of the original vector from its transformed version. The INTT is expressed as:
\[
a_j = \frac{1}{n} \sum_{k=0}^{n-1} A_k \omega^{-jk} \pmod{p}, \quad \text{for} \, j = 0, 1, \dots, n-1,
\]
where \( \frac{1}{n} \) refers to the modular multiplicative inverse of \( n \) modulo \( p \), which must exist for the INTT to be valid.  In the context of multiplication, the use of NTT allows for a scratchpad with \( O(N \log N) \) length, compared to the \( O(N^2) \) length of the naive scratchpad.

\paragraph{CoT Construction and Analysis}
The Schönhage-Strassen algorithm provides a method to perform integer multiplication efficiently \citep{schonhage1982asymptotically}. To translate this into a scratchpad for use in a UHAT framework, the steps of the algorithm are defined here. We note that the required attention pattern is, for any input length $N$, input-independent, which ensures that the CoT is expressible in UHAT.
The number of CoT steps scales as $\mathcal{O}(N \log N)$ because the NTT is computed using the divide-and-conquer Fast Fourier Transform strategy \citep[Cooley-Tukey Algorithm, ][]{cooley1965algorithm}.

We use ``-1'' as separator token.
\paragraph{Input encoding} We encode the input:
    \[
    [-1] \ \ [\text{ First Number }] \ \ [-1] \ \ [\text{ Second Number }] \ \ [-1]
    \]

\paragraph{Input Reversal and Padding} Given two binary numbers, we first reverse their bit sequences and pad them to the nearest power of two, $2^n$, where $n$ is chosen such that $2^n$ is greater than or equal to the length of the input sequences. This ensures that the sequences are of a length suitable for efficient NTT computation.

  \[
    \begin{aligned}
    & [-1]\ \ [\text{ First Number Reversed and Padded }] \\
    & [-1]\ \ [\text{ Second Number Reversed and Padded }] \ \\\
    \end{aligned}
    \]

\paragraph{NTT Transformation} The padded sequences are then transformed using the NTT, which operates in a finite field. The output is a sequence $A_0, \dots, A_{n-1} \in \{0, \dots, p-1\}$.

  \[
    \begin{aligned}
    & [-1]\ \ [\text{ NTT of First Number }] \ \ [-1] \ \ [\text{ NTT of Second Number }]\\
    \end{aligned}
    \]

    This part takes $\mathcal{O}(N \log N)$ steps because the NTT is computed using the divide-and-conquer Fast Fourier Transform strategy (Cooley-Tukey Algorithm). Every step combines two fixed previously computed results; the attention pattern can be hard-coded into positional encodings and the modular arithmetic required can be hard-coded into the MLP map $f^{MLP}$ (Section~\ref{sec:model-transformers}).

\paragraph{Convolution} In the NTT domain, the two transformed sequences are multiplied pointwise, modulo our prime number. This convolution corresponds to a multiplication in the original domain.
We found that training was improved by first copying the NTTs with each digit marked with an index hint \citep{zhou2023algorithms} indicating its position, as a way of indicating which pairs of numbers to match in pointwise multiplication.

  \[
    \begin{aligned}
    & [-1]\ \ %
    [\text{ Convolution }] \ \ [-1] \\
    \end{aligned}
    \]

        This part takes just $\mathcal{O}(N)$ steps.
        
\paragraph{Inverse NTT} After the convolution, we apply the inverse NTT to convert the sequence back to the domain.

  \[
    \begin{aligned}
    &[-1] \ \ [\text{ Result of Inverse NTT} ] \ \ [-1] \ \ 
    \end{aligned}
    \]

    Again involving a divide-and-conquer FFT, this takes $\mathcal{O}(N \log N)$ steps.
    
\paragraph{Recombination} Finally, we recombine the elements of the transformed sequence to produce the final multiplication result. Starting from the leftmost element, we multiply by $2^0$, the next by $2^1$, and so on, summing these weighted values to get the desired output.

  \[
    \begin{aligned}
    &[\text{ Recombined Final Result }] \ \ [-1]
    \end{aligned}
    \]

\paragraph{Example} Below is an example of the full CoT for 2 digit binary multiplication (10 * 11) using prime $p=5$.

\begin{itemize}
    \item \textbf{Input}: 
        \[ {-1} \ \ 1 \ \ 0 \ \ {-1} \ \ 1 \ \ 1 \ \ {-1} \]

    \item \textbf{Target}:
    \[
        \begin{aligned}
         & {-1} \  \underbrace{0 \ \ 1\ \ 0\ \ 0}_{\text{First Number Reverse and Pad}} \ \ {-1}\ \underbrace{1 \ \ 1\ \ 0\ \ 0}_{\text{Second Number Reverse and Pad}}\ {-1} \underbrace{1 \ \ 2\ \ 4\ \ 3}_{\text{NTT of First Number}}  \\
        &  {-1}  \underbrace{2 \ \ 3\ \ 0\ \ 4}_{\text{NTT of Second Number}} {-1}  \ \ \underbrace{a \ \ 1\ \ b \ \ 2\ \ c \ \ 4\ \ d \ \ 3}_{\text{NTT of First with Index Hints}} \ \ {-1} \underbrace{a \ \ 2\ \ b \ \ 3\ \ c \ \ 0\ \ d \ \ 4}_{\text{NTT of Second with Index Hints}}\\ 
        & {-1}  \ \ \underbrace{a \ \ 2\ \ b \ \ 1\ \ c \ \ 0\ \ d \ \ 2}_{\text{Convolution with Index Hints}} \ \ {-1} \ \ \underbrace{2\ \ 1\ \ 0\ \ 2}_{\text{Convolution}} \ \ {-1} \underbrace{0\ \ 1\ \ 1 \ \ 0 }_{\text{INTT of Convolution}} {-1}  \underbrace{0 \ \ 1 \ \ 1 \ \ 0 }_{\text{Result (Recombined)}}{-1}
        \end{aligned}
     \]   
\end{itemize}

\subsubsection{Experiments}\label{app:experiments-multiplication}

\paragraph{Setup}
All experiments were conducted on NVIDIA A100 GPUs (40GB memory each).
We experimented with two different transformer architectures: 
a BART-based encoder-decoder model (for autoregressive decoding of multiplication) and an encoder-only model (for parallel decoding).

\paragraph{Model Configurations}
We experiment with three setups:
\begin{enumerate}
    \item Direct parallel decoding of each digit $M_1, \dots, M_{2N}$ (and analogously for addition) in parallel. To avoid training separate transformers for each digit, we use a transformer encoder (i.e., transformer with bidirectional attention) reading in the two $N$-digt operands, and providing predictions for each of the $2N$ results digit at the top layer.
Due to high sensitivity of the middle digits (Figure~\ref{fig:sensitivity-multiplication}) and Theorem~\ref{thm:multiplication}, we expect that this setup will be difficult for \textsc{Multiplication}, though not for \textsc{Addition}.
    
    \item Autoregressive decoding of the result $M_1 \dots M_{2N}$ (and analogously for addition). We expect that this setup is still difficult for \textsc{Multiplication} as discussed in Appendix~\ref{app:autoregressive-mult}.

    \item Autoregressive decoding of the $\mathcal{O}(N\log N)$-length CoT. We expect that this setup will make the task feasible for transformers.
\end{enumerate}

\textit{Encoder-only Model for Direct Parallel Decoding}: We employed a transformer encoder with 2 layers, 2 attention heads, a feed-forward network dimension of 512, and a model dimension of 128. The vocabulary size was determined by the tokenizer used for encoding the digits and arithmetic symbols.

\textit{Encoder-Decoder Model for Autoregressive Decoding}:
In autoregressive decoding, we slightly deviated from the theoretical setup in Section~\ref{sec:model-transformers} in allowing bidirectional attention on the input (though not on the CoT) as in \citet{abbe2024how}, for consistency with the Direct Parallel Decoding setup, where we use bidirectional attention for efficiency, as described above. 
We built on the BART architecture, with a 1-layer encoder and a 2-layer decoder. Both the encoder and decoder had 2 attention heads, and the decoder's feed-forward network dimension was  512. The model dimension for both encoder and decoder was 128.

\textit{Note on Bidirectional Attention}:
As the results in \citet{hahn2020theoretical} do not presuppose causal masking, our lower bounds are also applicable in the setup with bidirectiobnal attention (Remark~\ref{remark:causal}).

\paragraph{Datasets}
All experiments were conducted on binary numbers.
We trained and evaluated both models without CoT on datasets consisting of 2-digit, 4-digit, and 6-digit arithmetic problems for both addition and multiplication. 
For the CoT, we used a training dataset of 5 million samples and 10,000 test samples. The dataset sizes for 8-digit to 12-digit multiplication were smaller due to the limited number of possible combinations, with 50,000 samples for training and 10,000 for testing.
In both cases, the test data is held-out, without overlap with the training data.

\paragraph{Training Configurations}
For the CoT multiplication experiments, we used the following training configuration:
- Number of training epochs: 20
- Batch size (per device): 8 for both training and evaluation
- Gradient accumulation steps: 4
- Learning rate: 0.0001

\begin{table}[ht]
\centering
\begin{tabular}{|c|c|c|}
\hline
\textbf{Multiplication Task} & \textbf{Training Set Size} & \textbf{Accuracy (\%)} \\ \hline
8-digit       & 50,000                    & 99.83                   \\ \hline
9-digit      & 50,000                    & 99.87                   \\ \hline
10-digit       & 50,000                    & 99.82                   \\ \hline
11-digit      & 50,000                    & 99.85                   \\ \hline
12-digit     & 5,000,000                    & 99.84                   \\ \hline
13-digit     & 5,000,000                    & 99.82                   \\ \hline
14-digit      & 5,000,000                    & 99.96                   \\ \hline
15-digit      & 5,000,000                    & 99.91                   \\ \hline
16-digit     & 5,000,000                 & 99.83                   \\ \hline
\end{tabular}
\caption{Accuracy results of the $\mathcal{O}(N \log N)$-length CoT for \textsc{Multiplication} for different digit sizes. All accuracies are computed on tests sets disjoint from the training set.}
\label{tab:scratchpad_accuracy}
\end{table}

\subsection{\textsc{Median} Task}\label{app:theory-order-statistics}

\subsubsection{Theory}

\begin{figure}
    \centering
    \includegraphics[width=0.9\linewidth]{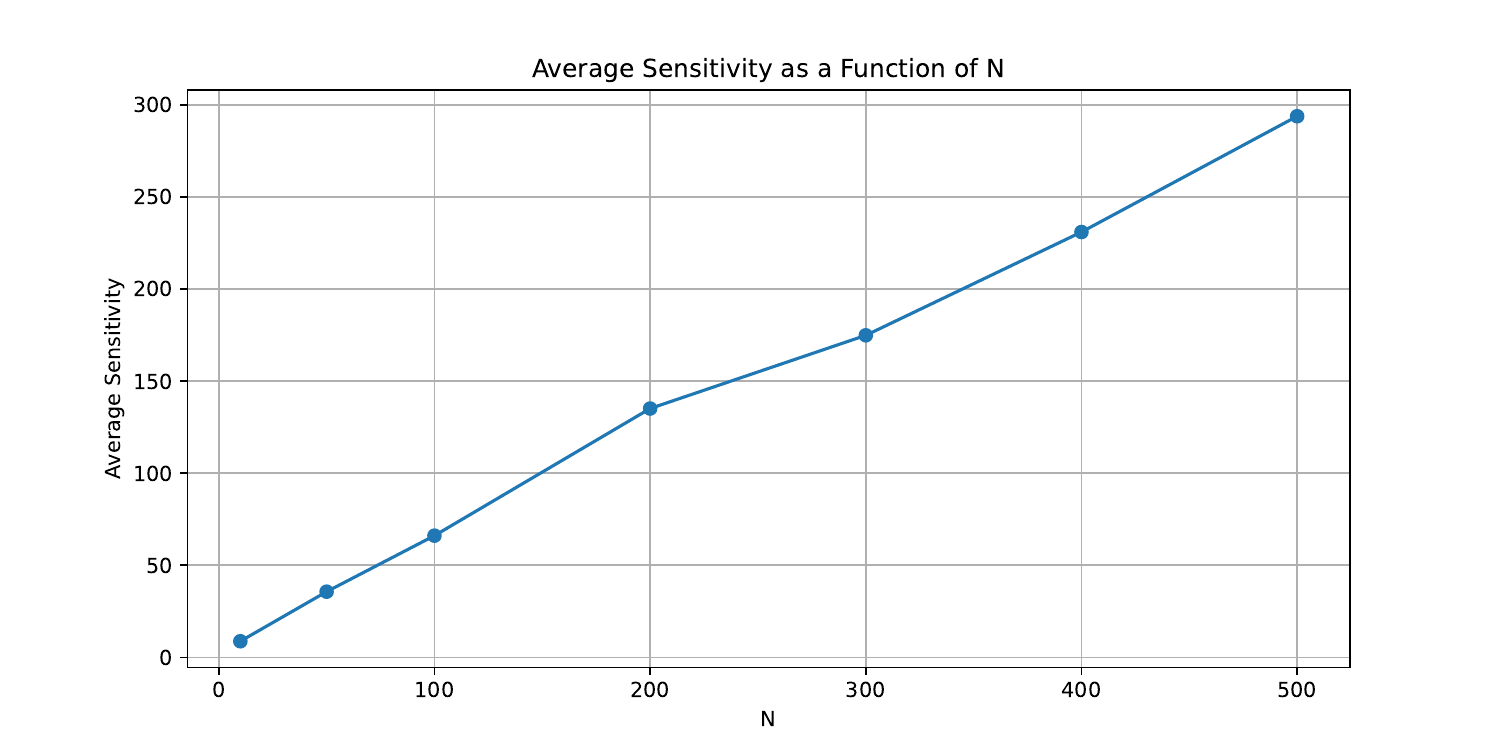}
    \caption{Average sensitivity of the last digit of \textsc{Median} as a function of $N$ when $B \sim 1+\lceil\log N\rceil$. We estimate average sensitivity by sampling 200 input strings at each $N$, and 200 bit flips for each input. As predicted theoretically (Appendix~\ref{app:theory-order-statistics}), average sensitivity grows linearly in this regime where $\exp(B) \gtrapprox N$.}
    \label{fig:median-sensitivity}
\end{figure}

We first establish the claim that the last digit of \textsc{Median} is sensitive to changes of the integers, in the regime where $N << \exp(B)$.\footnote{If $N$ is on the order of or even larger than $\exp(B)$, then uniform sampling of numbers (even with replacement) will have a high chance of producing the same number and low sensitivity. We hence focus on the regime where $N << \exp(B)$.}
    Changing the first digit in any of the numbers with order rank $< \lfloor\frac{N}{2}\rfloor$ can get it to have rank $> \lfloor\frac{N}{2}\rfloor$. 
    Under uniform sampling of the numbers, %
    the chance that this will change the last digit is $\approx \frac{1}{2}$.
    Changing other digits will additionally have some nonzero (though smaller) chance of flipping the last output digit.
    Hence, the average sensitivity of the last digit is $\gtrapprox \frac{N}{2}$.
    This is illustrated in Figure~\ref{fig:median-sensitivity}.
    Hence, the task is challenging for transformers by the results reviewed in Section~\ref{sec:sensitivity}.

Now we proceed to showing the CoT bounds.
\begin{theorem}[Restated from Theorem~\ref{thm:median}]
    For the \textsc{Median} task, a UHAT scratchpad requires length $\Omega(N)$. This bound is attained.
\end{theorem}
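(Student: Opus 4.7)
The plan is to derive the $\Omega(N)$ lower bound by applying Theorem~\ref{thm:uhat-cot-bound} contrapositively, and then to construct an explicit UHAT CoT of matching length that emits the lower half of the sorted input.

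\emph{Lower bound.} Suppose for contradiction that a UHAT CoT $g(x)$ with $|g(x)| = o(|x|)$ exists; note $|x| = NB$. I would apply Theorem~\ref{thm:uhat-cot-bound} with $C := 1 - 1/(3B)$, which lies in $(0,1)$ and is constant when $B$ is (the regime implicit in the theorem). This yields a restriction $\rho$ fixing at most $(1-C)NB = N/3$ of the $NB$ input bits and making \textsc{Median} constant on $\rho\Sigma^*$. Since each fixed bit belongs to exactly one of the $N$ integers, at most $N/3$ integers carry any fixed bit, so at least $2N/3$ integers are \emph{fully free}. Two completions of $\rho$ then refute constancy: in completion $A$, set every free bit to $0$, so the $\geq 2N/3$ fully-free integers all equal $0$; in completion $B$, set every free bit to $1$, so those integers all equal $2^B - 1$. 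For $N \geq 4$ one has $N/3 < \lfloor N/2\rfloor < 2N/3$, so the rank-$\lfloor N/2\rfloor$ element equals $0$ in $A$ and $2^B - 1$ in $B$, a contradiction. Hence $|g(x)| = \Omega(NB) = \Omega(N)$.

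\emph{Matching upper bound.} I would construct a UHAT CoT of length $\Theta(NB)$ that iteratively emits the $\lfloor N/2\rfloor$ smallest input integers in increasing order, bit by bit; the last integer emitted is the median. At the start of each integer slot, one UHAT attention head attends to the smallest input integer strictly greater than the previously emitted one (or the global minimum at the first slot), implemented via a query encoding the previous emission $y$, keys encoding each input integer $x_i$, and score $-(x_i - y)$ gated to $-\infty$ whenever $x_i \leq y$; the standard left/right tie-breaking handles duplicates. A second head then copies the required bit of the attended integer into the CoT position. Uniform expressibility across input lengths (Definition~\ref{def:uhat-computable}) follows from the same scoring and tie-breaking techniques used in the paper's Turing-completeness construction (Appendix~\ref{app:universality}).

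\emph{Main obstacle.} The principal subtlety is ensuring the lower bound survives when $B$ grows with $N$, since $C = 1 - 1/(3B)$ is then not a valid constant in $(0,1)$ and Theorem~\ref{thm:uhat-cot-bound} would not apply. To cover that case I would instead fix any $C \in (0,1)$ and pigeonhole on bits: since $CNB$ bits are free and there are $N$ integers, some integer $i_0$ carries at least $CB$ free bits and can therefore take at least $2^{CB}$ distinct values. Setting the other $N-1$ integers to specific allowed values (chosen so that $w_{(\lfloor N/2\rfloor - 1)} \neq w_{(\lfloor N/2\rfloor)}$ in their sorted order), varying $i_0$ from its minimum to its maximum attainable value shifts its rank across $\lfloor N/2\rfloor$ and therefore shifts the median. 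The bookkeeping needed to guarantee that suitable configurations for the ``other'' integers exist given their own residual restrictions is the one step requiring extra care beyond the clean fully-free-integer argument above.
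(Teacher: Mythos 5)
Your proof takes essentially the same route as the paper's: a $B$-dependent choice of $C$ in Theorem~\ref{thm:uhat-cot-bound} so that most of the $N$ integers remain fully free, followed by the all-zero versus all-one completion dichotomy for the lower bound (the paper compresses this to the sentence ``fixing $\frac{1}{10}N$ digits cannot fix the median,'' which your fully-free-integer argument is exactly the way to justify), and an iterative next-smallest attention head over one-hot integer representations emitting the bottom half of the sorted list for the upper bound. The growing-$B$ scope qualification you flag applies equally to the paper's own sketch, whose $C = 1 - \tfrac{1}{10B}$ is likewise $B$-dependent, so both arguments implicitly treat $B$ as a fixed task parameter; and where the paper emits each selected integer as a single atomic CoT token (length $\Theta(N)$, exploiting the unbounded CoT alphabet of Definition~\ref{def:cot}) your bit-by-bit CoT has length $\Theta(NB)$, which coincides with $\Theta(N)$ in that same constant-$B$ regime -- these are presentational differences rather than a different approach.
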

    \begin{proof}
    We note that the input length is $BN$, where $B$ is the number of bits in each integer.
    To show the lower bound by applying Theorem~\ref{thm:uhat-cot-bound}, we note that fixing, say, $\frac{1}{10} N = \frac{BN}{10B}$ digits cannot fix the median.

Now we show that the bound is attained.
Recall that the input consists of $N$ unique numbers in binary encoding, each with $B$ bits. %
Let $a_1, \dots, a_N$ be the integers in the input, and let $a_{i_1}, \dots, a_{i_N}$ be the same numbers ordered by magnitude.
Now the CoT simply consists of 
\begin{equation}
    a_{i_1} \dots a_{i_{\lfloor N/2 \rfloor}} %
\end{equation}
To implement this in UHAT, we first use $B$ attention heads in order to aggregate each $a_i$ on the basis of the $B$ bits entering its binary representations:
\begin{equation}
    a_i = \sum_{j=0}^{B-1} 2^j \lambda_{B(i-1)+j+1}
\end{equation}
where $\lambda \in \{0,1\}^{BN}$ is the input string consisting of $N$ integers with $B$ digits each.
This number $a_i$ is stored, e.g. as a one-hot encoding, in the activation $\vy^{(1)}$ at the last digit of its binary representation in the input.
The CoT then orders the numbers by magnitude, stopping at the $k$-th index.
Importantly, this is expressible with a single attention head as follows: If the integer $i$ is represented as the one-hot vector $e_i$, then $e_i^T K^T Q e_j$ produces the desired behavior when\footnote{We note that definitions of autoregressive sorting by transformers are also given in RASP-L \cite{zhou2023algorithms} and C-RASP \cite{huang2024formal}.}
\begin{equation}
    (K^TQ)_{ij} = \begin{cases}
        -1 & \text{if } i \leq j \\ 
        N + j - i & \text{else}
    \end{cases}
\end{equation}
Stopping at $\lfloor N/2 \rfloor$ can be hard-coded via positional encodings.
\end{proof}

\subsubsection{Experiment}
\label{app:median-exp-detail}

To simplify the setup, we deviated from the theoretical construction by using decimal numbers instead of binary numbers, and also using decimal numbers instead of atomic encodings in the CoT. %
We considered three-digit numbers ($B=3$, but with decimal instead of binary encoding).\footnote{In preliminary experiments, we found that it is important to have diverse inputs (i.e., $B$ not too small in relation to $N$) to avoid learning a ``increment-by-1" algorithm (imagine when $N=100$, we sample 100 number from [0, 99] without replacement, which means the sorted list and the median is always the same).}

We experiment with $N = \{1, 9, 19, 29, 39, 49, 59, 69, 79, 89, 99\}$. The resulting input length is between 6 and 398 (including BOS and SEP token). The $N$ numbers are randomly sampled from $\{0, 1, \cdots, 999\}$ with replacement. We train separate models for each $N$, and test on the same length. Note that the size of the test set varies with $N$, concretely it is $(N+10)\times20$. Because when $N=1$, there are only 1000 possible inputs. The test examples are excluded from training set. The input format is as follows (for the case $N=3$), including full scratchpad:
\begin{center}
\texttt{
\setlength{\tabcolsep}{3pt}
\small
\begin{NiceTabular}{*{23}{c}}[hvlines]
BOS & 3 & 4 & 3 & ; & 0 & 1 &9 & ; & 8 & 5 & 2 & ; & SEP & 0 & 1 & 9 & ; & 3 & 4 &3 & ; & EOS
\end{NiceTabular}
}.
\end{center}
We use ``;" to separate each number. As mentioned, we experiment with the full scratchpad and without any scratchpad, as well as every $k$th element in the full scratchpad, where $k=\{2,3,4,5,6,9,12\}$. The models are trained with cross entropy loss on tokens after SEP (excluding itself). The input length in figure~\ref{fig:median} refers to number of tokens before and including SEP, and CoT length refers to number of token after SEP and before the final answer (last 3 digits). While models are trained to predict both the scratchpad and the answer, we do not consider scratchpad during testing. In other words, to evaluate the models, we give them tokens up to SEP, and let them generate with greedy search, and take the last generated number before EOS as the prediction. The prediction is considered correct only when it matches all digits in the answer. We run each experiment (each $N$ and each kind of scratchpad) with 3 random seeds, and report the average accuracy.

Regarding hyperparameters, we use the same model architecture for all experiments in this section, which has 3 layers, 4 attention heads in each layer, and model dimensionality of 256. We use batch size of 64, train models for 50k steps, use learning rate starting from $3\times10^{-4}$ and decreasing linearly. We use AdamW optimizer, with $\beta_1=0.9, \beta_2=0.999$ and weight decay of 0.01. All dropout rates are set to zero. Note that we observe that low accuracy is always accompanied with big training loss.

\subsection{Graph Reachability}\label{app:theory-dag-reachability}

\subsubsection{Proof of Theorem~\ref{thm:reachability}}

\begin{theorem}[Restated from Theorem~\ref{thm:reachability}]
\label{thm:app-reachability}
There is a family $\mathfrak{G}$ of DAGs inside which reachability is solvable in $TC^0$, but cannot be represented by a transformer at sublinear average sensitivity. A UHAT CoT needs length $\Omega\left(|E| \log |V| \right)$. 
This bound is attained.
\end{theorem}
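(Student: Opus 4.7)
The plan is to prove the lower bound by reducing \textsc{Parity} to reachability in a concrete family $\mathfrak{G}$, and to prove the upper bound by implementing breadth-first search inside a UHAT-expressible CoT.

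For the lower bound I would construct the ``two-track ladder'' DAG associated with a bit string $x \in \{0,1\}^N$: vertices $v_i^0, v_i^1$ for $i=0,1,\dots,N$, and for each $i$ two edges at level $i$ that are \emph{straight} ($v_{i-1}^b \to v_i^b$) if $x_i=0$ and \emph{crossed} ($v_{i-1}^b \to v_i^{1-b}$) if $x_i=1$. Querying reachability of $v_N^0$ from $v_0^0$ returns the parity of $x$, so this family $\mathfrak{G}$ has $|V|=2(N+1)$, $|E|=2N$, and lies inside the class of graphs whose reachability is in $TC^0$ (since \textsc{Parity} is). Because the induced boolean function on $N$ bits is parity, the average sensitivity on $\mathfrak{G}$ is $N$, so the sensitivity-based barrier from Section~\ref{sec:sensitivity} applies. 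To lift this to a CoT bound, I would invoke Theorem~\ref{thm:uhat-cot-bound} (equivalently Theorem~\ref{thm:parity-bound}) on the composed function ``encode bits as a ladder, then query reachability'': any $o(|x|)$ UHAT CoT would leave parity undetermined after the restriction, which is impossible. Since a single parity bit requires identifying one of two vertices at the next level, each CoT step can be made to refer to a vertex address of $\Theta(\log V)$ symbols from the finite alphabet used to encode the edge list; the $N$ mandatory ``parity update'' steps then translate into $\Omega(N \log V) = \Omega(|E|\log|V|)$ CoT tokens.

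For the upper bound I would exhibit a CoT that simulates BFS on the input graph. Each step appends a token triple $(u, v, d)$ encoding a newly-discovered edge $(u \to v)$ at BFS depth $d$. A UHAT layer uses one head to locate the next unprocessed edge whose source vertex already appears as a discovered target (via the standard maximum-over-earlier-positions attention pattern), and a second head to check whether $v$ has already been recorded, so that duplicates can be suppressed by the MLP. Each edge contributes $O(\log V)$ symbols (the two vertex labels, written in the finite alphabet) and is visited $O(1)$ times, giving total length $O(|E|\log V)$, matching the lower bound. Once the BFS from $i$ is complete, a final attention head checks whether the target vertex $j$ appears as a discovered vertex and emits the 1-bit answer.

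The main obstacle I expect is the UHAT BFS construction: one must simulate a FIFO queue and a visited-set using only hard attention and a bounded number of heads/layers, while keeping the per-edge overhead at $O(\log V)$ rather than, e.g., $O(\log^2 V)$ or $O(V)$. The key trick I would use is to encode each CoT step so that the ``priority'' of the next edge to process (the source's BFS depth, together with a tiebreaker on the edge's position in the input list) is linearly readable from the residual stream; then a single query/key product selects the correct next edge, and writing out its target in $\log V$ symbols can be done by $\log V$ successive attention lookups into the input edge list. Closing the gap between this construction and the lower bound then yields the tight $\Theta(|E|\log|V|)$ scaling.
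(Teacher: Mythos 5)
Your ladder-graph reduction from \textsc{Parity} is exactly the paper's construction, and your BFS CoT is the same upper-bound idea; so the overall shape of the proof matches. However, there is a concrete gap in how you derive the $\log|V|$ factor in the lower bound. You argue that ``each CoT step can be made to refer to a vertex address of $\Theta(\log V)$ symbols from the finite alphabet used to encode the edge list,'' so that $N$ parity updates cost $\Omega(N\log V)$ tokens. That does not give a lower bound: Definition~\ref{def:cot} allows the CoT alphabet $\Xi$ to be arbitrarily large (even infinite and growing with the input length), and the paper's own construction exploits this by coding each vertex as a \emph{single} atomic CoT token. So nothing forces a CoT step to be $\log V$ symbols long, and your accounting would only yield $\Omega(|E|)$. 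The $\log|V|$ factor in the paper comes entirely from the \emph{input} side: each edge is encoded over a finite input alphabet as a pair of $\lceil\log|V|\rceil$-digit numbers, so the input length is $\Theta(|E|\log|V|)$, and Theorem~\ref{thm:uhat-cot-bound} yields a lower bound linear in that input length (granted the restriction argument goes through on the constrained domain $\mathfrak{G}$). You should replace your per-step token counting with this input-length argument.

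On the upper bound, your BFS sketch is reasonable and achieves the right length, but it glosses over the detail that the paper handles explicitly: matching vertices across CoT steps is awkward if each vertex stays spelled out as a $\log|V|$-symbol block, since a single hard-attention head cannot directly compare multi-token identifiers. The paper sidesteps this by first spending $\mathcal{O}(|E|\log|V|)$ CoT steps translating every edge endpoint from its binary spelling into a single atomic token, after which the BFS queue (tracked via an explicit $\mathbb{N}$-valued pointer carried in each CoT token, since tokens cannot be deleted once emitted) runs at $\mathcal{O}(1)$ symbols per queue operation. Your note that ``one must simulate a FIFO queue and a visited-set using only hard attention'' correctly identifies the obstacle; the pointer-plus-atomization trick is the paper's way of resolving it, and would also resolve it for your version.
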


\begin{proof}
For the lower bound, the proof proceeds by coding $\textsc{Parity}_n$ into DAGs with $2n$ vertices.
For $i=1,\dots, N+1$, we introduce two vertices $v_{i,odd}$ and $v_{i,even}$.
Whenever $x_i=1$, we add edges $v_{i,odd}\rightarrow v_{i+1,even}$ and $v_{i,odd}\rightarrow v_{i+1,even}$.
When $x_i=0$, the edges instead connect $v_{i,odd}\rightarrow v_{i+1,odd}$ and $v_{i,even}\rightarrow v_{i+1,even}$.
Then, $x$ has even parity if and only if there is a path from $v_{1,even}$ to $v_{N+1, even}$.
We define $\mathfrak{G}$ as the set of these graphs.
If $G$ is guaranteed to be in $\mathfrak{G}$, then a $TC^0$ circuit is sufficient for deciding membership.
As the graphs in $\mathfrak{G}$ code \textsc{Parity}, reachability cannot be represented by transformers at low sensitivity.
Also, fixing a small constant fraction of edges, while staying within $\mathfrak{G}$, cannot fix the reachability. %

To show that the $\Omega\left(|E| \log |V|\right)$ bound is attainable, we note that a CoT of length $\mathcal{O}(|E| \log |V|)$ can encode breadth-first search, a generalization of the DFS/BFS CoT for the cycle task in \citet{abbe2024how}.
The input consists of the edges and the query, such as:

\begin{center}
\texttt{
\setlength{\tabcolsep}{3pt}
\small
\begin{NiceTabular}{*{25}{c}}[hvlines]
BOS & 06 & 08 & ; & 04 & 01 & ; & 04 & 05 & ; & 01 & 03 & ; & 08 & 00 & ; & 01 & 05 & ; & ... & QUERY1 & 03 & QUERY2 & 04 & SEP
\end{NiceTabular}
}
\end{center}

Whereas the input codes each vertex as a $\lceil \log |V| \rceil$-length number (keeping the input alphabet finite), each vertex is coded as an atomic token inside the CoT.
At the beginning of the CoT, each edge is translated from binary representations to this atomic representation. This takes $\mathcal{O}(|E| \log |V|)$ steps.\footnote{Formally, $a = \sum_{i=0}^{B} \lambda_i 2^i$, define $\widehat{a}_j := \sum_{i=0}^j \lambda_i 2^i$. The CoT then translates the binary representation of each vertex index  $a \in [1,|V|]$ into the atomic representation by enumerating $0 \widehat{a_1}_0 \dots \widehat{a_1}_{\lceil \log |V| \rceil}$}

We then implement a standard first-in-first-out queue.
As nothing can be deleted from the CoT sequence once it has been generated, we maintain a pointer $\in \mathbb{N}$ that indicates the current head of the queue.
The CoT then starts with the first query vertex.
We copy into the CoT all the edges starting with the end of the edge. We then move to the second edge in the CoT and write down all the edges starting with the end of that edge, and so on. If at any point we encounter the target vertex, we exit. Throughout, in every CoT step, we additionally maintain the pointer $\in \mathbb{N}$ indicating the current head of the queue.
\end{proof}

\subsubsection{Experiment}
\label{app:dag-experiments}

We show that the CoT described in Theorem \ref{thm:app-reachability} is sufficient for Transformers to solve the problem of DAG reachability in the general case. For that, we train Transformers to predict the reachability of two vertices in a random DAG, both with and without a CoT.

\paragraph{Data generation.} Each random DAG is generated by sampling a random lower triangular adjacency matrix and instantiating a DAG from it. We then compute the distances between all pairs of vertices in the graph. When sampling examples for training or evaluation, we select, with equal probability, either an unconnected pair labeled as 0 or a connected pair labeled as 1. For connected pairs, we sample the pairs such that the distribution of possible distances is uniform.

For example, in a chain graph $A \to B \to C$, the unconnected pairs $(C, B)$, $(C, A)$, and $(B, A)$ are sampled with a probability of 1/6 each. The connected pairs $(A, B)$ and $(B, C)$ are sampled with a probability of 1/8 each (distance = 1), while the pair $(A, C)$ is sampled with a probability of 1/4 (distance = 2). This way, we avoid the bias toward shorter distances.

For each generated DAG, we compute the Weisfeiler-Lehman hash and ensure that the hashes of the training and test DAGs do not overlap. When provided to the model, the input data is encoded as described in Theorem \ref{thm:app-reachability}.

\paragraph{Model and training.} We use a decoder-only Transformer based on the GPT-2 architecture \cite{Radford2019LanguageMA}. The model has 4 layers, 4 attention heads per layer, and 256 hidden dimensions. The model is trained for 50k steps using AdamW with a batch size of 64 samples.

We train the models in two regimes: with and without a CoT. In the CoT regime, we generate a BFS-based CoT as described in Theorem \ref{thm:app-reachability} and append it to the input data.
To simplify and reduce computational cost, we represented all vertex indices as decimal numbers, eschewing conversion to atomic symbols.
During training, we optimize the next-token prediction loss on the CoT and answer parts of the sequence, while ignoring the predictions for the input tokens. During evaluation, the model autoregressively generates the CoT and the final prediction. The no-CoT regime is similar, but the CoT is not appended to the input, requiring the model to generate the answer directly.

Evaluation accuracy is checked every 9k steps. If the evaluation accuracy exceeds 99.5\%, training is terminated early.

\paragraph{Results.} The model's accuracy for various input sizes, corresponding to DAGs with 5 to 35 vertices, is shown in Figure \ref{fig:dag}. All values are averaged over three runs.

For all graph sizes except one, Transformers with CoT achieve near-perfect accuracy, while Transformers without CoT perform at chance level. The exception is for graphs with the smallest size (5 vertices), where both regimes achieve approximately 90\% accuracy. This discrepancy may be due to the limited number of distinct DAGs of this size, leading to insufficient training signal for the model to learn the algorithm of CoT construction.

\section{Experiments with Pretrained LLMs}\label{app:pretrained}

\paragraph{Approach.} To test our predictions on the necessary CoT length, we run experiments with state-of-the-art LLMs trained to generate CoT reasoning before responding to a user’s request: DeepSeek-R1 \cite{r1} and o1-mini \cite{jaech2024openai}. If, contrary to our predictions, a sub-linear algorithm for any of the discussed problems exists, these models might discover it and solve the task with a CoT of sub-linear size. Verifying this serves as a basic sanity check for our theory.

We tested the models on three tasks: parity, multiplication, and median. For the parity task, we generated random bitstrings of various lengths, ranging from 10 to 70, and asked the models to calculate their parity. We then selected the CoTs that led to correct answers and calculated their average size for each input length. The prompt provided to both models was: \texttt{You will receive a string. You have to manually calculate its parity. Finish your response with 1 if the parity is odd, and 0 if the parity is even.}

The approach for the multiplication task was similar. We generated random pairs of numbers with lengths ranging from 3 to 9 digits, prompted the models to multiply them, and computed the sizes of the correct CoTs. The prompt provided was: \texttt{You will receive two numbers. You have to multiply them manually. Finish your response with the precise result of the multiplication.}

For the median task, we generated sequences of odd length and prompted the models to find the median. The numbers were generated uniformly from 1 to $10^5$. The prompt provided was: \texttt{You will receive a sequence of numbers. You have to manually compute its median. Finish your response with the value of the median of this sequence.}

\paragraph{Results.} The results of the experiment are shown in Figure \ref{fig:llm-parity-multiplication}. For all three tasks, the size of the CoTs grew at least linearly, supporting the theoretical prediction that no CoT exists to solve parity, multiplication, or median in a sub-linear number of steps.

A qualitative inspection of the reasoning traces of DeepSeek-R1 revealed that, when computing the parity of a bit sequence, it copied every bit, counted the number of ones, and then determined the evenness of that count. For multiplication, it used a naive quadratic algorithm. However, the performance of DeepSeek-R1 dropped significantly for numbers with 10 digits or more, suggesting that its multiplication algorithm is not truly length-generalizable.

\section{Further Discussion on Related Work}\label{app:relation-literature}

\subsection{CoT Constructions in the Literature}\label{app:cot-literature}

Besides CoTs emulating Turing machines (Appendix~\ref{app:universality}), various other more specialized CoT constructions have been considered in the literature. \citet{feng2023towards} provide a general construction of transformers for a very broad class of dynamic programming (DP) algorithms.
This construction essentially only uses hard-attention operations, and can be expressed in UHAT as long as the aggregation function used in the DP algiorithm can be expressed. This includes, for instance, evaluating Boolean formulas.
\citet{cabannes2024iteration} analyzed the algorithm learned by a transformer trained on scratchpads for various algorithmic problems, identifying an ``iteration head'' attention pattern. This head effectively computes a unique hard attention pattern iterating through the input, combining material from the input with the last CoT step.
Relatedly, the Inductive Scratchpad \citet{abbe2024how} is a construction where each step is a function specifically of the previous CoT step and the input.

\subsection{Relation to Globality Degree}\label{app:globality}

Here, we discuss the relation of our results to the work by \citet{abbe2024how} on the Globality Degree.
We recall their definition, rephrased for self-containedness:
\begin{definition}[rephrased from Definition 2 of \citet{abbe2024how}]
For an alphabet $\mathcal{A}$ of cardinality $\mathcal{O}(poly(n))$, and  a distribution $D$ on $\mathcal{A}^n \times \mathcal{A}$, $D$ has \emph{constant globality} if and only if there is $k \in \mathbb{N}$ such that, for each $n$, there exists $S \subseteq [n]$, $|S|=k$, such that
\begin{equation}
    \operatorname{I}\left[(X[S], \hat{P}_X); Y\right] = n^{-O(1)}
\end{equation}
where $(X,Y) \sim D$, and $\hat{P}_X$ is the histogram of tokens in $X$.
\end{definition}
Here, we think of $(X,Y) \in \mathcal{A}^n \times \mathcal{A}$ as a pair of input and label; hence, $Y$ will generally be a function of $X$.

If one assumes $|\mathcal{A}| = O(1)$ and puts aside the knowledge of the token histogram, then constant globality is equivalent to the presence of nontrivial correlations with the circuit complexity class $NC^0$ (Lemma 6 in \citet{abbe2024how}), a highly restricted class contained within $AC^0$ and easily simulated by UHAT.

The key conjecture of \citet{abbe2024how} states:
\begin{conjecture}[Conjecture 1 of \citet{abbe2024how}]\label{conj:abbe}
    A distribution $P_{X,Y}$ with well-behaved $P_X$\footnote{This is defined in Definition 6 in \cite{abbe2024how}. A simple example is the uniform distribution over $\{0,1\}^N$.} is efficiently weakly learnable by a T-regular Transformer\footnote{Informally, this is a bidirectional transformer under a standard initialization (Definition 4 in \cite{abbe2024how}).} if and only if $P_{X,Y}$ has constant globality.
\end{conjecture}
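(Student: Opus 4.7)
The plan is to split the biconditional into its two directions and attack them separately, with sharply different tools. I expect the ``efficient weak learnability $\Rightarrow$ constant globality'' direction to be the principal obstacle; this is precisely the direction that \citet{abbe2024how} themselves leave open in generality, and it is a genuine learning lower bound against a softmax transformer trained by SGD.

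For the easy direction (constant globality implies efficient weak learnability), the starting point is the witness subset $S \subseteq [n]$ of bounded size $k$ from the definition, for which $(X[S], \hat{P}_X)$ has mutual information $n^{-\mathcal{O}(1)}$ with $Y$. A routine argument extracts from this some Boolean function of $(X[S], \hat{P}_X)$ whose correlation with $Y$ is $n^{-\mathcal{O}(1)}$. I would then show that a two-layer bidirectional transformer of polynomial width represents such a function: a bank of attention heads spread over all $\binom{n}{k} = \mathrm{poly}(n)$ size-$k$ subsets extracts $X[S]$, one averaging head computes $\hat{P}_X$, and the MLP realises the predictor. Learnability then follows from a standard correlational/SQ argument: at initialisation the expected gradient in the ``correct'' head direction is of order $n^{-\mathcal{O}(1)}$, while other gradients are uniformly bounded, so polynomially many SGD steps suffice. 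The technical work is in handling the softmax nonlinearity at initialisation, which is standard.

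For the hard direction, assume for contradiction super-constant globality: for every fixed $k$, every $S$ of size $k$, and every constant $c$, $\operatorname{I}[(X[S], \hat{P}_X); Y] = o(n^{-c})$. The plan is to pass to a low-degree/low-sensitivity reformulation, showing that this implies $Y$ has negligible correlation with every small-junta feature, and more strongly with every bounded-degree polynomial in $X$. I would then try to lift this to a vanishing-gradient statement against a $T$-regular transformer: (i) at initialisation each parameter's gradient is essentially the expectation of a simple feature of $X$ against $Y$, and is therefore smaller than any inverse polynomial; (ii) propagate this along the full SGD trajectory via stability and concentration arguments, to conclude that no polynomial-time run of SGD accumulates non-trivial correlation with $Y$.

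The main obstacle is step (ii): moving from a single-step statement to a trajectory statement for a multi-head softmax transformer is notoriously difficult, and existing work in this vein (including \citet{abbe2023generalization} for MLPs and the globality framework of \citet{abbe2024how} itself) succeeds only in restricted settings — simplified architectures, specific loss functions, or layer-wise analyses. A more realistic unconditional intermediate goal, which the methods of the present paper \emph{do} seem to give, is the \emph{representational} analogue: ``if $P_{X,Y}$ is computable by a UHAT transformer without CoT, then $P_{X,Y}$ has constant globality''. This should follow from Lemma~\ref{lemma:restriction-strengthened}, since under a suitable random restriction every output coordinate becomes a function of only $\mathcal{O}(1)$ input positions, which after averaging over the restriction yields a constant-globality witness. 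Combined with Theorem~\ref{thm:uhat-cot-bound}, one would further obtain ``UHAT with sublinear CoT $\Rightarrow$ constant globality'', a strictly weaker but unconditional cousin of the full conjecture.
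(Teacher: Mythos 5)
This statement is \emph{Conjecture~1 of \citet{abbe2024how}}, which the paper merely restates as background; it does not prove it and explicitly notes (Appendix~\ref{app:globality}) that the conjecture is ``shown formally for a cycle classification task'' only, with the general case open. So there is no proof in the paper to compare against, and you are right to treat this as a genuinely open problem rather than attempt a full proof.

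Your sketch of the forward direction (constant globality $\Rightarrow$ efficient weak learnability) is plausible and roughly matches the intuition of \citet{abbe2024how}, but be careful that the globality definition conditions on the full token histogram $\hat{P}_X$: your ``bank of attention heads over all $\binom{n}{k}$ size-$k$ subsets'' construction must also route histogram information, which softmax attention with uniform-ish weights can do but which your writeup elides. More importantly, the histogram conditioning is exactly what makes \textsc{Parity} low-globality under this definition even though it is highly sensitive and UHAT-hard — the paper flags this as a \emph{diverging} prediction. Any claimed implication of the form ``UHAT without CoT $\Rightarrow$ constant globality'' is therefore not falsified by \textsc{Parity} (it lands on the consistent side), but its converse certainly fails, and you should be explicit that you are only claiming one direction. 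For the reverse direction your honesty about step (ii) — propagating a single-gradient statement along the SGD trajectory for a multi-head softmax transformer — is well placed; this is precisely where the unconditional versions of such learnability lower bounds stall in the literature, and no part of the present paper supplies a tool for it. Your proposed unconditional weakening via Lemma~\ref{lemma:restriction-strengthened} and Theorem~\ref{thm:uhat-cot-bound} is a reasonable research direction aligned with the paper's own remarks, but it is a different statement, not a proof of the conjecture, and you should not present it as closing the gap.
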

For CoTs, the implied prediction is that (i) tasks with nonconstant globality require a CoT for efficient learning, (ii) a CoT is learnable if and only if each of its intermediate steps has constant globality.
A version of Conjecture~\ref{conj:abbe} is shown formally for a cycle classification task, though the conjecture makes far-reaching predictions beyond that. The practical predictions made by this conjecture are related to those of our theory, but not equivalent; for instance: %
\begin{enumerate}
\item (diverging prediction) \textsc{Parity} has low globality, because the conditioning on the histogram is not constrained. Conjecture~\ref{conj:abbe} thus provides a more optimistic prediction here than our results.
\item (converging prediction) The Cycle task of \citet{abbe2024how} requires a linear-length CoT by Theorem~\ref{thm:uhat-cot-bound}, and can also be shown to not be representable at sub-linear average sensitivity under a reasonable input encoding. Difficulty is predicted under both perspectives.
\end{enumerate}

Overall, there are similarities and differences between the practical predictions made by Conjecture~\ref{conj:abbe} and our unconditional lower bounds.
On a technical level, the theoretical arguments of \citet{abbe2024how}  rely on \emph{bidirectional} attention at least on the original input (if not the scratchpad), different from the causally masked transformers generally used in LMs. Expanding those arguments to causally masked transformers  is an interesting problem for further research.

\begin{figure}
        \centering
        \includegraphics[width=0.5\linewidth]
        {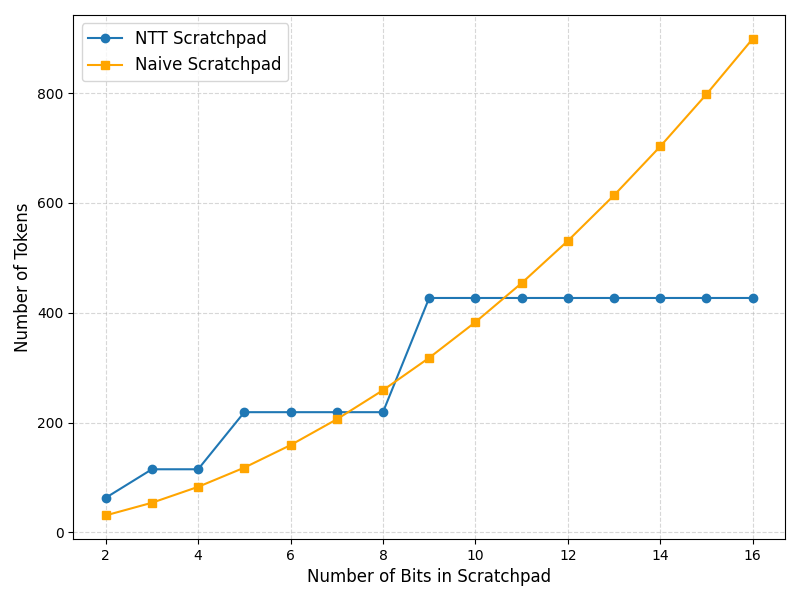}
        \caption{Multiplication: Comparing the number of tokens used in the $\mathcal{O}(N\log N)$ NTT CoT (Appendix~\ref{app:mult-scratchpad}) vs a CoT implementation of the naive $\mathcal{O}(N^2)$ algorithm.
        The token count of the NTT CoT is piecewise constant because the length only depends on the prime $p$ used in the construction; length is asymptotically $\mathcal{O}(N\log N)$.
        }
\end{figure}

\begin{figure}[ht]
    \centering
    \begin{minipage}{0.32\textwidth}
        \centering
        \includegraphics[width=\linewidth]
        {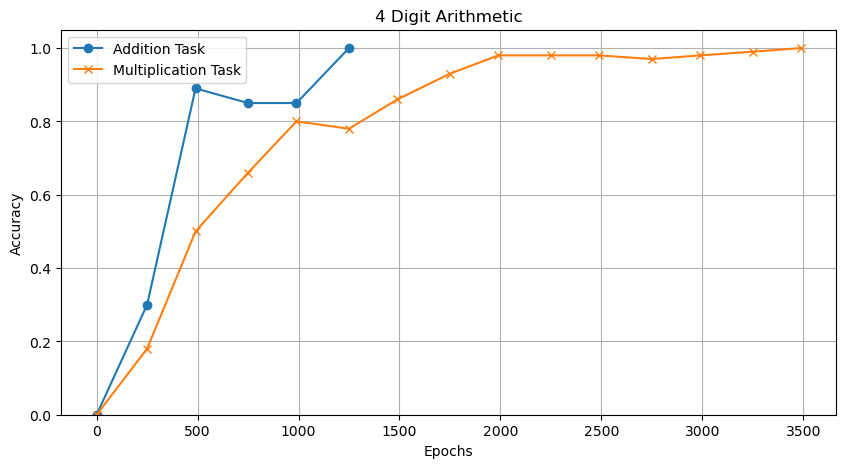}
        \subcaption{4D In-Domain Autoregressive}
    \end{minipage}
    \begin{minipage}{0.32\textwidth}
        \centering
        \includegraphics[width=\linewidth]{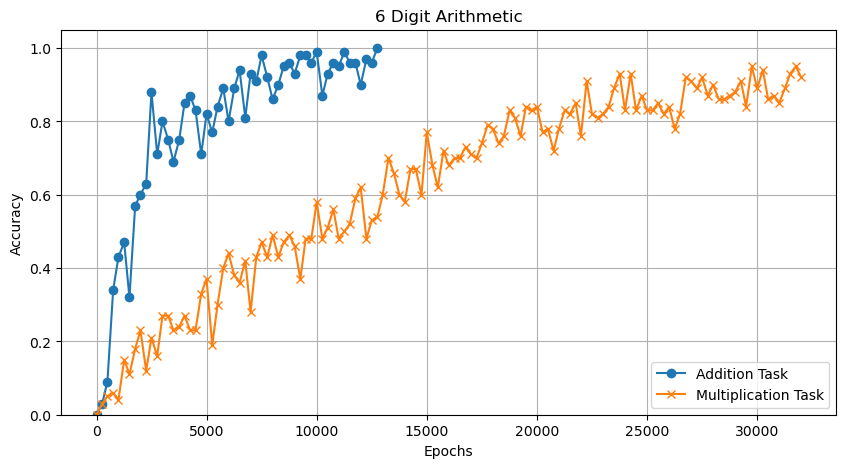}
        \subcaption{6D In-Domain Autoregressive}
    \end{minipage}
    \begin{minipage}{0.32\textwidth}
        \centering
        \includegraphics[width=\linewidth]{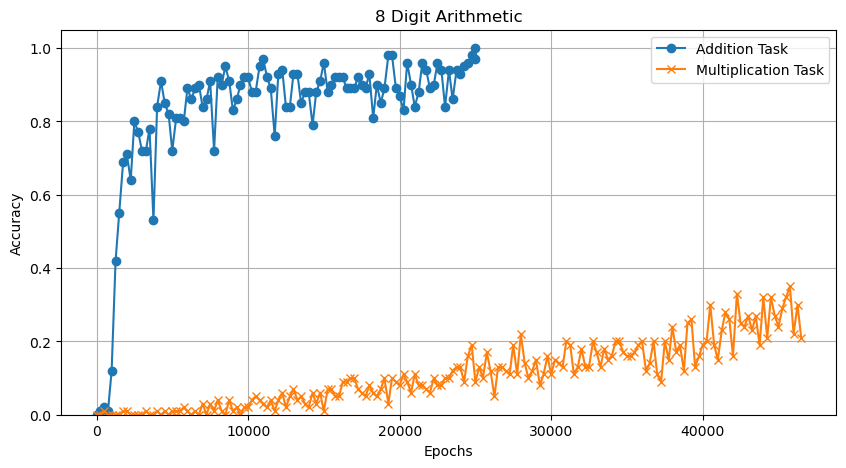}
        \subcaption{8D In-Domain Autoregressive}
    \end{minipage}

    \begin{minipage}{0.32\textwidth}
        \centering
        \includegraphics[width=\linewidth]
        {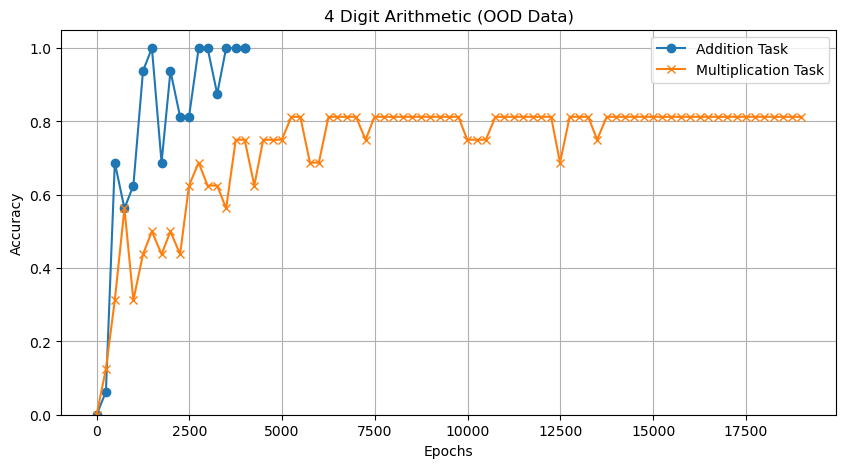}
        \subcaption{4D Held-Out Autoregressive}
    \end{minipage}\hfill
    \begin{minipage}{0.32\textwidth}
        \centering
        \includegraphics[width=\linewidth]{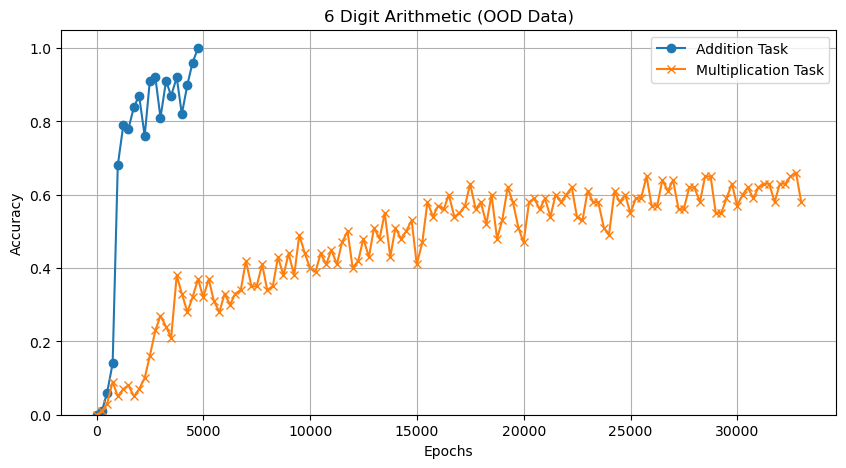}
        \subcaption{6D Held-Out  Autoregressive}
    \end{minipage}\hfill
    \begin{minipage}{0.32\textwidth}
        \centering
        \includegraphics[width=\linewidth]{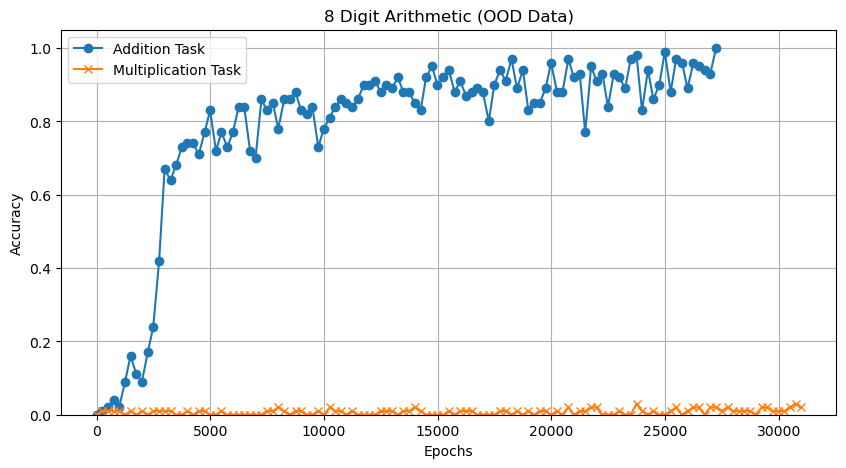}
        \subcaption{8D Held-Out Autoregressive}
    \end{minipage}

    \begin{minipage}{0.32\textwidth}
        \centering
        \includegraphics[width=\linewidth]
        {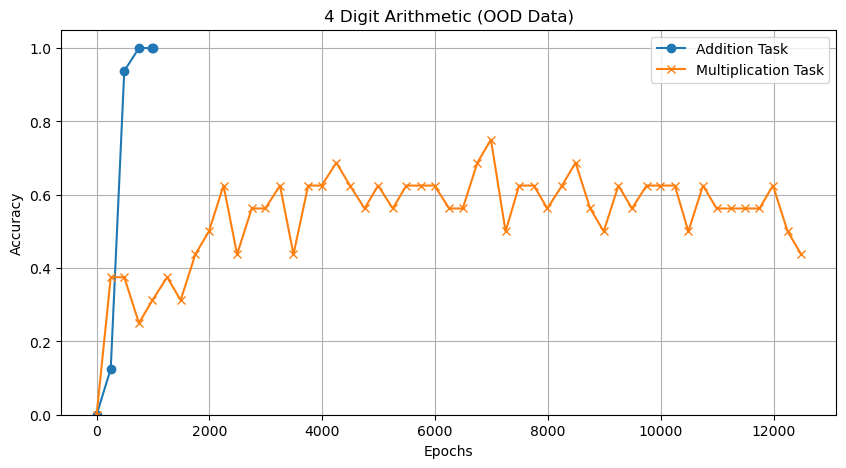}
        \subcaption{4D Held-Out Direct}
    \end{minipage}\hfill
    \begin{minipage}{0.32\textwidth}
        \centering
        \includegraphics[width=\linewidth]{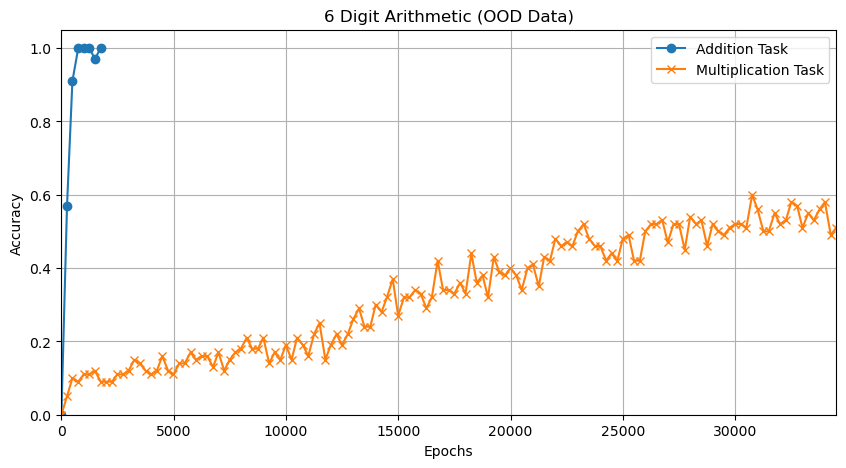}
        \subcaption{6D Held-Out Direct}
    \end{minipage}
    \begin{minipage}{0.32\textwidth}
        \centering
        \includegraphics[width=\linewidth]{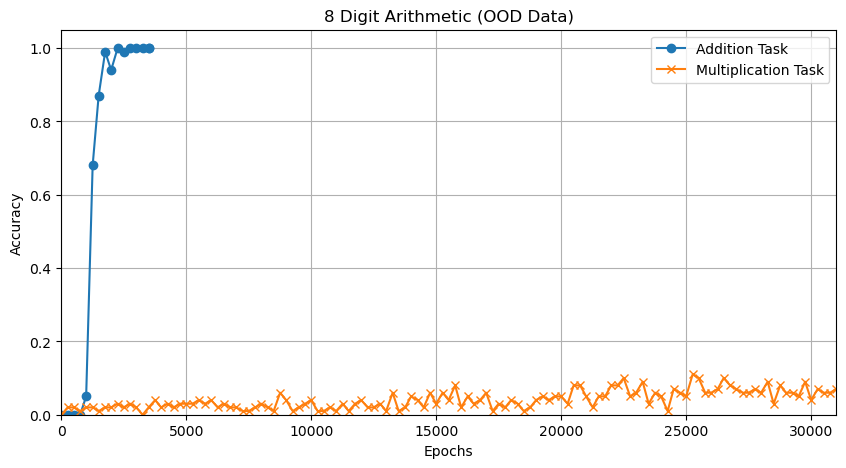}
        \subcaption{8D Held-Out Direct}
    \end{minipage}
    
    \caption{Arithmetic: Results on \textsc{Addition} and \textsc{Multiplication} of binary numbers, no CoT. Top row: In-Domain test data, Middle row: Held-Out test data (autoregressive decoding), bottom row: Held-Out test data (direct parallel decoding). Addition is learned well, even when the digits are directly decoded. In contrast, multiplication does poorly.}
    \label{fig:arithmetic-comparison}
\end{figure}

\end{document}